%% file: main.tex
\documentclass[preprint]{article}
\usepackage{neurips_2026}
\usepackage{amsmath}
\usepackage{amssymb,amsfonts,amsthm}
\usepackage{mathrsfs}
\usepackage{mathtools}
\usepackage{algorithm}
\usepackage{algpseudocode}
\usepackage{xcolor}
\usepackage{dsfont}
\usepackage{thmtools}
\usepackage{thm-restate}
\usepackage{enumitem}
\usepackage{dsfont}
\usepackage{graphicx}
\usepackage{subcaption}
\usepackage{sidecap}
\usepackage{accents}
\usepackage{wrapfig}
\usepackage{booktabs} 
\usepackage{longtable}
\usepackage{siunitx}
\usepackage{comment}
\newtheorem{definition}{Definition}[]
\newtheorem{theorem}{Theorem}[section]

\newtheorem{corollary}{Corollary}[]
\newtheorem{proposition}{Proposition}[]

\newtheorem{problem}{Problem}[]
\newtheorem{fact}{Fact}[]
\newtheorem{lemma}{Lemma}[]
\newtheorem{remark}{Remark}[]

\usepackage{hyperref}
\usepackage{xcolor}
\hypersetup{
  colorlinks   = true, 
  urlcolor     = blue, 
  linkcolor    = blue, 
  citecolor   = blue 
}
\usepackage[capitalize, nameinlink]{cleveref}

\usepackage[utf8]{inputenc} 
\usepackage[T1]{fontenc}    
\usepackage{hyperref}       
\usepackage{url}            
\usepackage{booktabs}       
\usepackage{amsfonts}       
\usepackage{nicefrac}       
\usepackage{microtype}      
\usepackage{xcolor}

\newcommand{\maxmargin}{\mathsf{m}^{rd}}

\newcommand{\signrank}{\mathsf{sr}}
\newcommand{\dsignrank}{\mathsf{dsr}}
\newcommand{\sigloss}{\mathcal{L}_{\text{sigmoid}}}
\newcommand{\infonceloss}{\mathcal{L}_{\text{InfoNCE}}}
\newcommand{\Betadist}{\mathsf{Beta}}
\newcommand{\prob}{\mathbb{P}}
\newcommand{\expect}{\mathbb{E}}

\newcommand{\tV}{\widetilde{V}}
\newcommand{\tU}{\widetilde{U}}
\newcommand{\vectorize}{\mathsf{vec}}
\newcommand{\ksparsen}{S_{n,k}}
\newcommand{\conv}{\mathsf{conv}}

\title{Is Dimensionality a Barrier for Retrieval Models?}
\author{%
  Kiril Bangachev\thanks{Author names appear in alphabetical order by last name.}\thanks{Supported by NAE Grand Challenge Vest Fellowship.} \\
  \texttt{kirilb@mit.edu} \\
  \And 
  Guy Bresler\thanks{Supported by NSF Award 2428619.}\\
  \texttt{guy@mit.edu}\\
  \And
  Jonathan Kogan\\ \texttt{jkogan1@mit.edu} \\
  \And
  Yury Polyanskiy\thanks{Supported by NSF Grant CCF-21-12665 and by generous gifts from Jane Street and Google Research.}\\
  \texttt{yp@mit.edu}\\ \\
  Department of Electrical Engineering and Computer Science\\
  Massachusetts Institute of Technology\\
  Cambridge, MA, 02139 \\}

\date{May 2025}

\begin{document}

\maketitle

\begin{abstract}
\textit{Why does the low dimensionality of representations, typically $d\approx 1000$, not prevent modern embedding-based retrieval models from scaling to billions, or even trillions, of data points?}
To answer this question, we study maximal-margin embeddings in the following retrieval model, classically studied in communication complexity \cite{paturi1986probabilistic} and more recently in embedding-based retrieval \cite{weller2025theoretical}.
Let $A\in \{0,1\}^{N\times n}$ be a matrix indicating whether each of $N$ queries is relevant to each of $n$ documents. We are interested in the largest \textit{margin} $m>0,$ denoted by $\mathsf{m}^{\mathsf{rd}}(d, A),$ for which there exist unit norm embeddings of the queries and documents $\{U_j\}_{j = 1}^N, \{V_i\}_{i = 1}^n$ with the following property. $\langle U_j, V_i\rangle \ge m$ whenever $A_{ji} = 1$ and $\langle U_j, V_i\rangle \le -m$ otherwise. A large margin is a key proxy for representation quality: it controls both robustness to perturbations and compositional generalization across queries. Our main theorem establishes that the best possible margin without a restriction on the dimension, $\mathsf{m}^{\mathsf{rd}}(+\infty, A),$ can be nearly achieved in dimension $d = O(\mathsf{m}^{\mathsf{rd}}(+\infty, A)^{-2}\log n)$ which improves a theorem of \cite{ben2002limitations}. Together with a matching lower bound in Theorem 1.5, we conclude that when $A\in \{0,1\}^{\binom{n}{k}\times n}$ is the matrix containing all possible $k$-sparse rows once, dimension $d = O(k\log (n/k))$ is necessary and sufficient for the maximal possible margin $\mathsf{m}^{\mathsf{rd}}(+\infty, A) = \Theta(k^{-1/2})$ in this setting. This fully resolves the setup of \cite{weller2025theoretical}. We also give several constructions for large margins when $d = o(k\log (n/k)).$ Our proofs are based on connections between this problem and the literature on compressed sensing and the restricted isometry property. Finally, we empirically test the InfoNCE and sigmoid losses for producing large margin embeddings and demonstrate a clear advantage of the sigmoid loss. Code is available at \href{https://github.com/BangachevKiril/TopK}{github/TopK}.
\end{abstract}

\begin{figure}[htb!]
    \centering
    \begin{minipage}[t]{0.47\textwidth}
        \centering
        \includegraphics[width=.9\linewidth]{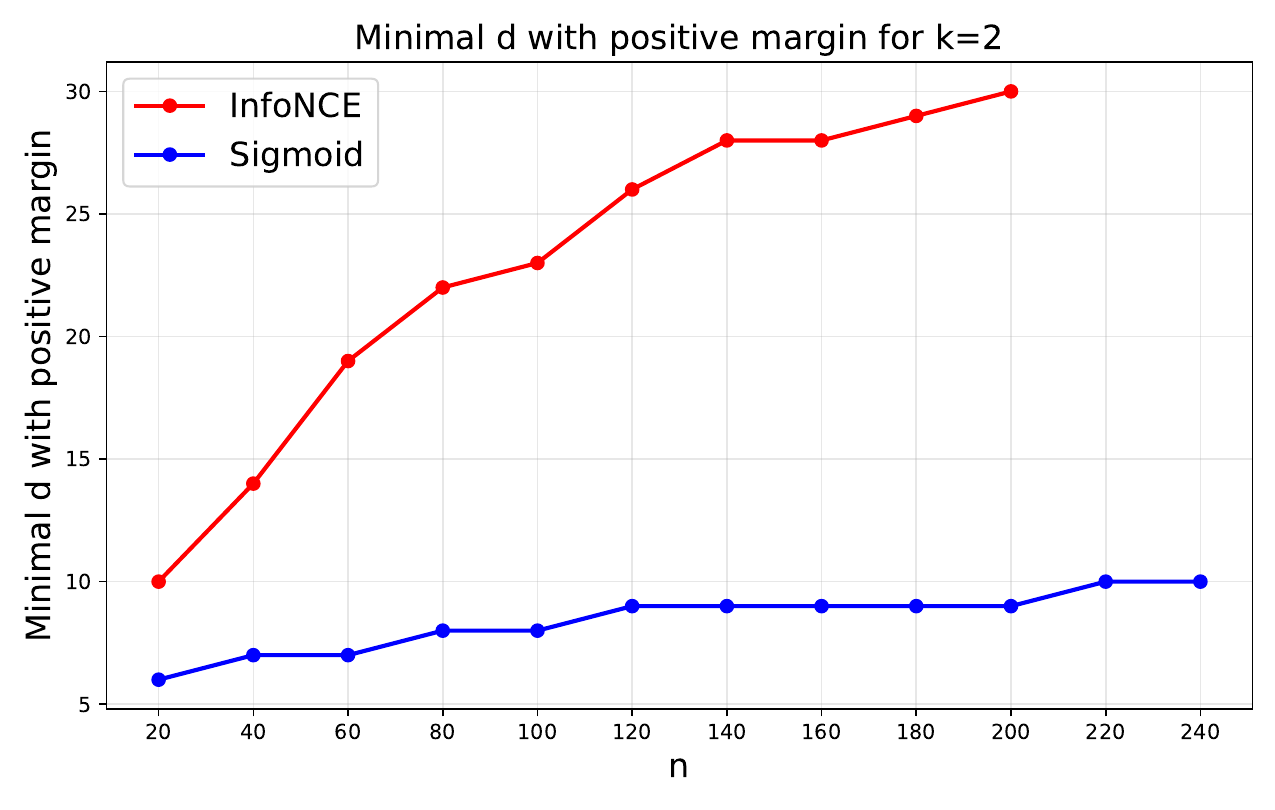}
        \caption{\small{Minimal dimension needed to achieve a non-zero margin after 100000 training steps for the InfoNCE and sigmoid losses. The sigmoid succeeds in much smaller dimensions.}}
        \label{fig:min-nonzero-margin}
    \end{minipage}
    \hfill
    \begin{minipage}[t]{0.51\textwidth}
        \centering
        \includegraphics[width=.9\linewidth]{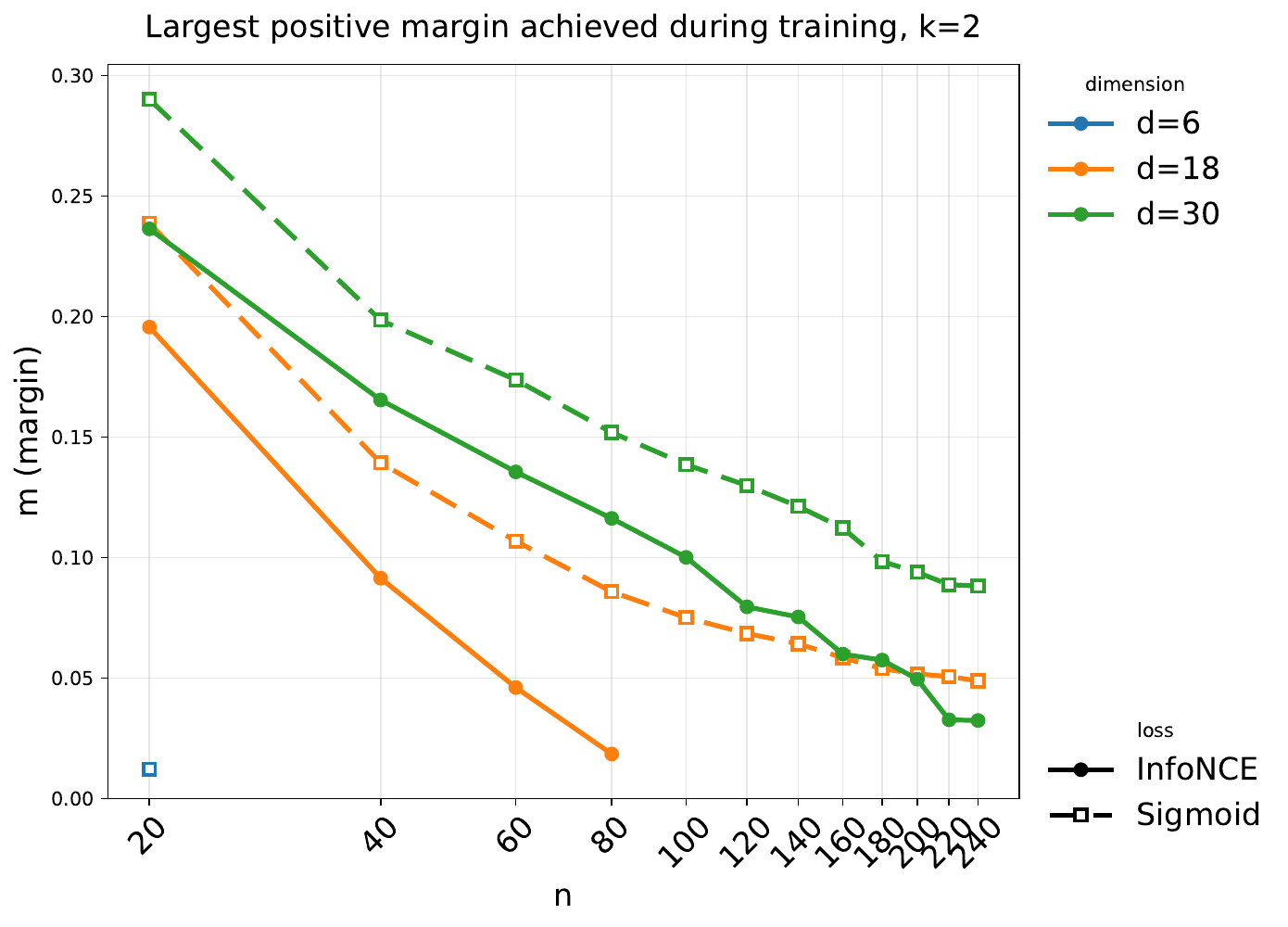}
        \caption{\small{Maximal margin achieved during 100000 training steps for $d \in \{6,18,30\}$, using the InfoNCE and sigmoid losses showing clear advantage of the sigmoid loss.}}
        \label{fig:max-margin-vs-logn}
    \end{minipage}
\end{figure}

\input{Introduction}
\input{JL}

\input{LowerBoundsv1}

\input{Construction}

\input{Limitations}

\section*{Acknowledgements}
KB is thankful to Pravesh Kothari and Sidhanth Mohanty for insightful conversations at different stages of the current work.

\newpage
\bibliography{ref}
\bibliographystyle{alpha}
\appendix

\input{SimpleAppendix}

\input{AppendixMissingConstruction}

\input{MarginAlgebraic}

\input{SpectralAndSnk}

\input{LargeMarginFromSparseCoding}

\input{LowerBoundBeyondnchoosek}

\input{ProofsofMarginImportance}

\input{FreeEmbedding}

\input{LLMs}

\newpage

\end{document}

%% file: Introduction.tex
\section{Introduction}\label{sec:intro}

\subsection{Background and Motivation}\label{subsec:background}
Modern information retrieval systems rely on \emph{dense neural embeddings of data}.
Depending on the modality, a variety of training paradigms and architectures exist for representing data as vectors: for text, popular choices include transformer-based architectures trained with contrastive or self-supervised objectives over query-passage pairs such as SBERT, DPR, E5, GTE, BGE, EmbeddingGemma \cite{reimers2019sentence,KarpukhinEtAl2020,wang2022e5,li2023gte,chen2024bgem3,vera2025embeddinggemma}; for images, popular choices include vision transformers trained with self-distillation or reconstruction objectives such as the DINO family and ViT-MAE \cite{caron21dino, OquabEtAl2023,he2022mae,simeoni2025dinov3}; finally, in multimodal settings contrastive models such as CLIP and SigLIP are common~\cite{radford21clip,zhai23siglip,tschannen2025siglip2}.

Once data is represented compactly in vector form (typically $d\le 4096$ across the wide range of encoders), retrieval is often done via a simple nearest neighbor search around the query embedding \cite{jegou2011product,johnson2019billion,malkov2020hnsw,radford21clip,caron21dino,chen2020simple,dwibedi2021little,bolya2025perception} (and many others). The \emph{compactness of the embeddings} enables low-memory caching of representations and fast algorithmic primitives (such as nearest neighbor search). Both speed and low-cost storage are integral to high-performance  AI. However, compactness of representations is not enough on its own. 
For the design of accurate, reliable, and precise AI systems, it is also necessary that embeddings be of \emph{high-quality}.

That \emph{compactness} and \emph{quality} are at odds is no surprise. Low dimension (rank) limits the geometric configurations available to item representations. What should be surprising instead is that models based on low-dimensional representations still achieve excellent performance on retrieval from huge datasets.
For instance, DPR embeds data in $d = 768$ dimensions and achieves remarkable performance when retrieving from a pool of $\approx 2\times 10^7$ passages \cite{KarpukhinEtAl2020}. This raises the question:

\begin{quote}
\centering
\emph{Q: How can embedding models be so successful despite their low dimensionality?}
\end{quote}

We approach this question theoretically by analyzing the top-$k$ retrieval model recently introduced in \cite{weller2025theoretical}. By studying a key quantity governing the quality of embeddings -- \emph{the margin} -- we show a surprising low-rank saturation result (formally stated in Theorem~\ref{thm:main}): 
\begin{quote}
\centering
\emph{Informal Main Result: The highest possible quality embeddings are already achieved in a dimension growing only logarithmically with the data size.}
\end{quote}
Hence, embedding-based retrieval models do not require a high dimension for optimal performance. This main result opens the door to 
further theoretical results related to precise dimension recommendations for encoders and linking different notions of ``quality'' of representations. To obtain these results, we connect the model of \cite{weller2025theoretical} to the classical literature on compressed sensing and the restricted isometry property \cite{CandesTao2005,foucart2010gelfand,foucart2013mathematical,novak1995optimal}, communication complexity \cite{AlonFranklRodl1985,sherstov2009lower,linial2009learning}, and learning theory \cite{ForsterSchmittSimonSuttorp2003}.   

\subsection{Mathematical Model and Problem Formulation}
\label{sec:model}

\textbf{Basic Retrieval Model.} In the model of \cite{weller2025theoretical}, an incidence matrix $A\in \{0,1\}^{N \times n}$ encodes the relevance relationship between $N$ queries and $n$ objects. A query $j \in \{1,2,\ldots, N\} = [N]$ is relevant to object 
$i \in [n]$ if and only if $A_{ji} = 1.$ This abstraction captures a variety of practical settings, for example when $[n]$ corresponds to a corpus of passages and $[N]$ to a set of queries \cite{KarpukhinEtAl2020}, or  when $[N]$ corresponds to the images in a dataset and $[n]$ to their classes \cite{russakovsky2015imagenet}. 

The results of \cite{weller2025theoretical} focus primarily on the case where $N = \binom{n}{k}$ and $A=\ksparsen\in \{0,1\}^{N \times n}$ contains every possible $k$-sparse row exactly once. In this setting, for every group of $k$ objects, there is a query to which exactly those $k$ objects are relevant. 
Many of our results apply more broadly to 
matrices $A$ for which all rows are distinct and $k$-sparse (so that only some groups of $k$ documents can be retrieved, which is more realistic). Typically, we will think about the quantity $k$ as a large constant or slowly growing with $n$, even though our results apply for any $k$ unless otherwise stated. We note that \cite{bangachev2025globalminimizerssigmoidcontrastive} analyzes the case when $k = 1$,  $\ksparsen = I$.

Corresponding to $A$, models $f_\theta$ and $g_\phi$ produce embeddings, respectively $\{f_\theta(O_i)\}_{i = 1}^n = \{V_i\}_{i = 1}^n$ of the objects $\{O_i\}_{i = 1}^n\subseteq \mathcal{O}$ and $\{g_\phi(Q_j)\}_{j = 1}^N = \{U_j\}_{j = 1}^N$ of the queries $\{Q_j\}_{j = 1}^N\subseteq \mathcal{Q}$. These embeddings are assumed to be unit norm throughout the paper, which is without loss of generality as only cosine similarities between them will be considered. 

The goal of the representations is to enable retrieval of objects relevant to a given query. One property that ensures this is if there exists some $\tau\in \mathbb{R}$ such that:
\begin{equation}
\label{eq:zeromarginretrieval}
   \langle U_j, V_i\rangle >\tau \text{ if }A_{ji}=1\text{ and }\langle U_j, V_i\rangle <\tau \text{ if }A_{ji} = 0. 
\end{equation}
Indeed, this would imply that the documents relevant to query $j$ are those whose embeddings have cosine similarity with $U_j$ exceeding $\tau$. These documents can be retrieved by the standard procedure of finding the nearest neighbors of $U_j$ and filtering out those with cosine similarity less than $\tau.$

\textbf{{Retrieval Margin.}} While property \eqref{eq:zeromarginretrieval} guarantees correct retrieval, it is too brittle to be useful in practice. A minor modification $O'_i$ of object $O_i,$ for example flipping few pixels, may lead to an incorrect retrieval by some query $Q_j,$ i.e. 
$\langle f_\theta(O_i), g_\phi(Q_j)\rangle <
\tau <
\langle f_\theta(O'_i), g_\phi(Q_j)\rangle.$ To capture the fact that strong models should succeed on retrieval even when data is slightly perturbed, we introduce the \emph{margin} of a model. Borrowing the terminology of \cite{bangachev2025globalminimizerssigmoidcontrastive}, we make the following definition.

\begin{definition}
\label{def:maindefinition}
Given an incidence matrix $A\in \{0,1\}^{N\times n},$ we say that the vectors $\{V_i\}_{i = 1}^n , \{U_j\}_{j = 1}^N$ are a margin-$m$, relative-bias-$\tau$ embedding of $A$ for $m\ge 0$ if
\begin{equation}
\label{eq:marginretrieval}
   \langle U_j, V_i\rangle \ge \tau +m \text{ if }A_{ji}=1\text{ and }\langle U_j, V_i\rangle \le \tau-m \text{ if }A_{ji} = 0. 
\end{equation}
\end{definition}

The main question that we will study is bounding the maximal possible margin of the embedding. We note that for the rest of the paper, we will only focus on the case of $\tau = 0$ for simplicity of exposition. Following \cite{weller2025theoretical}, this is without loss of generality up to increasing the dimension by 1 and multiplying the margin by a small absolute constant. See Appendix~\ref{appendix:somesimpleclaims} for completeness.
\begin{problem}
\label{problem:mainproblem}
For a given $A\in \{0,1\}^{N\times n},$ find the optimal restricted-dimension margin 
$$\maxmargin(d, A)
\coloneqq \sup \{ m \; : \; \exists \text{ margin-}m,\text{ relative-bias-}0\text{ embeddings of }A\text{ in dimension }d\}.
$$ 
Whenever there does not exist a margin-$m$ embedding of $A$ for any $m\ge 0,$ we set $\maxmargin (d, A) = -\infty.$ 
\end{problem}
The quantity without the dimension restriction, i.e. $\maxmargin(+\infty, A) \coloneqq \sup_d \maxmargin(d, A)$ is known as the \emph{margin complexity} in the communication complexity literature \cite{sherstov2009lower}. \footnote{Dimension $n$ is enough to obtain maximal margin, but we stick to the $+\infty$ notation to emphasize the absence of dimension restriction. Also, the $\sup$ can be replaced by $\max$ due to compactness.} The minimal $d$ for which an embedding exists is the \emph{sign-rank}, 
$\signrank(A)\coloneqq \min \{ d\; : \maxmargin(d, A)> 0\}$ \cite{AlonFranklRodl1985}.

\textbf{Significance of Large-Margin Embeddings.} Large-margin learning has been of interest for many years to the machine learning community \cite{novikoff1962convergence,cortes1995support,bartlett1998sample,freund1999large,cottersvm,zhu031normsvm,crammer02multiclasssvmlargemargin,smoladvancesinlargemargin,balcan2006kernels,arriaga2006algorithmic,bartlett2017spectrally}. Concrete works on representation learning which highlight the importance of large margins include 
\cite{balcan2008theory,
hadsell2006drlim,weinberger2009distance,schroff2015facenet,deng2019arcface,sokolic2017robust,elsayed2018large,robinson2021contrastive,
    bangachev2025globalminimizerssigmoidcontrastive}. We outline several reasons why it is concretely important for retrieval. The statements and proofs are simple and delayed to Appendix~\ref{sec:margin-importance}.
\begin{itemize}
    \item \emph{Large Margin and Robustness:} We already stated why small-margin embeddings are brittle. 
    We formalize this idea for Lipschitz encoders in Proposition~\ref{prop:marginrobustness}.
    \item \emph{Large Margin and Quantization:} In particular, as large margin embeddings are more robust, they allow for coarser quantization while retaining perfect retrieval. See Proposition \ref{prop:marginquantization}.
    \item \emph{Large Margin and Compositional Generalization:} In addition to generalization to correctly retrieving perturbed objects, larger margins also yield higher compositional generalization. Concretely, in the $\ksparsen$ model, suppose the object embeddings $\{V_i\}_{i = 1}^n$ are such that every subset of $k$ of them can be retrieved with margin $m$ via a query embedding. Then every subset of up to $k + 2mk$ objects can also be retrieved. Hence, a larger margin leads to a larger class of queries that can be correctly retrieved. See Proposition~\ref{prop:margincompositional}.
    \item \emph{Large Margin and Sigmoid Loss:} Following \cite{bangachev2025globalminimizerssigmoidcontrastive}, we show that the global minimizers of sigmoid contrastive loss (with positive pairs determined by the incidence matrix $A$) exactly coincide with margin-$m$, relative-bias-$\tau$ embeddings and the loss has 
    an inductive bias towards large-margin embeddings. See Proposition~\ref{prop:marginsigmoid} and \cite{soudry2018implicit} for related work.
\end{itemize}


\textbf{Some Preliminary Observations.} Note that if $B$ is a submatrix of $A,$ for any $d,$
$\maxmargin(d, B)\ge \maxmargin(d,A)$ (since one can just take the embeddings corresponding to the respective rows and columns). Hence, in particular, $\maxmargin(d,A) \ge \maxmargin(d, \ksparsen)$ for any $A$ with $k$-sparse rows as well. Similarly, $d\longrightarrow \maxmargin(d, A)$ is non-decreasing for any fixed $A.$
\subsection{Prior work}
\label{subsec:prior-work}

\textbf{Sign Rank.} In \cite{paturi1986probabilistic}, the authors  establish that for square matrices $A,$ $\log(\signrank(A))$ equals (up to additive error 1) the two-way communication complexity of computing $A$ in the model of Yao \cite{yaocc79}. In \cite{alon2016sign}, on the other hand, the authors show that a dual version of the sign rank is equivalent (up to a multiplicative factor of $2$) to the VC dimension of the function class corresponding to rows of $A.$ The following results from these two lines of work are relevant. We provide proofs in Appendix~\ref{sec:marginalgebraic} for completeness (as they are stated in a different language in the original papers). 

\begin{theorem}[ \cite{AlonFranklRodl1985,alon2016sign}]
\label{thm:signrankforsparse}
If every row of $A\in \{0,1\}^{N \times n}$ is $k$-sparse, then $\signrank(A)\le 2k+1$ \cite{AlonFranklRodl1985}.
Conversely, when $n \ge 3k+1,$ $\signrank(\ksparsen)\ge 2k+1$ \cite{alon2016sign}. 
\end{theorem}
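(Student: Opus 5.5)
Both directions reduce to statements about real polynomials or linear forms, and I would handle them separately.

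\textbf{Upper bound $\signrank(A)\le 2k+1$.} I will use the moment curve. Choose distinct reals $t_1,\dots,t_n$ and set $V_i=(1,t_i,t_i^2,\dots,t_i^{2k})\in\mathbb{R}^{2k+1}$. For a row $j$ with support $S_j$, $|S_j|\le k$, define the polynomial
\[
p_j(t)=-\prod_{s\in S_j}(t-t_s)^2+\varepsilon_j ,
\]
where $0<\varepsilon_j<\min_{i\notin S_j}\prod_{s\in S_j}(t_i-t_s)^2$. This polynomial has degree at most $2k$. It equals $\varepsilon_j>0$ at $t_s$ for $s\in S_j$, and it is strictly negative at every other $t_i$. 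Let $U_j$ be its coefficient vector, so that $\langle U_j,V_i\rangle=p_j(t_i)$. Normalizing all vectors to unit norm does not change the signs, which gives a margin-$m>0$ embedding in dimension $2k+1$. If $S_j=\emptyset$, the constant $-1$ works.

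\textbf{Lower bound for $\ksparsen$ when $n\ge 3k+1$.} Suppose embeddings exist in dimension $d\le 2k$. I will use the dual VC-dimension viewpoint. The functions $i\mapsto\mathrm{sign}\langle U,V_i\rangle$ are homogeneous halfspaces in $\mathbb{R}^d$. Any $d+1$ points $V_{i_1},\dots,V_{i_{d+1}}$ in $\mathbb{R}^{d}$ are linearly dependent, so $\sum_\ell c_\ell V_{i_\ell}=0$ with some $c\ne 0$. Put $P=\{\ell: c_\ell>0\}$ and $Q=\{\ell:c_\ell<0\}$. No $U$ can be positive on all of $P$ and nonpositive on all of $Q$ (strictly negative, since the margin is positive), because then $0=\langle U,\sum c_\ell V_\ell\rangle>0$ whenever $P\neq\emptyset$. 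If $P=\emptyset$, replace $c$ by $-c$.

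Now I need a query in $\ksparsen$ that realizes the forbidden pattern, namely a $k$-set containing $P$ and disjoint from $Q$. Take $d+1\le 2k+1$ points; this requires $n\ge 2k+1$. Flipping the sign of $c$ if necessary, we may assume $|P|\le |Q|$, hence $|P|\le k$, and $P$ is nonempty after the flip. We must pad $P$ to size exactly $k$ using indices outside $Q$. The number of indices available outside $P\cup Q$ is at least $n-(2k+1)$, and we need $k-|P|\le k-1$ of them. Since $n\ge 3k+1$, we have $n-(2k+1)\ge k$, so this padding succeeds. Coordinates with $c_\ell=0$ cause no problem: they are simply extra indices that may be either in or out of the set.

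So the query whose support is the padded $P$ is positive on $P$ and negative on $Q$, which contradicts the dependence. Hence $d\ge 2k+1$.

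The main obstacle is the bookkeeping around sizes: making sure that $P$ is nonempty and fits within $k$, and that enough free indices remain for padding. The choice of $d+1$ points together with the sign flip handles both, and this is exactly where $n\ge 3k+1$ enters.
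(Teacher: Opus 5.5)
Your proof is correct, and it differs from the paper's mainly in the lower bound.

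\textbf{Upper bound.} You use the same moment-curve embedding $V_i=(1,t_i,\dots,t_i^{2k})$ as the paper. Only the polynomial differs. The paper orders the $t_i$ and places at most $2k$ roots at the midpoints of the gaps where the row's sign pattern changes. Your $\varepsilon_j-\prod_{s\in S_j}(t-t_s)^2$ needs no ordering or sign-change count. The cost is a margin that is less transparent to quantify; the paper's midpoint choice is what yields the explicit bound of Corollary~\ref{cor:marginalgebraic}.

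\textbf{Lower bound.} The paper invokes the characterization from \cite{alon2016sign}: dual sign rank equals the size of the largest antipodally shattered column set, and is at most the sign rank. It then observes that the first $2k+1$ columns of $\ksparsen$ are antipodally shattered. You instead prove the needed fact directly. A linear dependence among $d+1$ vectors in $\mathbb{R}^d$ rules out both the pattern ($+$ on $P$, $-$ on $Q$) and its antipode. You then exhibit one of the two as a row of $\ksparsen$. Your argument is self-contained and slightly finer, since it constrains only the coordinates in $P\cup Q$. The paper's is shorter given the cited result. In both, $n\ge 3k+1$ enters at the same place: padding a set of size at most $k$ while avoiding at most $2k+1$ indices.

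\textbf{A small slip.} After flipping to get $|P|\le|Q|$, $P$ need not be nonempty. If all nonzero $c_\ell$ share a sign, the flip makes $P=\emptyset$. An example is $k=1$, $d=2$, with three unit vectors at $120^\circ$ and $c=(1,1,1)$. This is harmless for two reasons:
\begin{itemize}
\item Strict negativity on $Q\neq\emptyset$ still gives $0=\langle U,\sum_\ell c_\ell V_{i_\ell}\rangle>0$.
\item Padding then needs $k$ free indices rather than $k-1$, and $n-(2k+1)\ge k$ supplies them.
\end{itemize}
This degenerate case is exactly why $n\ge 3k+1$, rather than $n\ge 3k$, is needed. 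With $U_i=V_i$, those three vectors embed $S_{3,1}$ in dimension $2$. So your remark that padding needs at most $k-1$ indices understates the true requirement.
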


In  
Appendix~\ref{sec:marginalgebraic}, we analyze the margin of the $\signrank(A)\le 2k+1$ construction to obtain Corollary~\ref{cor:marginalgebraic}.

\textbf{Margin Complexity.} It turns out that the margin complexity $\maxmargin(+\infty, A)$ equals (up to a multiplicative error of $8$) the discrepancy of $A$ \cite{linial2009learning}. Discrepancy is also used in communication complexity lower bounds \cite{yao83LB,babai86complexity,sherstov2009lower}. Important for us is the following spectral bound on margin complexity, which follows by combining the margin-complexity discrepancy equivalence of \cite{linial2009learning} and the spectral bound of \cite{nisancc95}. We give a more direct and simpler proof in Appendix~\ref{sec:spectral}.
\begin{theorem}[Spectral Bound on $\maxmargin(+\infty, A)$ \cite{forster2002linear,sherstov2009lower,nisancc95}]\label{thm:spectralboundonmargincomplexity}
For any $A\in \{0,1\}^{N\times n},$
\begin{align*}
& \maxmargin(+\infty, A) \le \inf \|J\|_{op}\sqrt{Nn} \text{ over }J\in \mathbb{R}^{N\times n}, \text{ s.t.,}\\
& J_{ji}\ge 0\;  \forall \; A_{ji} =1, 
J_{ji}\le 0\;  \forall \; A_{ji} =0, \text{ and } 
\sum_{j,i}|J_{ji}| = 1.
\end{align*}
\end{theorem}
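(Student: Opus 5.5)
We prove the bound by a weak-duality argument that pairs an arbitrary embedding with an arbitrary feasible $J$. Fix any dimension $d$, unit vectors $\{U_j\}_{j=1}^N,\{V_i\}_{i=1}^n$ forming a margin-$m$, relative-bias-$0$ embedding of $A$ with $m\ge 0$, and any feasible $J$: sign pattern agreeing with $A$ and $\sum_{j,i}|J_{ji}|=1$. (If no margin-$m$ embedding with $m\ge 0$ exists, then $\maxmargin(+\infty,A)=-\infty$ and there is nothing to prove.) Let $G=UV^\top\in\mathbb{R}^{N\times n}$ be the Gram matrix, with $G_{ji}=\langle U_j,V_i\rangle$, where $U\in\mathbb{R}^{N\times d}$ and $V\in\mathbb{R}^{n\times d}$ stack the embeddings as rows. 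We bound the Frobenius inner product $\langle J,G\rangle=\sum_{j,i}J_{ji}G_{ji}$ from below and from above.

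\emph{Lower bound.} The sign constraints match the margin conditions. If $A_{ji}=1$, then $J_{ji}\ge 0$ and $G_{ji}\ge m$, so $J_{ji}G_{ji}\ge m|J_{ji}|$. If $A_{ji}=0$, then $J_{ji}\le 0$ and $G_{ji}\le -m$, so again $J_{ji}G_{ji}\ge m|J_{ji}|$. Summing over all entries gives
\begin{align*}
\langle J,G\rangle \ge m\sum_{j,i}|J_{ji}| = m.
\end{align*}

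\emph{Upper bound.} Write $\langle J,G\rangle=\mathrm{tr}(J^\top UV^\top)=\sum_{k=1}^d (U^{(k)})^\top J V^{(k)}$, where $U^{(k)}\in\mathbb{R}^N$ and $V^{(k)}\in\mathbb{R}^n$ are the $k$-th columns of $U$ and $V$. Each term is at most $\|J\|_{op}\|U^{(k)}\|_2\|V^{(k)}\|_2$. Applying Cauchy--Schwarz over $k$,
\begin{align*}
\langle J,G\rangle \le \|J\|_{op}\,\|U\|_F\,\|V\|_F = \|J\|_{op}\sqrt{N}\sqrt{n},
\end{align*}
where the last step uses that every row is a unit vector, so $\|U\|_F^2=N$ and $\|V\|_F^2=n$. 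Equivalently, $\langle J,G\rangle\le \|J\|_{op}\|G\|_*$, and $\|G\|_*\le\|U\|_F\|V\|_F$.

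Combining the two bounds gives $m\le \|J\|_{op}\sqrt{Nn}$. This holds for every feasible $J$, every achievable margin $m$ and every $d$. Taking the infimum over $J$ and the supremum over $m$ and $d$ proves the claim. The only step that needs care is the upper bound, specifically the trace-duality/Cauchy--Schwarz bound by the product of Frobenius norms; it is nonetheless standard.
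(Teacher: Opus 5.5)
Your proof is correct and is essentially the paper's argument. Both lower-bound $\langle J, UV^\top\rangle$ by $m$ via the sign pattern and upper-bound it by $\|J\|_{op}\sqrt{Nn}$; the paper gets the upper bound by expanding $0\le\|U-\alpha JV\|_F^2$, using $\langle VV^\top,J^\top J\rangle\le\|J\|_{op}^2\, n$, and optimizing over $\alpha$, which is just the quadratic-form proof of the Cauchy--Schwarz step you apply directly.
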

This
upper bound on the margin complexity can be computed efficiently as it minimizes a convex objective over a convex space. In the case of $\ksparsen$ we can choose $J_{ji} = \frac{1}{2Nk}$ whenever $A_{ji} = 1$ and $J_{ji} = -\frac{1}{2N(n-k)}$ otherwise to conclude the $1/(2\sqrt{k})$ upper bound on $\maxmargin(+\infty, \ksparsen).$ A simple construction shows that this is tight. Both upper bound and construction are in Appendix~\ref{sec:spectral}.
\begin{corollary}
\label{cor:sqrtkmargin} For $k = o(n),$
    $\maxmargin(+\infty, \ksparsen)= \frac{1+o_k(1)}{2\sqrt{k}}.$
\end{corollary}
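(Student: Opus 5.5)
Corollary~\ref{cor:sqrtkmargin} is proved by matching an upper bound from Theorem~\ref{thm:spectralboundonmargincomplexity} with an explicit construction.

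\textbf{Upper bound.} Take $A=\ksparsen$, $N=\binom{n}{k}$, and the witness $J_{ji}=\frac{1}{2Nk}$ when $A_{ji}=1$ and $J_{ji}=-\frac{1}{2N(n-k)}$ otherwise.

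Each row then has total positive mass $\frac{1}{2N}$ and total negative mass $\frac{1}{2N}$. Hence $\sum|J_{ji}|=1$, the sign constraints hold, and every row sum is zero.

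To bound $\|J\|_{op}$, compute $J^\top J$. By the symmetry of $\ksparsen$ under permutations of $[n]$, $J^\top J$ has the form $aI+b\mathbf{1}\mathbf{1}^\top$. Since rows of $J$ sum to zero, $J\mathbf{1}=0$, so $\mathbf{1}$ lies in the kernel. The nonzero eigenvalue is therefore $\mathrm{tr}(J^\top J)/(n-1)$. The trace is
\[
\|J\|_F^2=N\Bigl(\tfrac{k}{4N^2k^2}+\tfrac{n-k}{4N^2(n-k)^2}\Bigr)=\tfrac{1}{4N}\Bigl(\tfrac1k+\tfrac1{n-k}\Bigr)=\tfrac{n}{4Nk(n-k)}.
\]
Thus
\[
\|J\|_{op}\sqrt{Nn}=\sqrt{\tfrac{n^2}{4k(n-k)(n-1)}}=\tfrac{1+o(1)}{2\sqrt k}
\]
when $k=o(n)$.

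\textbf{Lower bound.} Let $V_i=e_i$ in $\mathbb{R}^n$, and for a $k$-set $S$ take $U_S\propto \mathbf{1}_S-c\mathbf{1}$.

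Then $\langle U_S,V_i\rangle$ takes one value on $S$ and another off $S$. We choose $c$ so that these two values are $\pm m$, i.e.\ $1-c=c$, giving $c=1/2$. The vector $\mathbf{1}_S-\frac12\mathbf{1}$ has norm $\sqrt n/2$, which yields margin only $1/\sqrt n$. That is too small, so the fixed bias term must be moved into a separate coordinate.

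\emph{Improved construction.} Work in dimension $n+1$. Set
\[
V_i=(\alpha e_i,\beta),\qquad U_S=\Bigl(\tfrac{1}{\sqrt k}\mathbf{1}_S\gamma,\,-\delta\Bigr),
\]
with $\alpha^2+\beta^2=1$ and $\gamma^2+\delta^2=1$. Then $\langle U_S,V_i\rangle=\alpha\gamma/\sqrt k-\beta\delta$ for $i\in S$ and $-\beta\delta$ for $i\notin S$.

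Setting these to $\pm m$ requires $\beta\delta=m$ and $\alpha\gamma/\sqrt k=2m$. Take $\alpha=\gamma=\cos\theta$ and $\beta=\delta=\sin\theta$. The conditions become $\sin^2\theta=m$ and $\cos^2\theta=2m\sqrt k$. Adding gives $m(1+2\sqrt k)=1$, so
\[
m=\frac{1}{1+2\sqrt k}=\frac{1+o_k(1)}{2\sqrt k}.
\]

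The main point requiring care is the spectral computation, specifically justifying the eigenstructure via symmetry and $J\mathbf{1}=0$. The construction itself is direct once the bias is placed in an extra coordinate.
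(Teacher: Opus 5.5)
Your proof is correct and follows essentially the paper's route: a sign-consistent witness $J$ in Theorem~\ref{thm:spectralboundonmargincomplexity} for the upper bound, and the dimension-$(n+1)$ construction with the bias moved to an extra coordinate for the lower bound. The differences are minor. You use the balanced weights $\frac{1}{2Nk}$ and $-\frac{1}{2N(n-k)}$ (the choice quoted in the paper's introduction) and read off $\|J\|_{op}$ from permutation symmetry and $J\mathbf{1}=0$, whereas the appendix tunes the positive/negative mass split $y$ so that $J^\top J$ is essentially a multiple of the identity; this sharpens the upper bound to roughly $1/(1+2\sqrt{k})$ for large $n$, which is exactly the margin your balanced choice $\sin^2\theta=m$ attains (the paper's construction parameters give the slightly smaller $\frac{1}{2\sqrt{k}}-\frac{1}{2k}$).
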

Distributional versions of margin complexity have also been studied \cite{kamath2020approximate}.

\textbf{Embedding-Based Retrieval.} While \cite{weller2025theoretical} and the follow-up work~\cite{WangEtAl2026} mostly focus on sign rank (existence of any positive margin), the following result is relevant. 

\begin{theorem}[\cite{weller2025theoretical}]
\label{thm:spherepackinglbweller}
For any dimension-$d$ margin-$m$ embedding of $\ksparsen,$ it holds that
    $d\ge \frac{\log \binom{n}{k}}{\log(1 + 1/m)}.$ Rewriting, $\maxmargin(d,\ksparsen)\le \big({\binom{n}{k}^{1/d} - 1}\big)^{-1}.$
\end{theorem}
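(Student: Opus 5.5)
The plan is a volume (sphere-packing) argument over the $\binom{n}{k}$ query embeddings. Fix a dimension-$d$, margin-$m$ (relative-bias-$0$) embedding $\{U_j\}_{j=1}^N,\{V_i\}_{i=1}^n$ of $\ksparsen$, with all vectors unit norm in $\mathbb{R}^d$. The key claim is that the query vectors are pairwise far apart on the sphere, so only few of them fit.

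Take two distinct rows $j\neq j'$ of $\ksparsen$. Since their supports differ, there is a column $i$ with, say, $A_{ji}=1$ and $A_{j'i}=0$. The margin condition gives $\langle U_j,V_i\rangle\ge m$ and $\langle U_{j'},V_i\rangle\le -m$. Subtracting and applying Cauchy--Schwarz with $\|V_i\|=1$ yields
\[
2m\le \langle U_j-U_{j'},V_i\rangle\le \|U_j-U_{j'}\|.
\]
Hence the $N=\binom{n}{k}$ unit vectors $U_j$ are pairwise at Euclidean distance at least $2m$.

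Next I would place the open balls of radius $m$ centered at the $U_j$. By the previous step they are pairwise disjoint. Each is contained in the ball of radius $1+m$ about the origin. Comparing volumes gives
\[
\binom{n}{k}\, m^d\le (1+m)^d,
\qquad\text{i.e.}\qquad
\binom{n}{k}\le (1+1/m)^d.
\]
Taking logarithms gives $d\ge \log\binom{n}{k}/\log(1+1/m)$. Solving instead for $m$ gives $1/m\ge \binom{n}{k}^{1/d}-1$. Since this holds for every achievable margin, and $\maxmargin(d,\ksparsen)$ is the supremum of such margins, we obtain $\maxmargin(d,\ksparsen)\le (\binom{n}{k}^{1/d}-1)^{-1}$.

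If no positive-margin embedding exists, the bound holds trivially, because then $\maxmargin(d,\ksparsen)=-\infty$ or $0$ by convention. I expect no real obstacle here. The only points needing care are that distinct rows of $\ksparsen$ really do differ in some column, which holds since every $k$-sparse row appears exactly once, and the bookkeeping of the disjoint balls inside the enlarged ball. A sharper cap-packing bound on the sphere would improve the constants, but it is not needed for the stated form.
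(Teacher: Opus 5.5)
Your proposal is correct and matches the route the paper attributes to \cite{weller2025theoretical} (the paper cites the result without reproving it): the query embeddings are pairwise $2m$-separated, i.e.\ they form a radius-$m$ packing, and a standard volume comparison gives $\binom{n}{k}\le(1+1/m)^d$. Both the $2m$-separation via a distinguishing column with Cauchy--Schwarz and the disjoint radius-$m$ balls inside the radius-$(1+m)$ ball are handled correctly.
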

While a valuable first step, this result falls short of fully answering our motivating question and Problem~\ref{problem:mainproblem}. First, it is a negative result, giving only an upper bound on the achievable margin. \cite{weller2025theoretical} does not provide a complementary positive result demonstrating whether the bound is tight (as we will see shortly, the bound is \textbf{not} tight). In particular, it does not elucidate why low-dimensional retrieval models are so successful in practice.     

\textbf{(Random) Spherical Graphs.} Finally, we note that Definition~\ref{def:maindefinition} is related to bipartite masks of spherical geometric graphs. Concretely, a spherical graph $G$ on vertex set $[p]$ is defined by unit vectors $X_1, X_2, \ldots, X_p\in \mathbb{S}^{d-1}$ associated to the vertices. Two vertices $a,b$ are adjacent whenever $\langle X_a, X_b\rangle \ge \tau$ for some fixed $\tau.$ In our model, $p = n+N,$ the vertices form a bipartition $[N]\cup \{N+1, \ldots, N+n\},$ the latent vectors are $\{U_j\}_{j = 1}^N, \{V_i\}_{i = 1}^n,$ and considered are only edges between the two parts $[N], \{N+1, \ldots, N+n\}.$ This is exactly the bipartite mask setting in \cite{brennan2021finettistyleresultswishartmatrices}.
Recent work on spherical geometric graphs focuses on the setting when the vectors are i.i.d. uniformly random on the sphere \cite{bubeck2016testing,liu2023local,ma2025exponential}. We want to emphasize the paper \cite{gamarnik2020explicit} which takes a rather different direction.  In this work, the authors show how to construct a geometric graph satisfying a strong Ramsey property from a matrix satisfying a strong restricted isometry property (RIP). This establishes yet another, unexpected, connection between RIP and our model.

\subsection{Results}\label{subsec:main-results}
We highlight a gap in the literature. There is no positive result which guarantees simultaneously \emph{large margin} (quality) and a \emph{small dimensionality} (compactness) of representations.

\textbf{Part 1: Main Result.}
Our main result directly addresses this gap. 

\begin{theorem}[Main Result] 
\label{thm:main}
Let $A\in \{0,1\}^{N\times n}.$
For every $\epsilon>0,$ there exists some $C_\epsilon = O(\epsilon^{-4})$ depending only on $\epsilon$ such that 
$$
\maxmargin(C_\epsilon \; \times \; \maxmargin(+\infty, A)^{-2}\; \times \; \log(\min(n , N))\; , \; A)\ge (1-\epsilon)\maxmargin(+\infty, A).
$$
\end{theorem}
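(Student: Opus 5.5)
Start from an optimal unrestricted embedding and compress it with a Johnson--Lindenstrauss projection. The catch is that a naive union bound over all $Nn$ pairs would give dimension $\log(Nn)$, not $\log\min(n,N)$, so the compression must be arranged to pay only for the smaller side.

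\emph{Setup.} Let $m=\maxmargin(+\infty,A)$. By compactness we can fix unit vectors $\{U_j\}_{j=1}^N,\{V_i\}_{i=1}^n\subseteq\mathbb{R}^n$ with $\langle U_j,V_i\rangle\ge m$ when $A_{ji}=1$ and $\le -m$ otherwise. By symmetry (transpose $A$, which swaps the roles of $U$ and $V$ and preserves the margin) we may assume $n\le N$, so the target dimension is $d=C_\epsilon m^{-2}\log n$.

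\emph{Reduction to few queries.} The key observation is that each query condition is a linear classification of the $n$ points $V_1,\dots,V_n$. We would like to replace each $U_j$ by a vector that lies in a region whose complexity depends only on $n$. The natural tool is the dual viewpoint: by LP/minimax duality (as in the spectral bound, Theorem~\ref{thm:spectralboundonmargincomplexity}), a margin-$m$ separator for row $j$ exists iff no probability-weighted combination of the signed $V_i$'s has small norm. That is, with signs $s_{ji}=\pm1$ given by $A$, we have $\min_{\lambda\in\Delta_n}\|\sum_i\lambda_i s_{ji}V_i\|\ge m$, and the optimal $U_j$ is the normalized min-norm point of $\conv\{s_{ji}V_i\}$.

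So it suffices to find a linear map $\Pi:\mathbb{R}^n\to\mathbb{R}^d$ such that, for every sign pattern $s$ arising in $A$ and every $\lambda\in\Delta_n$, $\|\Pi\sum_i\lambda_i s_iV_i\|\ge(1-\epsilon)\|\sum_i\lambda_i s_iV_i\|$, and such that $\|\Pi V_i\|\le 1+\epsilon$. Taking $U'_j$ to be the normalized min-norm point in the projected hull then certifies margin $\ge(1-\epsilon)m/(1+\epsilon)$ for the renormalized $\Pi V_i$.

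\emph{Controlling the hull without a union bound over rows.} Instead of a union bound over rows, we use sparsification (Maurey's empirical method). Any point $x=\sum\lambda_i s_iV_i$ whose norm is at least $m$ can be approximated within $\epsilon m$ by an average of $t=O(\epsilon^{-2}m^{-2})$ signed vectors $s_iV_i$, since each has norm $\le1$. Averages of $t$ points from the $2n$ points $\pm V_i$ form a set of size at most $(2n)^t$. We then apply a Gaussian JL map with distortion $\epsilon$ on this set together with $\{V_i\}$. This needs $d=O(\epsilon^{-2}\cdot t\log n)=O(\epsilon^{-4}m^{-2}\log n)$, which matches $C_\epsilon=O(\epsilon^{-4})$.

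\emph{Transferring from the net to the whole hull.} The remaining approximation error must also be controlled after projection. Here one uses that $\|\Pi(x-\bar x)\|$ is small for the Maurey residual. This holds because the residual is itself a difference of hull points with a bounded number of terms, or it can be handled by applying Maurey directly to the projected point $\Pi x$ with the same sample, using that $\|\Pi V_i\|\le1+\epsilon$. Concretely, given $\lambda$ we pick a random empirical average $\bar x$. Then $\expect\|x-\bar x\|^2\le1/t$ and likewise $\expect\|\Pi x-\Pi\bar x\|^2\le(1+\epsilon)^2/t$, so a single realization makes both small. Combining, $\|\Pi x\|\ge\|\Pi\bar x\|-\epsilon m\ge(1-\epsilon)\|\bar x\|-\epsilon m\ge(1-O(\epsilon))m$. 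Rescaling $\epsilon$ finishes the proof.

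I expect the main obstacle to be this step: making the Maurey--JL interplay rigorous so that the lower bound on the projected min-norm distance holds uniformly over all $\lambda$ and all rows at once. The double empirical-sampling argument above is the first thing I would try.
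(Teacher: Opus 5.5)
Your proposal is correct and is essentially the paper's proof. Both arguments apply JL once to the at most $(2n)^t$ signed $t$-averages, a set that contains $\pm V_i$. Then, for any point of a projected hull, Maurey's lemma applied in the projected space (using $\|\Pi V_i\|_2\le 1+\epsilon$) compares it with the image of an average whose original norm is $\ge m$. The min-norm certificate of Lemma~\ref{lem:convexhullequivalentofmargin} finishes the proof.

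The uniformity over all $\lambda$ and all rows that you flag as the main obstacle is automatic: $\Pi$ is fixed before any $\lambda$ or row is chosen, so your ``concretely'' argument already settles it. You should drop the ``bounded number of terms'' justification, which is false, since $x$ may have full support. A minor difference: the paper uses a one-sided JL with $\|\Pi v\|_2\le\|v\|_2$, so renormalizing the $\Pi V_i$ costs nothing, whereas your two-sided version loses a harmless factor $1/(1+\epsilon)$.
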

We interpret this result as follows: one can obtain nearly the optimal margin $\maxmargin(+\infty, A)$ (i.e., nearly the same as in dimension $d=\infty$) in dimension $O(\maxmargin(+\infty, A)^{-2}\; \times \; \log(\min(n , N))).$ For concreteness, recall that if each row of $A$ is $k$-sparse, then $\maxmargin(+\infty, A) \ge \frac{1-o_k(1)}{2\sqrt{k}}.$ In particular, an optimal margin for any $k$-sparse matrix can be obtained in dimension $O(k\log n)$!

It turns out that one can recover that $\maxmargin(Ck\log n, \ksparsen) = \Omega(1/\sqrt{k})$ for a large enough $C$ from results on compressed sensing \cite{CandesTao2005}. However, it seems like the approach is limited to 
margin $\Theta(1/\sqrt{k})$ and dimension $\Theta(k\log n)$ and falls short of our Theorem~\ref{thm:main}.  Theorem~\ref{thm:main} goes further and implies, for example, that for $k$-sparse $A$ with $\maxmargin(+\infty, A) = \omega_k(k^{-1/2}),$ maximal margin can be achieved in an even smaller dimension $o_k(k\log n).$ 
See Appendix~\ref{sec:largemarginviasparsecoding}.

We note that Theorem~\ref{thm:main} is a substantial strengthening of the result from \cite{ben2002limitations} that states that $\signrank(A) = O(\maxmargin(+\infty, A)^{-2}\log(nN)).$ The result of \cite{ben2002limitations} is for sign rank, so it only states that a positive margin embedding exists, without preserving the margin. More importantly, our result has $\log(\min(n , N))$ dependence instead of 
$\log(nN) = \Theta(\log(\max(n,N))).$ Instead of our $O(k\log(n)),$ \cite{ben2002limitations} achieves dimension $O(k^2\log(n))$ for margin $\Theta(k^{-1/2})$ for $\ksparsen.$

Both the proof of our Theorem~\ref{thm:main}, presented in Section~\ref{sec:mainthmproof}, and the result of \cite{ben2002limitations} use the Johnson-Lindenstrauss lemma (Theorem~\ref{thm:JL}) to perform dimensionality reduction. Concretely, the result $\signrank(A) = O(\maxmargin(+\infty, A)^{-2}\log(nN))$ proceeds by taking embeddings $\{U_j\}_{j = 1}^N, \{V_i\}_{i = 1}^n$ attaining the maximal margin $\maxmargin(+\infty, A)$
and applying JL on the $Nn$ pairs $(U_j, V_i)$ so that inner products are preserved up to an additive error smaller than the margin $\maxmargin(+\infty, A).$ 

The novelty in our proof is that we apply JL so that \emph{only the norms} of (certain convex combinations of the) $\{V_i\}_{i = 1}^n$ (when $n\le N$) are approximately preserved. Then, we use the following non-constructive certificate for the existence of  the desired $N$ vectors corresponding to queries instead of explicitly constructing them. The proof of Lemma~\ref{lem:convexhullequivalentofmargin} is simple and delayed to Appendix~\ref{appendix:somesimpleclaims}. 

\begin{lemma}
\label{lem:convexhullequivalentofmargin}
Given are vectors $\{V_i\}_{i = 1}^n\in \mathbb{R}^D,$ a sign vector $\sigma\in \{\pm1\}^n,$ and some $m>0.$ Then, 
\begin{enumerate}
    \item [$(\Longrightarrow)$] Suppose that there exists a unit vector $U$ with the following property. Whenever $\sigma_i = 1,$ it holds that $\langle U, V_i\rangle \ge m$ and whenever $\sigma_i = -1,$ it holds that $\langle U, V_i\rangle \le -m.$ Then, for every $x\in \conv(\{\sigma_i V_i\}_{i = 1}^n),$ it holds that $\|x\|_2\ge m.$
    \item [$(\Longleftarrow)$] Conversely, suppose that $m = \min\big\{\|x\|_2\; : \; x \in \conv(\{\sigma_i V_i\}_{i = 1}^n)\big\}.$ Then, there exists a unit vector $U$ with the following property. Whenever $\sigma_i = 1,$ it holds that $\langle U, V_i\rangle \ge m$ and whenever $\sigma_i = -1,$ it holds that $\langle U, V_i\rangle \le -m.$
\end{enumerate}
\end{lemma}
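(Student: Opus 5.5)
For the forward direction, I will use the fact that inner products are linear and then apply Cauchy--Schwarz. Take any $x\in\conv(\{\sigma_i V_i\}_{i=1}^n)$ and write it as $x=\sum_i \lambda_i\sigma_i V_i$ with $\lambda_i\ge 0$ and $\sum_i\lambda_i=1$. The hypothesis says exactly that $\langle U,\sigma_iV_i\rangle\ge m$ for every $i$: when $\sigma_i=1$ this is the assumption itself, and when $\sigma_i=-1$ the assumption $\langle U,V_i\rangle\le -m$ gives $\langle U,-V_i\rangle\ge m$. Averaging these inequalities gives $\langle U,x\rangle=\sum_i\lambda_i\langle U,\sigma_iV_i\rangle\ge m$. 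Since $\|U\|_2=1$, Cauchy--Schwarz yields $\|x\|_2\ge\langle U,x\rangle\ge m$.

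For the converse, I will use the standard minimum-norm-point (projection) argument for closed convex sets. The set $K=\conv(\{\sigma_iV_i\})$ is compact and convex, so the minimum of $\|x\|_2$ over $K$ is attained at some $x^*$ with $\|x^*\|_2=m$. Because $m>0$, we have $x^*\neq 0$, and I set $U=x^*/m$, which is a unit vector. The key step is the first-order optimality condition. For any $y\in K$ and any $t\in[0,1]$, the point $x^*+t(y-x^*)$ lies in $K$, so
\[
\|x^*+t(y-x^*)\|_2^2\ge\|x^*\|_2^2 .
\]
Expanding the left side and letting $t\to0^+$ gives $\langle x^*,y-x^*\rangle\ge0$, which rearranges to $\langle x^*,y\rangle\ge\|x^*\|_2^2=m^2$. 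Dividing by $m$ gives $\langle U,y\rangle\ge m$ for all $y\in K$.

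Finally, I apply this with $y=\sigma_iV_i\in K$. This gives $\langle U,V_i\rangle\ge m$ when $\sigma_i=1$ and $\langle U,V_i\rangle\le -m$ when $\sigma_i=-1$, as required. The only point needing care is this variational inequality. It depends on $m>0$, which ensures $U$ is well defined, and on compactness of $K$, which ensures the minimum is attained; both are guaranteed by the hypotheses.
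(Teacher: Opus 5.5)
Your proposal is correct and follows essentially the same route as the paper: the forward direction is the identical Cauchy--Schwarz averaging argument, and the converse uses the same minimum-norm point $x^*/m$ with the first-order optimality condition. The only cosmetic difference is that you state the variational inequality $\langle x^*, y - x^*\rangle \ge 0$ directly, whereas the paper argues by contradiction using the negative derivative of $\|\alpha\sigma_iV_i + (1-\alpha)U_1\|_2^2$ at $\alpha = 0$.
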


We show that this property is satisfied by the (convex combinations of) $\{V_i\}_{i = 1}^n$ after the JL-projection via Maurey's Approximation Lemma (Theorem~\ref{thm:Murrayslemma}). The idea for using Maurey's approximation lemma originates from the literature on compressed sensing \cite{foucart2010gelfand,foucart2013mathematical,novak1995optimal}. 

\textbf{Part 2: Impossibility Results.} We next prove that the dimension bound in Theorem~\ref{thm:main} is tight for the $\ksparsen$ model. In the maximal margin regime $m = \Theta(1/\sqrt{k})$, Theorem~\ref{thm:spherepackinglbweller} of \cite{weller2025theoretical} gives the lower bound $d=\Omega(k\log(n/k)/\log(k))$ while Theorem~\ref{thm:main} gives an upper bound of $d = O(k\log(n))$. There is still a $\log(k)$ gap between these upper and lower bounds. We close this gap with the following theorem, which we prove in Section~\ref{sec:Lower Bounds}:

\begin{theorem}
\label{thm:mainlowerbound} Suppose that $n\ge 3k$ and 
$\maxmargin(d, \ksparsen)>0.$
For some universal constant $C>0,$
$$
d\ge C\frac{k\log (n/k)}{\log(1 + 2(\maxmargin(d, \ksparsen)\sqrt{k})^{-1})}.
$$
\end{theorem}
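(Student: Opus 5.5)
The plan is to run a volumetric packing argument, as in Theorem~\ref{thm:spherepackinglbweller}, but with a better-adapted norm. The norm should see the $\Theta(k)$ coordinates on which two retrieval patterns differ, not just a single coordinate. That is what replaces Weller et al.'s $\log(1+1/m)$ by $\log(1+O(1/(m\sqrt{k})))$ and recovers the missing $\log k$.

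\textbf{Setup.} Fix a dimension-$d$ margin-$m$ embedding of $\ksparsen$ with $m=\maxmargin(d,\ksparsen)>0$ (attained by compactness). Let $V\in\mathbb{R}^{d\times n}$ have columns $V_i$. For each $k$-subset $S$, let $U_S$ be the query vector retrieving $S$, and set $w_S\coloneqq V^\top U_S\in\mathbb{R}^n$. Then all $w_S$ lie in the row space $W$ of $V$, a subspace of dimension at most $d$. Moreover:
\begin{itemize}
\item $|(w_S)_i|\le 1$ for all $i$, since all vectors have unit norm;
\item $(w_S)_i\ge m$ for $i\in S$;
\item $(w_S)_i\le -m$ for $i\notin S$.
\end{itemize}

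\textbf{Choice of norm.} On $\mathbb{R}^n$ I would use the top-$k$ norm
\[
\|x\|_{(k)}\coloneqq\max_{|T|=k}\|x_T\|_2 .
\]
This is a genuine norm, and its restriction to $W$ is a norm on a space of dimension at most $d$. The two properties needed are:
\begin{itemize}
\item Boundedness: $\|w_S\|_{(k)}\le\sqrt{k}$ for every $S$, because all entries lie in $[-1,1]$.
\item Separation: if $|S\cap S'|\le k/2$, then on $S\setminus S'$ (of size at least $k/2$) each coordinate of $w_S-w_{S'}$ is at least $2m$. Hence $\|w_S-w_{S'}\|_{(k)}\ge 2m\sqrt{k/2}=m\sqrt{2k}$.
\end{itemize}

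\textbf{Combinatorial step.} Take a family $\mathcal{F}$ of $k$-subsets of $[n]$ with pairwise intersections at most $k/2$ and
\[
\log|\mathcal{F}|\ge c\,k\log(n/k).
\]
This is the standard Gilbert--Varshamov / greedy bound for constant-weight codes, and it is where $n\ge 3k$ is used. The greedy argument works as follows. A fixed $k$-set meets at most
\[
\sum_{t>k/2}\binom{k}{t}\binom{n-k}{k-t}\le 2^k\binom{n}{\lceil k/2\rceil}
\]
other $k$-sets in more than $k/2$ elements. Therefore $|\mathcal{F}|\ge\binom{n}{k}\big/\big(2^k\binom{n}{k/2}\big)$, which is $(n/k)^{\Omega(k)}$ once $n/k$ exceeds a large constant. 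The regime $3k\le n\le C_0k$ has to be handled separately; there the claim reduces to $d=\Omega(k/\log(1+2/(m\sqrt{k})))$. I expect to get this from Theorem~\ref{thm:signrankforsparse} ($d\ge 2k+1$ whenever any positive margin exists), after adjusting the constant $C$. I expect this bookkeeping of small $n/k$ to be the only fiddly part.

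\textbf{Packing bound.} The points $\{w_S\}_{S\in\mathcal{F}}$ lie in the $\|\cdot\|_{(k)}$-ball of radius $R=\sqrt{k}$ in $W$ and are pairwise $\delta=m\sqrt{2k}$ apart. The standard volume argument for a norm on a space of dimension at most $d$ gives
\[
|\mathcal{F}|\le(1+2R/\delta)^d=\bigl(1+\sqrt{2}/(m\sqrt{k})\bigr)^d .
\]
Taking logarithms,
\[
d\,\log\!\bigl(1+\sqrt2(m\sqrt k)^{-1}\bigr)\ \ge\ c\,k\log(n/k).
\]
Since $\log(1+\sqrt2 x)\le\log(1+2x)$, this yields the theorem with a universal $C$.
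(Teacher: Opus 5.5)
There is a genuine gap in the final packing step. With your radius $R=\sqrt{k}$ and separation $\delta=m\sqrt{2k}$, you get $2R/\delta=\sqrt{2}/m$, not $\sqrt{2}/(m\sqrt{k})$, because the factor $\sqrt{k}$ appears in both and cancels. So your argument only gives $|\mathcal{F}|\le(1+\sqrt{2}/m)^d$, i.e.\ $d\ge c\,k\log(n/k)/\log(1+\sqrt{2}/m)$. At $m=\Theta(k^{-1/2})$ this is exactly the $k\log(n/k)/\log k$ bound of Theorem~\ref{thm:spherepackinglbweller}, which the statement is supposed to improve. A different norm on the query side will not repair this. The only facts you feed in are coordinatewise: $|(w_S)_i|\le 1$ for all $i$, and $|(w_S-w_{S'})_i|\ge 2m$ on $S\triangle S'$. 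Take any norm that is monotone in the absolute values of the coordinates, as yours is. The best radius these facts certify is $\|\mathbf{1}\|$, and the best separation is $2m\|\mathbf{1}_{S\triangle S'}\|\le 2m\|\mathbf{1}\|$, so $2R/\delta\ge 1/m$ regardless. The remaining ingredients are fine: the volume bound for a norm on the at most $d$-dimensional space $W$, and the greedy constant-weight code.

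The missing idea is a source of cancellation: the radius should grow like the square root of the number of coordinates involved, while the separation grows linearly. The paper gets this on the document side. Since every $k$-set is a query, Lemma~\ref{lem:convexhullequivalentofmargin} gives $\|Vx\|_2\ge m\|x\|_1$ for every $k$-sparse $x$ with arbitrary signs (Lemma~\ref{lem:reductiontosparselowerbound}). This is an $\ell_1$-type lower bound rather than a coordinatewise one. Take $s=\lfloor k/8\rfloor$ and a family of $s$-sets with pairwise intersections below $s/2$ and size at least $(n/(4s))^{s/2}$. For each $T$, choose signs $\sigma^T$ minimizing $\|\sum_{i\in T}\sigma^T_iV_i\|_2$; averaging over random signs gives $\|y_T\|_2\le\sqrt{s}$. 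Meanwhile $x_T-x_{T'}$ is $k$-sparse with $\|x_T-x_{T'}\|_1>s$, so $\|y_T-y_{T'}\|_2\ge ms$. Now $2R/\delta=2/(m\sqrt{s})$, which yields the theorem. A side benefit: $n/(4s)\ge 2n/k\ge 6$ whenever $n\ge 3k$. So you need no separate small-$n/k$ case, and no appeal to the sign-rank bound, which as quoted in Theorem~\ref{thm:signrankforsparse} requires $n\ge 3k+1$ and so misses $n=3k$.
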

We conclude that the minimal dimension required to obtain margin $m = \Theta(1/\sqrt{k})$ is exactly $\Theta(k\log(n/k)),$ thus fully resolving the $\ksparsen$ case \cite{weller2025theoretical}. Theorem~\ref{thm:mainlowerbound} can be adapted to general $k$-sparse $A,$ but the resulting bound is more complex. See Appendix~\ref{sec:LowerBoundBeyondnchoosek}.

We briefly compare the proof of our Theorem~\ref{thm:mainlowerbound} with the proof of the weaker $d=\Omega(k\log(n/k)/\log(1/m))$ lower bound of \cite{weller2025theoretical}. In \cite{weller2025theoretical}, the authors simply use the fact that if $\{U_j\}_{j = 1}^N, \{V_i\}_{i = 1}^n$ are a margin $m$ embedding of $\ksparsen,$ then the vectors $\{U_j\}_{j = 1}^N$ form a radius-$m$ packing of the sphere, to which they apply standard volume arguments. 

Our proof also relies on a volume argument. However, instead of directly applying it on the query embeddings $\{U_j\}_{j = 1}^N$ or document embeddings $\{V_i\}_{i = 1}^n,$ we pursue an idea akin to the proof of Theorem~\ref{thm:main}. We non-constructively show the existence of a collection of vectors which also has size $\exp(\Omega(k\log(n/k)))$ but is a packing with a much higher radius of $\Omega(m\sqrt{k}).$ This collection is formed by linear combinations of the vectors $\{V_i\}_{i = 1}^n$ and again crucially relies on Lemma~\ref{lem:convexhullequivalentofmargin}. 

\textbf{Part 3: Constructions beyond the maximal margin.} Theorems~\ref{thm:main} and \ref{thm:mainlowerbound} primarily address the question of embedding $k$-sparse matrices with maximal margin $\Theta(1/\sqrt{k}).$ However, other non-trivial margins could be possible in dimensions smaller than $\Theta(k\log n).$ For instance, a simple analysis of the $2k+1$ sign-rank construction of \cite{AlonFranklRodl1985}, provided in Appendix~\ref{sec:marginalgebraic}, yields the following result.

\begin{corollary}
\label{cor:marginalgebraic}
    $\maxmargin(2k+1, A)\ge (2n)^{-2k}(2k)^{-1/4}$ if all rows of $A$ are $k$-sparse.
\end{corollary}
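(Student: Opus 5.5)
The plan is to make the sign-rank construction of \cite{AlonFranklRodl1985} (Theorem~\ref{thm:signrankforsparse}) quantitative. Since $\maxmargin(d,A)\ge \maxmargin(d,\ksparsen)$ for every $A$ with $k$-sparse rows, it suffices to treat $\ksparsen$.

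\textbf{Embedding of documents and queries.} Place the documents on the moment curve. Pick distinct nodes $t_1,\dots,t_n\in[-1,1]$ equally spaced, so $|t_i-t_{i'}|\ge 2/(n-1)\ge 2/n$ for $i\neq i'$. Set $\widetilde V_i=(1,t_i,t_i^2,\dots,t_i^{2k})\in\mathbb{R}^{2k+1}$. For a query with support $S$, $|S|=k$, let $q_S(t)=\prod_{i\in S}(t-t_i)$ and
$$p_S(t)=\tfrac12 (2/n)^{2k}-q_S(t)^2 .$$
Let $\widetilde U_S$ be the coefficient vector of $p_S$, so that $\langle \widetilde U_S,\widetilde V_i\rangle=p_S(t_i)$. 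For $i\in S$ this equals $\tfrac12(2/n)^{2k}$. For $i\notin S$, every factor satisfies $|t_i-t_{i'}|\ge 2/n$, so $q_S(t_i)^2\ge (2/n)^{2k}$ and $p_S(t_i)\le -\tfrac12(2/n)^{2k}$. This gives a signed separation of size $\tfrac12(2/n)^{2k}=\tfrac12 \cdot 4^{k}(2n)^{-2k}$ before normalization.

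\textbf{Normalization.} The margin is this separation divided by $\|\widetilde U_S\|\,\|\widetilde V_i\|$, so both norms must be bounded.
\begin{itemize}
\item Since $|t_i|\le 1$, we have $\|\widetilde V_i\|_2\le\sqrt{2k+1}$.
\item The coefficients of $q_S$ are elementary symmetric functions of numbers in $[-1,1]$, so their $\ell_1$ norm is at most $\prod_{i\in S}(1+|t_i|)\le 2^k$. Hence the coefficients of $q_S^2$ have $\ell_1$ norm at most $4^k$, and $\|\widetilde U_S\|_2\le 4^k+1$.
\end{itemize}
The $4^k$ produced by the coefficient bound cancels the $4^k$ in the separation. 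This leaves a margin of order $(2n)^{-2k}/\sqrt{2k+1}$ up to an absolute constant.

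\textbf{Main obstacle: the $(2k)^{-1/4}$ factor.} The naive bound loses $\sqrt{2k}$ on the document side, and I expect improving this to the stated $(2k)^{-1/4}$ to be the hardest step. The plan is to rescale coordinates, replacing $\widetilde V_i$ by $D\widetilde V_i$ and $\widetilde U_S$ by $D^{-1}\widetilde U_S$ for a diagonal $D$; this leaves every inner product unchanged. I would choose $D$ to shrink the high-degree coordinates of $\widetilde V_i$, which are small anyway, so that the $\sqrt{2k+1}$ cost is split between the two sides, each losing roughly $(2k)^{1/4}$. Concretely, I would first try a uniform damping $D=\mathrm{diag}(1,\lambda,\dots,\lambda^{2k})$ with $\lambda$ slightly less than $1$. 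If that does not yield the exact constant, I would shift to nodes near $0$ (or Chebyshev-type nodes) and trade spacing against norm.

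In any case, the exponential factor $(2n)^{-2k}$ comes out cleanly from the argument above. The remaining work is constant bookkeeping: the $\tfrac12$, the $+1$ in $4^k+1$, and the polynomial-in-$k$ factor, which I expect to be absorbed by the rescaling.
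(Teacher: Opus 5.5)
Your construction is correct, but the write-up stops short of a proof, and the ``main obstacle'' you identify comes from an arithmetic slip, not a missing idea. You write $\tfrac12(2/n)^{2k}=\tfrac12\cdot 4^{k}(2n)^{-2k}$. In fact $4^k(2n)^{-2k}=n^{-2k}$, while $\tfrac12(2/n)^{2k}=\tfrac12\,4^k n^{-2k}=\tfrac12\,16^k(2n)^{-2k}$. So the $4^k$ from your coefficient bound cancels only part of the separation. With your own estimates $\|\widetilde V_i\|_2\le\sqrt{2k+1}$ and $\|\widetilde U_S\|_2\le 4^k+1$, the normalized embedding already has margin at least
\[
\frac{\tfrac12\,4^k n^{-2k}}{(4^k+1)\sqrt{2k+1}}
\;\ge\;\frac{2}{5}\cdot\frac{4^k}{\sqrt{2k+1}}\,(2n)^{-2k}
\;\ge\;(2k)^{-1/4}(2n)^{-2k}
\]
for every $k\ge 1$. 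At $k=1$ the middle coefficient is $8/(5\sqrt3)\approx 0.92>2^{-1/4}\approx 0.84$, and it increases with $k$. So no rescaling is needed. Your argument actually gives $\maxmargin(2k+1,A)\ge \tfrac{2}{5\sqrt{2k+1}}\,n^{-2k}$, which beats the corollary by a factor of order $4^k k^{-1/4}$.

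As submitted, though, the last step is only a plan, and the plan itself is doubtful. With your $\ell_1$ coefficient bound, damping by $\lambda=1-\delta$ inflates the query-side bound by roughly $e^{k\delta}$. Saving $(2k)^{1/4}$ on the document side needs $\delta\gtrsim k^{-1/2}$, so the damping costs more than it gains.

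For comparison, the paper keeps the Alon--Frankl--R\"odl sign-change polynomial $p_j(s)=\pm\prod_{\ell=1}^{r}(s_\ell-s)$ with $r\le 2k$. Its roots are the midpoints of the gaps where row $j$ switches between $0$ and $1$. Each factor is at least $1/n$ at a node, so $|p_j(t_i)|\ge n^{-2k}$. Bounding each coefficient by $\binom{2k}{\ell}$ gives $\|\widetilde U_j\|_2^2\le\binom{4k}{2k}\le 2^{4k}/\sqrt{3k}$. The $(2k)^{-1/4}$ in the statement is this central-binomial saving of $(3k)^{1/4}$ set against $\|\widetilde V_i\|_2\le\sqrt{2k+1}$, not the effect of any rescaling.

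Your polynomial $\tfrac12(2/n)^{2k}-q_S^2$ instead puts double roots at the relevant nodes themselves. Each of the $2k$ factors is then at least $2/n$ rather than $1/n$. You gain $4^k$ in the separation, lose a factor $\tfrac12$ to the offset, and pay the same order $4^k$ in coefficient norm.

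The paper's polynomial is more flexible: it handles any row with at most $2k$ sign changes along the node ordering, not only rows with at most $k$ ones. Yours gives the sharper bound. You can also import the paper's $\ell_2$ coefficient estimate, since $q_S^2$ is a product of $2k$ linear factors with roots in $[-1,1]$. That replaces $4^k+1$ by about $4^k(3k)^{-1/4}$.
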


We prove a separate result which establishes a smooth trade-off between dimension and margin. It is based on self-Khatri-Rao lifts \cite{khatri1968solutions}. Namely, we form the $V_i$ vectors roughly by drawing i.i.d. vectors $a_1, a_2, \ldots, a_n \in \mathbb{S}^{\sqrt{d}-1}$ and setting $V_i = a_i\otimes a_i\in \mathbb{S}^{d-1}.$ The improvement is also related to the aforementioned connection to the restricted isometry property. Khatri-Rao random lift matrices are known to satisfy stronger RIP conditions than random matrices \cite{khanna18khatrirao}. 

\begin{theorem}
\label{thm:main_construction}
Suppose that $A\in \{0,1\}^{N\times n}$ is such that every row has $k$ non-zero entries for some $k\ge 2.$ Suppose furthermore that
$d\ge \frac{k^2 + 5k + 7}{2}$ and let $r\coloneqq\lfloor \frac{\sqrt{8d-7}-1}{2}\rfloor = \sqrt{2d}+O(1).$ Then, 
$$
\maxmargin(d, A)\ge 
\frac{\Delta}{2(\sqrt{k} + 1) - \Delta}
$$    
where $\Delta = \Theta\Big( \frac{r-k}{r}\times \Big(\frac{1}{(n-k)N}\Big)^{\frac{2}{r - k}}\Big).$
\end{theorem}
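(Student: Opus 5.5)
The construction I would use is a quadratic (self-Khatri--Rao) lift, with a query realized as a projection matrix onto the span of the lifted vectors of its relevant documents. Let $r$ be as in the statement. A short computation shows $\frac{r(r+1)}{2}+1\le d$, so we can work in $\mathrm{Sym}_r\oplus\mathbb{R}$, where $\mathrm{Sym}_r$ is the space of symmetric $r\times r$ matrices with the Frobenius inner product. The hypothesis $d\ge \frac{k^2+5k+7}{2}$ should be exactly what forces $r\ge k+2$, so that $r-k\ge 2$.

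\textbf{Construction.} Draw $a_1,\dots,a_n$ i.i.d. uniform on $\mathbb{S}^{r-1}$. Set $\widetilde V_i=(a_ia_i^\top,1)$, which has norm $\sqrt2$, and normalize to $V_i=\widetilde V_i/\sqrt2$.

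For a query $j$ with support $S$, $|S|=k$, let $P_S$ be the orthogonal projection onto $\mathrm{span}\{a_i:i\in S\}$. This span has dimension $k$ almost surely. Let $P_W=I-P_S$ be the projection onto its complement, which has dimension $r-k$. Since $\langle P_S,a_ia_i^\top\rangle=\|P_Sa_i\|^2$, we get
\[
\langle P_S,a_ia_i^\top\rangle=1 \text{ for } i\in S,\qquad \langle P_S,a_ia_i^\top\rangle=1-\|P_Wa_i\|^2 \text{ for } i\notin S.
\]
Take $\widetilde U_j=(P_S,\,-(1-\Delta/2))$. Then $\langle \widetilde U_j,\widetilde V_i\rangle=\Delta/2$ for $i\in S$, and $\langle \widetilde U_j,\widetilde V_i\rangle\le-\Delta/2$ for $i\notin S$, provided $\|P_Wa_i\|^2\ge\Delta$.

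\textbf{Normalization.} We have $\|P_S\|_F=\sqrt k$, so the triangle inequality gives $\|\widetilde U_j\|\le \sqrt k+1-\Delta/2$. Combined with the factor $\sqrt2$ from $V_i$, the margin after normalizing both sides is at least
\[
\frac{\Delta/2}{\sqrt2\,(\sqrt k+1-\Delta/2)}.
\]
This matches the claimed $\frac{\Delta}{2(\sqrt k+1)-\Delta}$ up to the constant absorbed into the $\Theta(\cdot)$ defining $\Delta$. Alternatively, rescaling the extra coordinate of $V_i$ might remove the $\sqrt2$; I would tune this at the end.

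\textbf{Probabilistic step (main obstacle).} It remains to show that with positive probability, for every query $j$ and every $i\notin S_j$, we have $\|P_{W_j}a_i\|^2\ge\Delta$.

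Fix $j$ and $i\notin S_j$. Then $a_i$ is independent of $W_j$. By rotation invariance, $\|P_{W_j}a_i\|^2$ has the law of the squared norm of $r-k$ coordinates of a uniform vector on $\mathbb{S}^{r-1}$, namely $\Betadist\big(\frac{r-k}{2},\frac{k}{2}\big)$. I need a small-ball bound of the form
\[
\prob\big(\Betadist(\tfrac{r-k}{2},\tfrac k2)\le t\big)\le \Big(\frac{C\,r\,t}{r-k}\Big)^{(r-k)/2}.
\]
I would get this by integrating the density near $0$: bound $(1-x)^{k/2-1}$ by a constant, or handle $k=2$ directly, and use Stirling on the ratio $B(\frac{r-k}{2},\frac k2)^{-1}\cdot\frac{2}{r-k}$. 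Obtaining the right dependence $\frac{r-k}{r}$, rather than something cruder in $k$, is the delicate part.

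\textbf{Union bound.} A union bound over at most $N(n-k)$ pairs $(j,i)$ needs
\[
\Big(\frac{Crt}{r-k}\Big)^{(r-k)/2}<\frac{1}{(n-k)N}.
\]
This holds for $t=\Delta=c\,\frac{r-k}{r}\big((n-k)N\big)^{-2/(r-k)}$ with a suitable constant $c$, exactly the form in the theorem. Hence a valid choice of the $a_i$ exists, which gives the bound on $\maxmargin(d,A)$.
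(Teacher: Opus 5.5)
Your proposal is correct and follows essentially the paper's proof: the same lift $a_ia_i^\top$ with projection-matrix queries, the $\Betadist(\frac{r-k}{2},\frac k2)$ small-ball estimate obtained by integrating the density near $0$, a union bound over the $N(n-k)$ non-relevant pairs, and Stirling for the $\frac{r-k}{r}$ factor. (The paper simply defines $\Delta=\big(\frac{\Gamma(k/2)\Gamma((r-k)/2)}{2N(n-k)\Gamma(r/2)}\big)^{2/(r-k)}$ exactly and applies Stirling afterwards.) The ``tuning'' you mention is what the paper does: it weights both sides as $V_i=(\beta\,\vectorize(a_ia_i^\top),\sqrt{1-\beta^2})$ and $U_j=(\frac{\beta}{\sqrt k}\vectorize(P_j),-\sqrt{1-\beta^2})$ with $\beta^2=\frac{2\sqrt k}{2(\sqrt k+1)-\Delta}$, which gives exactly $\frac{\Delta}{2(\sqrt k+1)-\Delta}$ and removes your $\sqrt2$ loss; that loss is harmless for the $\Theta$ statement, but the weighting is needed to recover the sharp $\frac{1+o_k(1)}{2\sqrt k}$ as $d\to\infty$.
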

The proof is in Section~\ref{sec:proofofconstruction}. We illustrate the bound with the following cases when $N = \binom{n}{k}, A = \ksparsen.$
\begin{enumerate}
    \item If $d\longrightarrow +\infty,$ as $r= \sqrt{2d}(1 + o_d(1)),$ it follows that $\Delta\longrightarrow 1.$ We recover the fact $\maxmargin(d, \ksparsen)\ge \frac{1+o_k(1)}{2\sqrt{k}}.$ Note, however, that this construction requires $d = \Omega(k^2\log^2 n)$ for margin $\Theta(1/\sqrt{k})$ even though we know that dimension $\Theta(k\log n)$ is sufficient.
    \item If $d = \frac{k^2}{2}(1+\alpha)^2$ for some $\alpha>0,$ we get 
    $\maxmargin(\frac{k^2}{2}(1 + \alpha)^2, \ksparsen)\ge \frac{1}{2\sqrt{k}}n^{-\frac{2(k+1)}{\alpha k}}.$ Thus, even for growing $k,$ we obtain any inverse polynomial margin in dimension $\Theta(k^2).$
\end{enumerate}

\textbf{Part 4: Experiments.} We test what margins are obtained by low-dimensional representations trained via contrastive learners in the free embedding model of \cite{weller2025theoretical}. Concretely, we set $k = 2$ and, for varying $n$ and $d,$ train the embeddings $\{U_j\}_{j = 1}^N$ and $\{V_i\}_{i = 1}^n\in \mathbb{S}^{d-1}$ using either the InfoNCE loss \cite{oord2018representation}, as in \cite{weller2025theoretical}, or the sigmoid loss used in SigLIP \cite{zhai23siglip}, with positive (negative) samples determined by $(\ksparsen)_{ji} = 1$ ($(\ksparsen)_{ji} = 0$). 
See Appendix~\ref{sec:freeembedding} for full experimental details.

The work \cite{weller2025theoretical} uses the InfoNCE experiments to suggest that $d = \Theta(n^{1/3})$ is needed for any positive margin. Our experiment on InfoNCE in Figure~\ref{fig:min-nonzero-margin} is consistent with \cite[Figure 2]{weller2025theoretical}. 

What is surprising in light of \cite{weller2025theoretical} is that the sigmoid loss strongly outperforms InfoNCE. The sigmoid loss needs a much lower dimension, close to the theoretically optimal $5$ and nearly independent of $n,$ to obtain a positive margin (Figure~\ref{fig:min-nonzero-margin}). Likewise, the margins obtained by the sigmoid loss are consistently higher  (Figure~\ref{fig:max-margin-vs-logn}). The behavior for larger $k$ is similar, see Appendix~\ref{sec:freeembedding}.

A plausible explanation is that the global minimizers of the sigmoid loss (with trainable inverse temperature) coincide exactly with the embeddings of $A$ in the sense of Definition~\ref{def:maindefinition}. However,  this is not the case for InfoNCE. See Proposition~\ref{prop:marginsigmoid} and Appendix~\ref{sec:freeembedding}.

Of course, our experiments only consider margin in the free embedding model. It is possible that for properties of the embeddings relevant to tasks beyond retrieval or for certain real datasets, the InfoNCE loss has advantages over the sigmoid loss. 

%% file: JL.tex
\section{Low Dimensions Suffice: Proof of 
\texorpdfstring{Theorem~\ref{thm:main}}{Main Theorem}}
\label{sec:mainthmproof}
We need the following two classical results to prove Theorem~\ref{thm:main}.
\begin{theorem}[Johnson-Lindenstrauss Lemma \cite{johnson1984extensions}]
\label{thm:JL}
Suppose that $\mathcal{X}\subseteq \mathbb{R}^D$ is a set of $n$ points and $\epsilon>0.$ Then, there exists a matrix $\Pi\in \mathbb{R}^{ \frac{32\log n}{\epsilon^2}\times D}$ such that for every $x\in \mathcal{X},$ it holds that 
$(1-\epsilon)\|x\|^2_2\le \|\Pi x\|^2_2\le \|x\|^2_2.$
\end{theorem}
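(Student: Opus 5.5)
The plan is the standard probabilistic proof of Johnson--Lindenstrauss with a Gaussian matrix, followed by a rescaling to make the upper bound one-sided. Only the norms of the $n$ given points must be preserved, not pairwise distances, so the union bound runs over $n$ events rather than $\binom{n}{2}$.

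First we dispose of trivial cases. If $\epsilon\ge 1$, then $\Pi=0$ works, since $(1-\epsilon)\|x\|_2^2\le 0$. We also discard $x=0$, since any $\Pi$ works for it. So assume $0<\epsilon<1$, and normalize each remaining $x$ to be a unit vector; both inequalities are homogeneous. Set $k=\lceil 32\log n/\epsilon^2\rceil$ (the rounding is routine). Let $G\in\mathbb{R}^{k\times D}$ have i.i.d.\ $N(0,1/k)$ entries. For a fixed unit $x$, $k\|Gx\|_2^2\sim\chi^2_k$. The standard chi-square tail bounds (e.g.\ via the moment generating function, or Laurent--Massart) give, for $\delta\in(0,1)$,
\[
\prob\big(|\|Gx\|_2^2-1|>\delta\big)\le 2\exp\!\big(-k(\delta^2/2-\delta^3/3)/2\big).
\]

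Take $\delta=\epsilon/2$. Since $\epsilon<1$, the exponent is at least $k\epsilon^2/24$. With $k\ge 32\log n/\epsilon^2$ this gives failure probability at most $2n^{-4/3}$ per point. A union bound over the $n$ points then bounds the total failure probability by $2n^{-1/3}$, which is below $1$ once $n\ge 9$. For $n<9$ the probability claim may fail, but the statement is still easy: we can have $\log n=0$ (then $n=1$, and a one-row $\Pi=x^\top/\|x\|_2$ is harmless if we allow at least one row), or we can simply tighten the constants in the tail bound. Some care with these small-$n$ and constant bookkeeping cases is the only real obstacle, and I would handle it by choosing the sharper Laurent--Massart form if needed.

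On the good event, every $x\in\mathcal{X}$ satisfies
\[
(1-\epsilon/2)\|x\|_2^2\le\|Gx\|_2^2\le(1+\epsilon/2)\|x\|_2^2 .
\]
Set $\Pi=G/\sqrt{1+\epsilon/2}$. Then $\|\Pi x\|_2^2\le\|x\|_2^2$. For the lower bound,
\[
\|\Pi x\|_2^2\ge\frac{1-\epsilon/2}{1+\epsilon/2}\|x\|_2^2\ge(1-\epsilon)\|x\|_2^2,
\]
where the last step is the elementary inequality $(1-\epsilon/2)\ge(1-\epsilon)(1+\epsilon/2)$, which is equivalent to $\epsilon^2/2\ge 0$. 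Since the good event has positive probability, such a $\Pi$ exists, which proves Theorem~\ref{thm:JL}.
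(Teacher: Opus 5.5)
The paper gives no proof of this statement; it quotes the classical Johnson--Lindenstrauss lemma and uses it as a black box, applied to a point set $\mathcal{Z}_T$ of size up to $(2n)^T$. Your argument is the standard Gaussian proof, correctly adapted to this norms-only, one-sided form:
\begin{itemize}
\item the union bound runs over only $n$ events;
\item at $\delta=\epsilon/2$ the exponent satisfies $k(\delta^2/2-\delta^3/3)/2\ge k\epsilon^2/24\ge \frac43\log n$;
\item rescaling by $(1+\epsilon/2)^{-1/2}$, together with $(1-\epsilon/2)\ge(1-\epsilon)(1+\epsilon/2)$, gives exactly the stated inequality.
\end{itemize}
So for $n\ge 9$ your proof is complete.

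The only soft spot is $2\le n\le 8$. Tightening the tail constants does not rescue $n=2$. With the sharp chi-square Chernoff bounds, both tail bounds for the symmetric event $\big|\|Gx\|_2^2-1\big|\le\epsilon/2$ tend to $n^{-2}$ as $\epsilon\to 0$. The resulting union bound is therefore about $2n\cdot n^{-2}=2/n$, which is not below $1$ at $n=2$.

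You do not need probability here at all. Since $\epsilon<1$, for $2\le n\le 8$ you have $k\ge 32\log n\ge n\ge \dim\mathsf{span}(\mathcal{X})$ (e.g.\ $32\ln 2>22$). Take the rows of $\Pi$ to be an orthonormal basis of $\mathsf{span}(\mathcal{X})$, padded with zero rows. Then $\|\Pi x\|_2=\|x\|_2$ exactly for every $x\in\mathcal{X}$.

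As you noticed, $n=1$ is a genuine defect of the statement: the prescribed number of rows is $0$, so the lower bound fails for $\epsilon<1$. This is irrelevant to how the paper uses the lemma.
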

\begin{theorem}[Maurey's Lemma, \cite{pisier1981remarques}]
\label{thm:Murrayslemma}
Suppose that $\mathcal{Y} = \conv(\{Y_i\}_{i = 1}^n)$ for some points $\{Y_i\}_{i = 1}^n\in \mathbb{R}^D$ of norm at most 1. Let $T\in \mathbb{N}$ and let $x$ be any point inside $\mathcal{Y}.$ Then, there exist (not necessarily different) indices $i_1, i_2, \ldots, i_T$ such that 
$
\|x - \frac{1}{T}\sum_{j = 1}^T Y_{i_j}\|_2\le \frac{1}{\sqrt{T}}.
$
\end{theorem}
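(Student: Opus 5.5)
The plan is to prove Maurey's Lemma by the probabilistic method: build a random average of $T$ of the points $Y_i$ whose mean is exactly $x$, bound its expected squared distance from $x$ by $1/T$, and conclude that some realization does at least as well as the expectation.

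\emph{Setting up the random average.} Since $x\in\mathcal{Y}=\conv(\{Y_i\}_{i=1}^n)$, we can write $x=\sum_{i=1}^n\lambda_i Y_i$ with $\lambda_i\ge 0$ and $\sum_i\lambda_i=1$. We view $(\lambda_i)$ as a probability distribution on $[n]$. We draw indices $i_1,\ldots,i_T$ i.i.d.\ from this distribution and set $Z_j\coloneqq Y_{i_j}$. Each $Z_j$ is a random vector in $\mathbb{R}^D$ with $\expect[Z_j]=x$ and $\|Z_j\|_2\le 1$ almost surely.

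\emph{Computing the expected error.} Let $\bar Z=\frac1T\sum_{j=1}^T Z_j$. Expanding the square and using independence together with $\expect[Z_j-x]=0$, all cross terms $\expect\langle Z_j-x,Z_{j'}-x\rangle$ with $j\neq j'$ vanish. Hence
$$\expect\|\bar Z-x\|_2^2=\frac{1}{T^2}\sum_{j=1}^T\expect\|Z_j-x\|_2^2=\frac1T\,\expect\|Z_1-x\|_2^2 .$$
By the bias--variance identity,
$$\expect\|Z_1-x\|_2^2=\expect\|Z_1\|_2^2-\|x\|_2^2\le \expect\|Z_1\|_2^2\le 1,$$
where the last step uses the norm bound on the $Y_i$. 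Therefore $\expect\|\bar Z-x\|_2^2\le 1/T$.

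\emph{Concluding.} A nonnegative random variable cannot exceed its mean with probability one. So there is some realization $(i_1,\ldots,i_T)$ with $\|x-\frac1T\sum_{j}Y_{i_j}\|_2^2\le 1/T$, and taking square roots gives the claim. Indices may repeat, which the statement permits.

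There is no real obstacle here. The only point needing care is the cancellation of the cross terms, which rests on two facts: the samples are independent, and each has mean exactly $x$. This is why we sample with the weights $\lambda_i$ rather than uniformly.
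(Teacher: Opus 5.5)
Your proof is correct and complete. It is the standard Maurey empirical-method argument: sample indices i.i.d.\ according to the convex weights, note that the cross terms vanish, and use $\mathbb{E}\|Z_1-x\|_2^2=\mathbb{E}\|Z_1\|_2^2-\|x\|_2^2\le 1$. The paper gives no proof of its own and only cites Pisier, to whom this same probabilistic argument (credited to Maurey) is attributed, so your route matches the intended one.
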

\begin{proof}[Proof of Theorem~\ref{thm:main}] Without loss of generality, suppose that $n\le N.$ Otherwise, we can argue about $A^T$ instead of $A.$
Suppose that there exist unit vectors $\{U_j\}_{j = 1}^N $ and $\{V_i\}_{i = 1}^n$ of some dimension $D$ such that $(-1)^{A_{ji}+1}\langle U_j, V_i\rangle \ge m$ for every $j \in [N], i \in [n].$ By Lemma~\ref{lem:convexhullequivalentofmargin}, for every $j \in [N]$ and 
$x\in \mathcal{X}_j\coloneqq \conv(\{(-1)^{A_{ji}+1}V_i\}),$ it holds that $\|x\|_2\ge m.$ In particular, for some $T$ that we will choose later, $\|x\|_2\ge m$ holds for every $x$ in the following set:
$$
\mathcal{Z}_T \coloneqq 
\Big\{
\frac{1}{T}\sum_{\ell = 1}^T(-1)^{A_{j,i_\ell}+1}V_{i_\ell} \; : \; 
j \in [N], i_1, i_2, \ldots, i_T \in [n]
\Big\}.
$$

Now, observe that $|\mathcal{Z}_T|\le (2n)^T.$ Indeed, this holds, since there are $n^T$ choices for $i_1, i_2, \ldots, i_T \in [n]$ and at most $2^T$ choices for the signs $(-1)^{A_{j,i_\ell}+1}.$

We will apply the dimensionality reduction Theorem~\ref{thm:JL} to $\mathcal{Z_T}.$ Let $T = \frac{128}{m^2\epsilon^2}$ so that $T\log n = \Theta(\epsilon^{-2}m^{-2}\log n),$
as desired.
By Theorem~\ref{thm:JL}, there exists some $\Pi \in \mathbb{R}^{O(\epsilon^{-4}m^{-2}\log n)\times D}$ such that for every $v\in \mathcal{Z}_T,$ it holds that 
$(1-\epsilon/2)^2\|v\|_2^2\le \|\Pi v\|^2_2\le \|v\|^2_2.$

Define $\widetilde{V_i}\coloneqq \Pi V_i.$ For every $i,$ at least one of $V_i$ and $-V_i$ belongs to $\mathcal{Z}_T$  (take $i = i_1 = \cdots = i_T$). Hence, $1-\epsilon/2 \le \|\widetilde{V}_i\|_2\le 1.$ To obtain the final $V$-vectors, we will simply normalize.

Before that, we demonstrate the existence of vectors $\widetilde{U}_j$ for $\{\widetilde{V}_i\}_{i = 1}^n$
via Lemma~\ref{lem:convexhullequivalentofmargin}. Concretely, take some $j \in [N]$ and consider $\widetilde{X_j} = \conv(\{(-1)^{A_{ji}+1}\widetilde{V_i}\}).$ Let $\widetilde{x}\in \widetilde{X_j}.$ It is enough to show that $\|\widetilde{x}\|_2\ge (1 - \epsilon)m.$ To this end, we use the JL Lemma on $\mathcal{Z}_T$ and Theorem~\ref{thm:Murrayslemma}.  Namely, by~\ref{thm:Murrayslemma}, there exist some indices $i_1, i_2, \ldots, i_T$ such that 
$$
\Big\|\widetilde{x} -\frac{1}{T}\sum_{\ell = 1}^T (-1)^{A_{j,i_\ell}+1}\widetilde{V_{i_\ell}}\Big\|_2\le 
T^{-1/2}\le 
m\epsilon/2.
$$
On the other hand, by the guarantees of Theorem~\ref{thm:JL}, 
$$
\Big\|\frac{1}{T}\sum_{\ell = 1}^T (-1)^{A_{j,i_\ell}+1}\widetilde{V_{i_\ell}}\Big\|_2 = 
\Big\|\Pi \Big(\frac{1}{T}\sum_{\ell = 1}^T (-1)^{A_{j,i_\ell}+1}V_{i_\ell}\Big)\Big\|_2\ge 
(1-\epsilon/2)
\Big\|\frac{1}{T}\sum_{\ell = 1}^T (-1)^{A_{j,i_\ell}+1}V_{i_\ell}\Big\|_2.
$$
Finally, as  $\frac{1}{T}\sum_{\ell = 1}^T (-1)^{A_{j,i_\ell}+1}V_{i_\ell}\in \mathcal{X}_j,$ we know that $\|\frac{1}{T}\sum_{\ell = 1}^T (-1)^{A_{j,i_\ell}+1}V_{i_\ell}\|_2\ge m.$
Hence,
$$
\|\widetilde{x}\|_2 \ge 
\|\frac{1}{T}\sum_{\ell = 1}^T (-1)^{A_{j,i_\ell}+1}\widetilde{V_{i_\ell}}\|_2 - 
\|\widetilde{x} -\frac{1}{T}\sum_{\ell = 1}^T (-1)^{A_{j,i_\ell}+1}\widetilde{V_{i_\ell}}\|_2\ge 
m(1-\epsilon/2) -
m\epsilon/2 = m(1-\epsilon). 
$$
Applying Lemma~\ref{lem:convexhullequivalentofmargin}, there exist unit vectors $\{\widetilde{U}_j\}_{j = 1}^N$ in dimension $O(\epsilon^{-4}m^{-2}\log n)$ such that 
$(-1)^{A_{ji}+1}\langle \widetilde{V_i}, \widetilde{U_j}\rangle\ge m(1-\epsilon)\quad \forall i,j.$ Clearly, this property remains true even if we replace $\{\widetilde{V_i}\}^n_{i = 1}$ by $\{\widetilde{V_i}/\|\widetilde{V_i}\|_2\}^n_{i = 1}$ as $\|\widetilde{V_i}\|_2\le 1 \quad \forall i.$\end{proof}

%% file: LowerBoundsv1.tex
\section{Impossibility Results: Proof of Theorem \ref{thm:mainlowerbound}}
\label{sec:Lower Bounds}
We need the following technical ingredients toward constructing large sphere packings. 
\begin{lemma}[\cite{foucart2010gelfand}]
\label{lem:largedisjointfamilies}
    Let $n, s \in \mathbb{N}$ with $s < n$. Then there exists a family $\mathcal{F}$ of subsets of $[n]$ with the following properties. (1) Every set in $\mathcal{F}$ consists of exactly $s$ elements. (2) For all $T, T' \in \mathcal{F}$ with $T \neq T'$, $|T\cap T'| < \frac{s}{2}.$ (3) $|\mathcal{F}| \geq  (\frac{n}{4s}  )^\frac{s}{2}.$
\end{lemma}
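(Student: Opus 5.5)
We prove Lemma~\ref{lem:largedisjointfamilies} with a greedy (Gilbert--Varshamov type) packing argument in the Johnson scheme. If $s > n/4$, the bound in (3) is at most $1$ (since $n/(4s)<1$), so any single $s$-subset works. Assume therefore $s \le n/4$. Build $\mathcal{F}$ greedily. Start with $\mathcal{F}=\emptyset$, and while some $s$-subset $T$ of $[n]$ satisfies $|T\cap T'|<s/2$ for all $T'\in\mathcal{F}$, add it. Properties (1) and (2) then hold by construction, and only the size bound (3) needs work.

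For (3), note that when the process stops, every $s$-subset of $[n]$ meets some member of $\mathcal{F}$ in at least $s/2$ elements. Hence
\[
\binom{n}{s} \le |\mathcal{F}|\cdot B,
\qquad\text{where } B \coloneqq \#\{S\in\tbinom{[n]}{s} : |S\cap T|\ge s/2\}
\]
for a fixed $s$-set $T$. We bound $B$ by choosing $t=\lceil s/2\rceil$ elements of $T$ that must lie in $S$, and then choosing the remaining $s-t$ elements arbitrarily from $[n]$. This gives
\[
B\le \binom{s}{\lceil s/2\rceil}\binom{n}{\lfloor s/2\rfloor}\le 2^s \binom{n}{\lfloor s/2\rfloor}.
\]

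It remains to verify
\[
\frac{\binom{n}{s}}{2^s\binom{n}{\lfloor s/2\rfloor}} \ge \Big(\frac{n}{4s}\Big)^{s/2}.
\]
Let $a=\lfloor s/2\rfloor$. Write $\binom{n}{s}/\binom{n}{a}=\prod_{i=a}^{s-1}\frac{n-i}{i+1}$, a product of $s-a=\lceil s/2\rceil$ factors. Each factor is at least $\frac{n-s}{s}\ge \frac{3n}{4s}$, using $s\le n/4$. So the ratio is at least $(3n/(4s))^{\lceil s/2\rceil}$, and dividing by $2^s=4^{s/2}$ gives roughly $(3n/(16 s))^{s/2}$. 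This is off from the target by a constant inside the base.

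That constant loss is the main obstacle: the crude $2^s$ factor costs too much. I would sharpen the count as follows.
\begin{itemize}[leftmargin=*]
\item Sum exactly, $B=\sum_{t\ge s/2}\binom{s}{t}\binom{n-s}{s-t}$. The terms with $t>s/2$ are dominated by the $t=\lceil s/2\rceil$ term, since each extra overlap costs a factor of about $s/(n-s)$. This gives $B\lesssim \binom{s}{\lceil s/2\rceil}\binom{n-s}{\lfloor s/2\rfloor}$.
\item Use $\binom{n}{s}\ge (n/s)^s$ and $\binom{n-s}{\lfloor s/2\rfloor}\le (2en/s)^{s/2}$.
\end{itemize}
With these the ratio becomes about $(n/s)^{s/2}/(2^s(2e)^{s/2})$, which is still too small. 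So instead I would follow the Foucart--Pajor--Rauhut--Ullrich approach. The greedy argument does not use the $s$-sets directly; it runs on the smaller ground set of subsets of $[n]$ viewed in blocks of size $\lfloor n/s\rfloor$, one element chosen per block, i.e.\ a code of length $s$ over an alphabet of size $q=\lfloor n/s\rfloor$. Two codewords then intersect in $|T\cap T'|$ = number of agreeing coordinates. The Hamming-ball count gives
\[
|\mathcal{F}|\ge \frac{q^s}{\binom{s}{s/2}q^{s/2}}\ge \Big(\frac{q}{4}\Big)^{s/2}\approx \Big(\frac{n}{4s}\Big)^{s/2},
\]
and the floor in $q$ is handled by adjusting constants or padding. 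This block-code version is the route I would commit to, with the bookkeeping of floors being the only remaining care.
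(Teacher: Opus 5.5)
\paragraph{There is a genuine gap: the block-code route you commit to does not give the stated bound.}
With $q=\lfloor n/s\rfloor$, your Hamming-ball count yields at best $|\mathcal{F}|\ge q^{\lceil s/2\rceil}/\binom{s}{\lceil s/2\rceil}\ge (q/4)^{s/2}$. Passing from $q$ to $n/s$ is not a bookkeeping detail. The loss is $(qs/n)^{s/2}$, which can be exponentially small in $s$. For example, take $s$ even and $n=9s/2$. Then $q=4$, and your estimate gives only $4^{s/2}/\binom{s}{s/2}=O(\sqrt{s})$, whereas the lemma asserts $(9/8)^{s/2}$. ``Adjusting constants'' proves a different statement. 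Since $q\ge n/(2s)$ for $n\ge 2s$, you do get $(n/(8s))^{s/2}$, which would suffice for the application in Theorem~\ref{thm:mainlowerbound}, but that is a weaker lemma. Recovering the constant $4$ would need a sharper ball estimate than the one you wrote, and you do not supply it. This block-code argument is also not the argument of \cite{foucart2010gelfand}; the paper only cites the lemma without proof.

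\paragraph{The route you abandoned works, and it is the cited proof; only your estimation was lossy.}
First, replace your ``$\lesssim$'' by an exact bound. Using $\sum_t\binom{s}{t}\le 2^s$, and monotonicity of $j\mapsto\binom{n-s}{j}$ for $j\le\lfloor s/2\rfloor\le (n-s)/2$ (valid since $n\ge 4s$), you get
\[
B=\sum_{t\ge s/2}\tbinom{s}{t}\tbinom{n-s}{s-t}\le 2^s\tbinom{n-s}{\lfloor s/2\rfloor}.
\]
Second, do not bound $\binom{n}{s}$ and $\binom{n-s}{\lfloor s/2\rfloor}$ separately; cancel factors instead:
\[
\frac{\binom{n}{s}}{\binom{n-s}{\lfloor s/2\rfloor}}
=\frac{n(n-1)\cdots(n-s+1)}{s(s-1)\cdots(\lfloor s/2\rfloor+1)\cdot(n-s)(n-s-1)\cdots(n-s-\lfloor s/2\rfloor+1)}
\ge\prod_{j=0}^{\lceil s/2\rceil-1}\frac{n-j}{s-j}\ge\Big(\frac{n}{s}\Big)^{\lceil s/2\rceil}.
\]
Here the $\lfloor s/2\rfloor$ smallest numerator factors $n-s+1,\dots,n-\lceil s/2\rceil$ each dominate one of $n-s,\dots,n-s-\lfloor s/2\rfloor+1$. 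Each remaining ratio satisfies $\frac{n-j}{s-j}\ge\frac{n}{s}$ because $n\ge s$. Dividing by $2^s=4^{s/2}$ and using $n/s\ge 1$ gives $|\mathcal{F}|\ge\binom{n}{s}/B\ge (n/(4s))^{s/2}$.

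The point is that with $\binom{n-s}{\cdot}$ all the large denominator factors are at most $n-s$. They can therefore be absorbed by the small numerator factors, leaving $n,n-1,\dots$ to pair with $s,s-1,\dots$ at no loss. With $\binom{n}{\cdot}$, as in your first attempt, or with separate Stirling-type bounds, this cancellation is lost.
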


\begin{lemma}\label{lem:reductiontosparselowerbound}
    Suppose that $n \ge 3k.$ Let $\{V_i\}_{i =1}^n$, $\{U_j\}_{j = 1}^N$ be any dimension-$d$ margin-$m$ embedding of $\ksparsen.$ Then, for any $x \in \mathbb{R}^n$ with $\|x\|_0 \leq k$, it holds that 
    $\|Vx\|_2 \geq m\|x\|_1.$
\end{lemma}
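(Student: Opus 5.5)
The plan is to exhibit a single query embedding $U_j$ that acts as a dual certificate for $x$, and then apply Cauchy--Schwarz. Write $V\in\mathbb{R}^{d\times n}$ for the matrix with columns $V_i$, so that $Vx=\sum_i x_iV_i$. If $x=0$ there is nothing to prove, so assume $x\neq 0$. Split the support of $x$ as $P=\{i: x_i>0\}$ and $Q=\{i:x_i<0\}$. These sets are disjoint and satisfy $|P|+|Q|\le k$.

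\textbf{Step 1 (choosing the query).} I will find a $k$-subset $S\subseteq[n]$ with $P\subseteq S$ and $S\cap Q=\emptyset$. Start from $P$ and add $k-|P|$ further indices taken from $[n]\setminus(P\cup Q)$. This is possible because
\[
|[n]\setminus(P\cup Q)|\ge n-k\ge 2k\ge k-|P|,
\]
using $n\ge 3k$ (in fact $n\ge 2k$ would suffice here). Since $\ksparsen$ contains every $k$-sparse row, there is a query $j$ whose row has support exactly $S$. The margin-$m$ property (with relative bias $0$) then gives
\[
\langle U_j,V_i\rangle\ge m \text{ for } i\in P\subseteq S, \qquad \langle U_j,V_i\rangle\le -m \text{ for } i\in Q\subseteq[n]\setminus S.
\]

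\textbf{Step 2 (lower bound via the certificate).} Because $U_j$ is a unit vector, Cauchy--Schwarz gives
\[
\|Vx\|_2\ \ge\ \langle U_j, Vx\rangle=\sum_{i\in P}x_i\langle U_j,V_i\rangle+\sum_{i\in Q}x_i\langle U_j,V_i\rangle .
\]
Each term in the first sum is at least $m|x_i|$, since $x_i>0$ and $\langle U_j,V_i\rangle\ge m$. Each term in the second sum is also at least $m|x_i|$, since both $x_i$ and $\langle U_j,V_i\rangle$ are negative. Summing the two sums yields $\|Vx\|_2\ge m\|x\|_1$.

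Equivalently, one could normalize $x/\|x\|_1$, view $Vx/\|x\|_1$ as a point of $\conv(\{\sigma_iV_i\})$ with the sign pattern $\sigma$ of row $j$, and invoke the ($\Longrightarrow$) direction of Lemma~\ref{lem:convexhullequivalentofmargin}. The direct computation above is the same argument written out.

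The only step needing any care is Step 1, namely that a single row of $\ksparsen$ is simultaneously positive on all of $P$ and negative on all of $Q$. This is where the hypotheses $\|x\|_0\le k$ and $n\ge 3k$ are used. Once that row is in hand, the rest is a one-line inequality.
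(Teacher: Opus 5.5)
Your proof is correct and is essentially the paper's argument: the paper normalizes to $\|x\|_1=1$, picks a row $j$ of $\ksparsen$ whose sign pattern $(-1)^{A_{ji}+1}$ matches that of $x$ on its support, and applies the ($\Longrightarrow$) direction of Lemma~\ref{lem:convexhullequivalentofmargin}, which is exactly your Cauchy--Schwarz computation. Your explicit construction of the $k$-subset $S$ supplies the existence of that row, which the paper leaves implicit, and you correctly observe that $n\ge 2k$ already suffices for this step.
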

\begin{proof} This follows immediately from Lemma~\ref{lem:convexhullequivalentofmargin}. By homogeneity, it is enough to prove the fact when $\|x\|_1 = 1.$ Now, take some $j$ such that $(-1)^{A_{ji}+1}$ and $x_i$ have the same sign whenever $x_i\neq 0.$ Then, $Vx\in \conv(\{(-1)^{A_{ji}+1}V_i\}_{i = 1}^n)$ and the result follows.
\end{proof}
\begin{proof}[Proof of Theorem \ref{thm:mainlowerbound}]
    Let $V$ be as in Lemma \ref{lem:reductiontosparselowerbound}. We first show that $m \sqrt{k} \leq 1.$ We already know even the sharper inequality $m\le \frac{1+o_k(1)}{2\sqrt{k}}$ 
    from Corollary~\ref{cor:sqrtkmargin} but will nevertheless give a proof as the proof technique will be useful later on.
    Fix any set $T \subset [n]$ with $|T|=k$, and let $\epsilon$ be a uniformly random vector in  $\{\pm 1\}^T.$ Define $x = \sum_{i \in T}\epsilon_i e_i$. By Lemma \ref{lem:reductiontosparselowerbound}, we have
$\|Vx\|_2 = \|\sum_{i \in T}\epsilon_i V_i\|_2 \geq m\|x\|_1 = mk.$
On the other hand, 
$$
\expect
\Big[\|\sum_{i \in T}\epsilon_i V_i\|_2^2
\Big] = 
\expect
\Big[
\sum_{i \in T}\|V_i\|_2^2 + \sum_{\substack{i,\ell \in T\\ i\neq \ell}}\epsilon_i\epsilon_\ell \langle V_i,V_\ell\rangle
\Big]
= \sum_{i \in T}\|V_i\|_2^2  = k,$$
where we used that $\epsilon_i$ and $\epsilon_\ell$ are independently chosen from $\pm 1$ for $i \neq \ell$. Thus, there exists an $\epsilon \in \{\pm1\}^T$ such that $\|\sum_{i \in T}\epsilon_iV_i\|_2^2\le k$. Combining with $\|\sum_{i \in T}\epsilon_iV_i\|_2\ge mk$, we get $m^2k^2\le k$, and therefore $m\sqrt{k}\le 1$. Assume $k \geq 8$ as otherwise the theorem holds after decreasing $C$.

Now, we will construct the large sphere packing. Set $s\coloneqq \lfloor \frac{k}{8}\rfloor$. By Lemma \ref{lem:largedisjointfamilies} there exists a set $\mathcal{F} \subset \binom{[n]}{s}$ such that for any $T \neq T'$ with $T, T' \in \mathcal{F}$, $|T\cap T'| < \frac{s}{2}$ and $|\mathcal{F}| \geq  (\frac{n}{4s}  )^\frac{s}{2}$. 

For a $T \in \mathcal{F}$, define the sign vector $\sigma^T \in \{\pm 1\}^T$ such that $ \|\sum_{i \in T} \sigma_i^T V_i  \|_2$ is minimized. Denote $y_T \coloneqq \sum_{i \in T} \sigma_i^T V_i.$
By the same argument as above, $\|y_T\|_2^2 = \|\sum_{i \in T} \sigma_i^T V_i\|^2_2\le s.$  In other words, $y_T \in B_d(0,\sqrt{s})$ for all $T \in \mathcal{F}$ where $B_d(0,r)$ is the $L_2$ ball of radius $r$ centered at 0.

Define $x_T\coloneqq\sum_{i \in T}\sigma_i^T e_i$ so that $Vx_T = y_T$. Consider any $T, T' \in \mathcal{F}$ with $T \neq T'$. Since $|T \cap T'| < \frac{s}{2}$ and $s \leq \frac{k}{8}$, it follows that $\|x_{T} - x_{T'}\|_0 \leq k$. Hence, by Lemma \ref{lem:reductiontosparselowerbound} we get 
$$\|V(x_T - x_{T'})\|_2 = \|Vx_T - Vx_{T'}\|_2 \geq m\|x_T - x_{T'}\|_1.$$
Moreover, as every coordinate of $x_T, x_{T'}$ is in $\{0, \pm 1\},$
$$\|x_T-x_{T'}\|_1 \ge |T\triangle T'| = |T|+|T'|-2|T\cap T'| > 2s-s = s.$$
Combining the last two inequalities, we obtain 
$$
\|y_T - y_{T'}\|_2 = 
\|V(x_T - x_{T'})\|_2 = \|Vx_T - Vx_{T'}\|_2 \geq m\|x_T - x_{T'}\|_1 \geq ms.$$

Thus, the vectors $\{y_T\}_{T \in \mathcal{F}}$ are pairwise $ms$ separated in the Euclidean ball $B_d(0, \sqrt{s})$. So, the vectors $\tilde{y}_T\coloneqq \frac{y_T}{\sqrt{s}}$ are pairwise $m\sqrt{s}$ separated in the ball $B_d(0,1)$. By a standard volume bound (for example  \cite[Corollary 4.2.11]{Vershynin2026}), it follows that
$|\mathcal{F}| \leq  (1 + \frac{2}{m\sqrt{s}}  )^d.$ Using $|\mathcal{F}| \geq  (\frac{n}{4s}  )^\frac{s}{2},$ 
$$ (\frac{n}{4s}  )^\frac{s}{2} \leq |\mathcal{F}| \leq  (1 + \frac{2}{m\sqrt{s}}  )^d \implies d \log  (1 + \frac{2}{m\sqrt{s}}  ) \geq \frac{s}{2}\log{\frac{n}{4s}}.$$

By compactness, there exists an embedding attaining margin $\maxmargin(d,\ksparsen)$. Taking $m=\maxmargin(d,\ksparsen)$ in the preceding argument and using $s=\Theta(k)$, the conclusion follows.
\end{proof}

%% file: Construction.tex
\section{Construction: Proof of \texorpdfstring{Theorem~\ref{thm:main_construction}}{Main Construction Theorem}}
\label{sec:proofofconstruction}
The construction will be based on lifts of random spherical vectors. To this end, we will need the following standard facts about the distribution of random spherical vectors.

\begin{fact}[\cite{frankl1990some}]
\label{fact:betadistandpsherical}
Let $z= (z_1, \ldots, z_r)$ be a uniformly random vector on $\mathbb{S}^{r-1}.$ Then, for any $1\le s \le r,$ the distribution of $z_1^2 + \ldots + z_s^2$ is the distribution $\Betadist(s/2, (r-s)/2)$ with density 
$$
f_{s/2, (r-s)/2}(x) = \frac{\Gamma(\frac{r}{2})}{\Gamma(\frac{s}{2})\Gamma(\frac{r-s}{2})}x^{\frac{s}{2}-1}(1-x)^{\frac{r-s}{2}-1}.
$$
\end{fact}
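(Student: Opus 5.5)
The plan is to use the rotational invariance of the uniform measure on $\mathbb{S}^{r-1}$ together with the Gaussian representation of a uniform spherical vector. Write $z = g/\|g\|_2$ with $g=(g_1,\ldots,g_r)$ a vector of i.i.d. standard Gaussians. Then
$$
z_1^2+\cdots+z_s^2 = \frac{X}{X+Y}, \qquad X\coloneqq g_1^2+\cdots+g_s^2, \quad Y\coloneqq g_{s+1}^2+\cdots+g_r^2 .
$$
The variables $X$ and $Y$ are independent, with $X\sim\chi^2_s=\mathsf{Gamma}(s/2,1/2)$ and $Y\sim\chi^2_{r-s}=\mathsf{Gamma}((r-s)/2,1/2)$. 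First we justify the representation: the law of $g/\|g\|_2$ is rotation invariant and supported on the sphere, so by uniqueness of the rotation-invariant probability measure it is the uniform law. Independence of $X$ and $Y$ holds because they are functions of disjoint sets of coordinates.

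The second step is the standard beta--gamma computation. Given independent $X\sim\mathsf{Gamma}(a,\theta)$ and $Y\sim\mathsf{Gamma}(b,\theta)$, change variables to $(W,S)=(X/(X+Y),\,X+Y)$, so that $x=ws$, $y=(1-w)s$ and the Jacobian is $s$. The joint density becomes
$$
\frac{\theta^{a+b}}{\Gamma(a)\Gamma(b)}\, w^{a-1}(1-w)^{b-1}\, s^{a+b-1} e^{-\theta s}.
$$
This factorizes in $w$ and $s$. Integrating out $s$ produces $\Gamma(a+b)\theta^{-(a+b)}$, and the marginal density of $W$ is
$$
\frac{\Gamma(a+b)}{\Gamma(a)\Gamma(b)}\,w^{a-1}(1-w)^{b-1}.
$$
Substituting $a=s/2$ and $b=(r-s)/2$ gives exactly the stated $f_{s/2,(r-s)/2}$.

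The only subtle point is the edge case $s=r$. There $Y\equiv 0$ and $z_1^2+\cdots+z_r^2=1$ almost surely, so the "$\Betadist(r/2,0)$" law must be read as the point mass at $1$, the degenerate limit. Apart from this, the argument is routine; the remaining steps just need the Jacobian and the Gamma integral written out carefully.
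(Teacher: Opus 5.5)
Your argument is correct and complete: writing $z=g/\|g\|_2$ with $g$ standard Gaussian reduces the claim to the ratio $X/(X+Y)$ of independent $\chi^2_s$ and $\chi^2_{r-s}$ variables, and your change of variables (Jacobian $s$) and Gamma integral are right. The paper gives no proof of this fact and only cites Frankl--Maehara; your argument is the standard proof. Your remark that $s=r$ is degenerate is apt, since the displayed density then involves $\Gamma(0)$, but it is harmless because the paper only applies the fact with $2\le s=k\le r-2$.
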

An immediate corollary is the following tail estimate which we prove in Appendix~\ref{appendix:MissingDetalsProofOfConstruction}.
\begin{proposition}
\label{prop:betaconcentration}
Let $X\sim \Betadist(s/2, (r-s)/2)$ for some $2\le s \le r-2$ and $\delta \in(0,1].$ Then,
$$
\prob[X\le \delta]\le 
\frac{\Gamma(\frac{r}{2})}{\Gamma(\frac{s}{2})\Gamma(\frac{r-s}{2})}\delta^{s/2}.
$$
\end{proposition}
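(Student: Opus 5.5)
The plan is to integrate the density from Fact~\ref{fact:betadistandpsherical} directly over $[0,\delta]$ and bound the factor $(1-x)^{\frac{r-s}{2}-1}$ crudely. Write the normalizing constant as
\[
c \coloneqq \frac{\Gamma(\frac{r}{2})}{\Gamma(\frac{s}{2})\Gamma(\frac{r-s}{2})}.
\]
Then
\[
\prob[X\le \delta] = c\int_0^\delta x^{\frac{s}{2}-1}(1-x)^{\frac{r-s}{2}-1}\,dx .
\]

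First I use the hypothesis $s\le r-2$. It gives $\frac{r-s}{2}-1\ge 0$, so $(1-x)^{\frac{r-s}{2}-1}\le 1$ for every $x\in[0,1]$. The integral is therefore at most
\[
\int_0^\delta x^{\frac{s}{2}-1}\,dx = \frac{2}{s}\,\delta^{s/2}.
\]
The other hypothesis, $s\ge 2$, makes $\frac{s}{2}-1\ge 0$, so the integrand is bounded and the integral converges without issue. It also gives $\frac{2}{s}\le 1$. Multiplying by $c$ yields
\[
\prob[X\le\delta]\le c\cdot\frac{2}{s}\,\delta^{s/2}\le c\,\delta^{s/2},
\]
which is the claimed bound. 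The assumption $\delta\le 1$ is only needed so that $[0,\delta]$ lies inside the support $[0,1]$ of the density.

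There is no real obstacle here. The only point needing care is checking that both exponent conditions, $\frac{r-s}{2}-1\ge 0$ and $\frac{2}{s}\le 1$, follow from $2\le s\le r-2$. The bound is lossy by the factor $\frac{2}{s}$, but the statement does not need the sharper version.
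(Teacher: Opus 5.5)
Your proof is correct and essentially the same as the paper's. Both drop the factor $(1-x)^{\frac{r-s}{2}-1}\le 1$ and then bound the remaining integral; the only cosmetic difference is that the paper bounds $x^{\frac{s}{2}-1}\le\delta^{\frac{s}{2}-1}$ pointwise on $[0,\delta]$, whereas you integrate exactly and then use $\frac{2}{s}\le 1$.
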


\begin{proof}[Proof of Theorem~\ref{thm:main_construction}] Let $r = \lfloor \frac{\sqrt{8d-7}-1}{2}\rfloor$ so that $d\ge \frac{r(r+1)}{2} + 1$ and $r\ge k+2.$ Define the map $\vectorize:\mathbb{R}^{r\times r}\longrightarrow \mathbb{R}^{r(r+1)/2}$ as the vector $\vectorize(M)$ with entries $(M_{ii})_{i = 1}^r$ and $(\sqrt{2}M_{ij})_{i<j}.$ We use the fact that for two symmetric matrices $M_1, M_2,$ it holds that $\langle \vectorize(M_1), \vectorize(M_2)\rangle = \langle M_1, M_2\rangle_F.$

Now, we can get to the actual construction. Let $a_1, a_2, \ldots, a_n$ be i.i.d. vectors uniformly drawn from $\mathbb{S}^{r-1}.$ Let $\tV_i = \vectorize(a_ia_i^T).$ Let $P_j\in \mathbb{R}^{r\times r}$ be the symmetric matrix projecting onto $\mathsf{span}(\{a_i\; : \; A_{ji} = 1\}).$ Let $\tU_j = \vectorize(P_j).$ Note that for every $i,j,$ $\|\tV_i\|_2 = \|a_ia_i^T\|_F = 1$ and 
$\|\tU_j\|_2 = \|P_j\|_F = \sqrt{k}$ as it is a projection matrix on a subspace of dimension $k.$ The actual $U_j,V_i$ that we will construct are simple linear transformations of $\widetilde{U_j}, \widetilde{V_i}.$ First, however, we analyze the inner products between the constructed $\tU_j$ and $\tV_i.$ 

If $A_{ji} = 1,$ then $\langle \tV_i, \tU_j\rangle = \langle P_j, a_ia_i^T\rangle = \langle P_ja_i, a_i\rangle  = 
\langle a_i, a_i\rangle = 1$ as $a_i \in \mathsf{span}(\{a_i\; : \; A_{ji} = 1\}).$ 

If $A_{ji} = 0,$
$\langle \tV_i, \tU_j\rangle = \langle P_j, a_ia_i^T\rangle = \langle P_ja_i, a_i\rangle  = 
\langle P_ja_i, P_ja_i\rangle = \|P_ja_i\|_2^2 \sim \Betadist(k/2, (r-k)/2)$ by Fact~\ref{fact:betadistandpsherical} since $P_j$ is a projection to a $k$-dimensional subspace of $\mathbb{R}^{r}$ independent of $a_i.$ Therefore, $1 - \|P_ja_i\|_2^2 \sim \Betadist((r-k)/2, k/2).$ 

Now, let $\Delta = \Bigg(\frac{\Gamma(\frac{k}{2})\Gamma(\frac{r-k}{2})}{2N(n-k)\Gamma(\frac{r}{2})}\Bigg)^{\frac{2}{r-k}}.$ By Proposition~\ref{prop:betaconcentration},
$$
\prob[\langle \tV_i, \tU_j\rangle\ge 1-\Delta ] = 
\prob[1 - \|P_ja_i\|_2^2 \le \Delta]\le 
\frac{\Gamma(\frac{r}{2})}{\Gamma(\frac{k}{2})\Gamma(\frac{r-k}{2})}\Delta^{\frac{r-k}{2}}= \frac{1}{2N(n-k)}.
$$
By union bound, it follows that there exists a realization of $a_1, a_2, \ldots, a_n$ such that 
$\langle \tV_i, \tU_j\rangle \le 1 - \Delta$ for every $j,i$ such that $A_{ji} = 0.$

The final construction in dimension $\frac{r(r+1)}{2} + 1\le d$ is formed by setting $\beta = \sqrt{\frac{2\sqrt{k}}{2(\sqrt{k}+1) - \Delta}}$ and 
\begin{align*}
 V_i = (\beta\tV_i, \sqrt{1-\beta^2}),
 U_j = (\frac{\beta}{\sqrt{k}}\tU_j, -\sqrt{1-\beta^2}).
\end{align*}
As $\|\tV_i\|_2 = 1, \|\tU_j\|_2 = \sqrt{k},$ it follows that $\|U_j\|_2 = 1, \|V_i\| = 1.$ Using $\langle \tV_i, \tU_j\rangle = 1,$ we can check that whenever $A_{ji} = 1,$
$\langle U_j, V_i\rangle = \frac{\Delta}{2(\sqrt{k}+1) - \Delta}.$ Whenever $A_{ji} = 0,$ using that 
$\langle \tV_i, \tU_j\rangle \le 1 - \Delta,$ we conclude that 
$\langle U_j, V_i\rangle \le - \frac{\Delta}{2(\sqrt{k}+1) - \Delta}.$
All that is left to show is the asymptotics of $\Delta.$ This is a standard corollary of Stirling's approximation which we prove for completeness in Appendix~\ref{appendix:MissingDetalsProofOfConstruction}.
\end{proof}

%% file: Limitations.tex
\section{Limitations, Broader Impact, and LLM Usage Statement}
\label{sec:limitations}
One gap in our work is that in the regime $d= o(k\log(n/k)),$ the lower bound on $\maxmargin(d, \ksparsen)$ in Theorem~\ref{thm:main_construction} and upper bound in Theorem~\ref{thm:mainlowerbound} do not match. A second limitation comes from the fact that our current experiments are in the free embedding model rather than on practical datasets and architectures. We believe that both of these are exciting directions for future study. 

We are not aware of any direct negative societal impact of the work. 

We used LLMs for completing some proofs, finding related works, generating code, and editing. We wrote the current manuscript without any LLM usage and only then used
LLMs for editing. We have verified all code produced by LLMs. More detail about LLM usage is given in Appendix~\ref{sec:llmusage}.  

%% file: SimpleAppendix.tex
\section{Proofs of Preliminary Claims}
\label{appendix:somesimpleclaims}
\subsection{Relative Bias and Margin}

\begin{proposition}[On Relative Bias and Margin]
\label{prop:rb}
Suppose that $A\in \{0,1\}^{N\times n}$ and there exists a margin-$m$ relative bias-$\tau$ embedding of $A$ in dimension $d.$ Then, there exists a relative bias-$0$ margin- $\frac{m}{{1+|\tau|}}\ge\frac{m}{{2}}$ embedding of $A$ in dimension $d+1.$ 
\end{proposition}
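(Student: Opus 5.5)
We absorb the threshold $\tau$ into one extra coordinate and then renormalize. Start from unit vectors $\{U_j\}_{j=1}^N,\{V_i\}_{i=1}^n\subseteq\mathbb{R}^d$ that form a margin-$m$, relative-bias-$\tau$ embedding of $A$. We may assume $|\tau|\le 1$: otherwise one of the two conditions in \eqref{eq:marginretrieval} is unsatisfiable for unit vectors, since $\langle U_j,V_i\rangle\in[-1,1]$, unless $A$ is constant. A constant $A$ is embedded with margin $1\ge m/(1+|\tau|)$ by taking all $V_i$ equal and all $U_j=\pm V_i$. With $|\tau|\le1$, the stated inequality $\frac{m}{1+|\tau|}\ge\frac m2$ is immediate.

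Define the lifted vectors in $\mathbb{R}^{d+1}$ by
\[
\widehat V_i=\tfrac{1}{\sqrt2}(V_i,1),\qquad \widehat U_j=\tfrac{1}{\sqrt{1+\tau^2}}(U_j,-\tau).
\]
Both are unit vectors. Their inner product is
\[
\langle \widehat U_j,\widehat V_i\rangle=\frac{\langle U_j,V_i\rangle-\tau}{\sqrt{2(1+\tau^2)}} .
\]
So it is at least $m/\sqrt{2(1+\tau^2)}$ when $A_{ji}=1$ and at most $-m/\sqrt{2(1+\tau^2)}$ when $A_{ji}=0$. This gives a relative-bias-$0$ embedding with margin $m/\sqrt{2(1+\tau^2)}$.

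The main obstacle is that this margin is slightly weaker than the claimed $m/(1+|\tau|)$. We have $\sqrt{2(1+\tau^2)}\ge 1+|\tau|$, with equality only at $|\tau|=1$.

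To recover the exact constant we use an unbalanced split. Set
\[
\widehat V_i=(\alpha V_i,\sqrt{1-\alpha^2}),\qquad \widehat U_j=(\gamma U_j,-\sqrt{1-\gamma^2}).
\]
We choose $\alpha,\gamma\in[0,1]$ so that the threshold moves to $0$, which requires
\[
\sqrt{(1-\alpha^2)(1-\gamma^2)}=\alpha\gamma\tau .
\]
The resulting margin is $\alpha\gamma m$, so we maximize $\alpha\gamma$ subject to this constraint. The constraint reads $(1-\alpha^2)(1-\gamma^2)=\alpha^2\gamma^2\tau^2$.

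For $\tau\ge0$, substitute $p=\alpha^2$ and $q=\gamma^2$. Eliminating $q$ turns this into a one-variable calculus problem. The maximum of $pq$ is at $p=q$, where $1-p=p|\tau|$, i.e. $p=\frac1{1+|\tau|}$, so $\alpha\gamma=p=\frac1{1+|\tau|}$. This yields margin exactly $\frac{m}{1+|\tau|}$.

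For $\tau<0$, the same construction works after flipping the sign of the last coordinate of $\widehat U_j$.

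Checking the two inequalities of Definition~\ref{def:maindefinition} is then routine: $\langle\widehat U_j,\widehat V_i\rangle=\alpha\gamma(\langle U_j,V_i\rangle-\tau)$.
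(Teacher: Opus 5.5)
Your proof is correct and follows the paper's approach: with $\alpha=\gamma=(1+|\tau|)^{-1/2}$, your lift $\widehat V_i=(\alpha V_i,\sqrt{1-\alpha^2})$, $\widehat U_j=(\gamma U_j,\mp\sqrt{1-\gamma^2})$ is exactly the paper's construction, and it gives $\langle \widehat U_j,\widehat V_i\rangle=(\langle U_j,V_i\rangle-\tau)/(1+|\tau|)$. The balanced-split detour and the (unproved but true) optimality of $p=q$ serve only as motivation, and because this lift is valid for every $\tau$, the reduction to $|\tau|\le 1$ is needed only for the comparison with $m/2$.
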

\begin{proof}
Suppose that $\{U_j\}_{j = 1}^N, \{V_i\}_{i = 1}^n$ is a margin-$m,$ relative bias-$\tau$ embedding of $A.$ Then, $\langle U_j, V_i\rangle \ge \tau+m$ whenever $A_{ji} = 1$ and  
$\langle U_j, V_i\rangle \le \tau-m$ otherwise. 

Now, let $\widetilde{V_i}\coloneqq (\frac{1}{\sqrt{1+|\tau|}}V_i, \frac{\sqrt{|\tau|}}{\sqrt{1 + |\tau|}})$ and $\widetilde{U_j}\coloneqq (\frac{1}{\sqrt{1+|\tau|}}U_j, -\textsf{sign}(\tau)\frac{\sqrt{|\tau|}}{\sqrt{1 + |\tau|}}).$ One can check that $\{\widetilde{U_j}\}_{j = 1}^N, \{\widetilde{V_i}\}_{i = 1}^n$ are unit norm and $\langle \widetilde{U_j}, \widetilde{V_i}\rangle \ge \frac{m}{{1+|\tau|}}$ whenever $A_{ji} = 1$ and  
$\langle \widetilde{U_j}, \widetilde{V_i}\rangle \le -\frac{m}{{1+|\tau|}}$ otherwise. Finally, note that $\tau \le \tau+m \le \langle U_j, V_i\rangle\le \|U_j\|_2\times \|V_i\|_2 = 1$ whenever $A_{ji} = 1.$ Similarly, $\tau\ge -1.$
\end{proof}

\subsection{Proof of Lemma~\texorpdfstring{\ref{lem:convexhullequivalentofmargin}}{Convex Hull Lemma}}
$(\Longrightarrow)$ Take any $x\in \conv(\{\sigma_i V_i\}_{i = 1}^n).$ By definition, for some non-negative $\{\alpha_i\}_{i = 1}^n$ with sum 1, $x = \sum_{i= 1}^n\alpha_i \sigma_iV_i.$ As $U$ is unit and $\sigma_i \langle U, V_i\rangle\ge m \quad \forall i,$ it follows that 
$$\|x\|_2\ge \langle U, x\rangle  =
\langle U, \sum_{i= 1}^n\alpha_i \sigma_iV_i\rangle
= \sum_{i= 1}^n\alpha_i\sigma_i
\langle U, V_i\rangle\ge 
\sum_{i= 1}^n\alpha_i m = m.
$$
$(\Longleftarrow)$ Take $U_1$ to be the vector of minimal norm inside $\conv(\{\sigma_i V_i\}_{i = 1}^n).$ We know that $\|U_1\|_2= m.$ Let $U'\coloneqq \frac{U_1}{\|U_1\|_2}.$ We claim that the unit vector $U'$ satisfies the desired property. For the sake of contradiction, suppose that this is not true. 
Then, there exists some $i$ such that $\langle \sigma_i V_i, U'\rangle <m.$ Equivalently, 
$\langle \sigma_i V_i, U_1\rangle <m^2 = \|U_1\|^2_2.$ Thus, 
$\langle \sigma_i V_i- U_1, U_1\rangle<0.$ This, however, means that for some sufficiently small $\alpha>0,$ it holds that the vector $\alpha \sigma_i V_i + (1-\alpha)U_1,$ which is inside $\conv(\{\sigma_i V_i\}_{i = 1}^n),$ has smaller norm than $\|U_1\|_2 = m,$ which is a contradiction. Indeed, note that 
\begin{align*}
\frac{d}{d\alpha}\Bigg|_{\alpha = 0} \|\alpha \sigma_i V_i + (1-\alpha)U_1\|^2_2 = 2\langle \sigma_i V_i - U_1, U_1\rangle <0.\qedhere
\end{align*}



%% file: AppendixMissingConstruction.tex
\section{Missing Details from \texorpdfstring{Section~\ref{sec:proofofconstruction}}{Proof of Construction}}
\label{appendix:MissingDetalsProofOfConstruction}
\subsection{Proof of Proposition~\texorpdfstring{\ref{prop:betaconcentration}}{Beta Concentration}}
   \begin{align*}
       & \prob[X\le \delta] = \int_0^\delta f_{s/2, (r-s)/2}(x)dx = 
   \frac{\Gamma(\frac{r}{2})}{\Gamma(\frac{s}{2})\Gamma(\frac{r-s}{2})}\int_0^\delta x^{\frac{s}{2}-1}(1-x)^{\frac{r-s}{2}-1} dx\\
   &\le 
   \frac{\Gamma(\frac{r}{2})}{\Gamma(\frac{s}{2})\Gamma(\frac{r-s}{2})}\int_0^\delta x^{\frac{s}{2}-1}dx\le 
   \frac{\Gamma(\frac{r}{2})}{\Gamma(\frac{s}{2})\Gamma(\frac{r-s}{2})}\int_0^\delta \delta^{\frac{s}{2}-1}dx\le 
   \frac{\Gamma(\frac{r}{2})}{\Gamma(\frac{s}{2})\Gamma(\frac{r-s}{2})}
   \delta^{s/2}.\qedhere
   \end{align*}
\subsection{Gamma Function Asymptotics}
To finish the asymptotics of $\Delta$ from Section~\ref{sec:proofofconstruction}, all we need to show is that for real numbers $x,y>1/2,$ with $
x=\frac{k}{2}, y=\frac{r-k}{2},$ the following binomial-type approximation holds. 

\begin{lemma}
Suppose that $x,y\ge 1/2$ are real numbers. Then,
$$
\Bigg(\frac{\Gamma(x)\Gamma(y)}{\Gamma(x+y)}\Bigg)^{1/y} = \Theta\Big(\frac{y}{x+y}\Big).
$$
\end{lemma}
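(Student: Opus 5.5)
We need to show $\big(B(x,y)\big)^{1/y} = \Theta\big(\frac{y}{x+y}\big)$ for $x,y\ge 1/2$, where $B(x,y)=\Gamma(x)\Gamma(y)/\Gamma(x+y)$. The approach is to take logarithms, apply Stirling's formula with explicit error, and show that $\frac1y\log B(x,y) = \log\frac{y}{x+y} + O(1)$ uniformly.

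\textbf{Step 1: Stirling with uniform error.} Write $\log\Gamma(z) = (z-\tfrac12)\log z - z + \tfrac12\log(2\pi) + \theta(z)$, where $|\theta(z)|\le C$ for all real $z\ge 1/2$. This holds because $\Gamma$ is continuous and positive on $[1/2,2]$, and Binet's bound $0<\theta(z)\le 1/(12z)$ covers larger $z$. Substituting,
\[
\log B(x,y) = (x-\tfrac12)\log x + (y-\tfrac12)\log y - (x+y-\tfrac12)\log(x+y) + \tfrac12\log(2\pi) + O(1).
\]
The linear terms $-x-y+(x+y)$ cancel. Regrouping gives
\[
\log B(x,y) = x\log\frac{x}{x+y} + y\log\frac{y}{x+y} + \tfrac12\log\frac{x+y}{xy} + O(1).
\]

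\textbf{Step 2: Bound the leftover terms after dividing by $y$.} The main term $y\log\frac{y}{x+y}$ gives exactly $\log\frac{y}{x+y}$ after division. It remains to show that $x\log\frac{x}{x+y}$ and $\tfrac12\log\frac{x+y}{xy}$, each divided by $y$, are $O(1)$ uniformly.

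For the first, set $t=y/x>0$. Then
\[
\frac{x}{y}\log\frac{x}{x+y} = -\frac{1}{t}\log(1+t) \in [-1,0),
\]
since $\log(1+t)\le t$.

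For the second, note that $\frac{x+y}{xy} = \frac1x+\frac1y \le 4$ because $x,y\ge 1/2$. So the term is at most $\tfrac12\log 4$ above. Below it could be very negative: $-\tfrac12\log\frac{xy}{x+y}$ with $\frac{xy}{x+y}\le \min(x,y)\le y$. After dividing by $y$, we get $\ge -\frac{\log y}{2y}$, which is bounded for $y\ge 1/2$. Likewise, $\frac{O(1)}{y}=O(1)$ since $y\ge1/2$.

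\textbf{Step 3: Conclude.} Exponentiating gives $B(x,y)^{1/y} = e^{O(1)}\cdot\frac{y}{x+y}$, which is the claimed $\Theta$ with absolute constants.

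The main obstacle is ensuring every error term is uniform in both variables, in particular the $\frac{1}{y}\log\frac{x+y}{xy}$ term when $x\gg y$ or $y\gg x$. The reduction to $\min(x,y)\le y$ above handles both regimes. If a cleaner route is needed, one can instead use the integral $B(x,y)=\int_0^1 u^{x-1}(1-u)^{y-1}du$ and bound it directly: restrict to the region near $u\approx x/(x+y)$ for the lower bound, and use $u^{x-1}(1-u)^{y-1}\le$ its maximum for the upper bound. This yields the same $\Theta$ estimate.
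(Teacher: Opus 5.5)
Your proof is correct and follows essentially the same route as the paper: uniform Stirling for $z\ge 1/2$, then regrouping so that the main factor $\frac{y}{x+y}$ is multiplied by $(1+y/x)^{-x/y}$ and $\bigl(\frac{x+y}{xy}\bigr)^{1/(2y)}$, and finally bounding the latter using $\frac1x+\frac1y\le 4$ from above and $-\frac{\log y}{2y}\ge -\frac{1}{2e}$ from below. Because you work additively in logarithms, you also make explicit that the Stirling error contributes only $O(1)/y=O(1)$ after dividing by $y\ge 1/2$, a step the paper leaves implicit when it raises $\Theta(\cdot)$ to the power $1/y$.
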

\begin{proof}
By Stirling's formula, uniformly for \(z\ge \tfrac12\),
$
\Gamma(z)=\Theta\big( z^{\,z-\frac12}e^{-z}\big).
$
Hence
\[
\left(\frac{\Gamma(x)\Gamma(y)}{\Gamma(x+y)}\right)^{1/y}
=\Theta
\left(\frac{x^{x-\frac12}y^{y-\frac12}}{(x+y)^{x+y-\frac12}}\right)^{1/y}
=
\frac{y}{x+y}
\left(1+\frac{y}{x}\right)^{-x/y}
\left(\frac{x+y}{xy}\right)^{1/(2y)}.
\]

Denote \(t=x/y>0\). Then,
\[
\left(1+\frac{y}{x}\right)^{-x/y}=(1+t^{-1})^{-t}=\Theta(1),
\]
since \(1\le (1+t^{-1})^t\le e\). Also, because \(x,y\ge \tfrac12\),
\[
\frac1{2y}\log\!\left(\frac{x+y}{xy}\right)
=
\frac1{2y}\log\!\left(\frac1x+\frac1y\right)
\]
is bounded above by \(\log 4\), and below by
\[
\frac1{2y}\log\!\left(\frac1y\right)
=
-\frac{\log y}{2y}\ge -\frac1{2e},
\]
so
\[
\left(\frac{x+y}{xy}\right)^{1/(2y)}=\Theta(1).
\]
Therefore
\[
\left(\frac{\Gamma(x)\Gamma(y)}{\Gamma(x+y)}\right)^{1/y} = 
\Theta\Big( \frac{y}{x+y}\Big),\]
as desired.
\end{proof}

%% file: MarginAlgebraic.tex
\section{Margin of the Construction of \texorpdfstring{\cite{AlonFranklRodl1985}}{Alon,Frankl,Rold}}
\label{sec:marginalgebraic}
\subsection{Proof of Theorem~\texorpdfstring{\ref{thm:signrankforsparse}}{Sign Rank Theorem}}
\label{appendix:signrank}
\begin{proof}[Proof of Lower Bound in Theorem~\ref{thm:signrankforsparse}, following \cite{alon2016sign}] The goal is to show that $\signrank(\ksparsen)\ge 2k+1.$ Towards this, we use the notion of dual sign rank. 

Consider any matrix $A\in \{0,1\}^{N\times n}.$ The set of columns indexed by $C$ is antipodally shattered if for every $v\in \{0,1\}^{|C|},$ either $v$ or $\neg v $ (in which all coordinates are flipped) appears as a row in $A$ restricted to $C.$ Now, according to \cite[Corollary 1.6]{alon2016sign}, the dual sign rank $\dsignrank(A)$ coincides with the largest possible $c$ for which a set of antipodally shattered columns $C$ with size $c$ exists. 

Clearly, for $\ksparsen$ then, $\dsignrank(\ksparsen) = 2k+1$ since among the first $2k+1$ columns, every vector of Hamming weight at most $k$ exists. Finally, the definition of dual sign rank immediately implies that for any $A,$ $\dsignrank(A)\le \signrank(A).$ It follows that $\signrank(\ksparsen)\ge \dsignrank(\ksparsen)\ge 2k+1.$ 
\end{proof}

\begin{proof}[Proof of Upper Bound in Theorem~\ref{thm:signrankforsparse}, following \cite{AlonFranklRodl1985}]
The construction is based on a Vandermonde embedding of the document embeddings. We will not assume for now that $\{U_j\}_{j = 1}^{\binom{n}{k}}$ and 
$\{V_i\}_{i = 1}^n$ are unit norm since rescaling the norms will not change the signs.

Take any real numbers $t_1 < t_2<\cdots <t_n.$ Let 
$V_i = (1 ,t_i, t_i^2 \cdots ,t_i^{2k}).$ 

Now, take any $j\in [\binom{n}{k}].$
We want to find a $U_j = (u_{j,0}, u_{j,1}, \ldots, u_{j, 2k})\in \mathbb{R}^{2k+1}$ such that 
$\langle U_j, V_{i}\rangle > 0 $ 
whenever $(\ksparsen)_{ji} = 1$ and $\langle U_j, V_{i}\rangle < 0 $ otherwise. Toward this end, observe that 
$$
\langle U_j, V_{i}\rangle = 
\sum_{\ell = 0}^{2k}
u_{j,\ell}t_i^\ell = P_j(t_i)
$$
for the polynomial of degree $2k$ with coefficient vector 
$(u_{j,0}, u_{j,1}, \ldots, u_{j, 2k}).$

Hence, it is enough to find a polynomial of degree $2k$ $P_j$ such that 
$P_j(t_i)>0$ whenever $(\ksparsen)_{ji} = 1$ and 
$P_j(t_i)<0$ otherwise. 

Note, however, that the polynomial 
$P_j$ only needs to take $k$ positive values on the set 
$\{t_1, t_2, \ldots, t_n\}$ as $\ksparsen$ has $k$-sparse rows. Hence, the polynomial needs to change its sign on at most $2k$ of the intervals $[t_1, t_2], [t_2, t_3], \ldots, [t_{n-1}, t_n].$ As for any $2k$ real numbers, there exists a polynomial of degree $2k$ with roots those numbers, we are done. 
\end{proof}

\subsection{Proof of Corollary~\texorpdfstring{\ref{cor:marginalgebraic}}{AlgebraicMargin}}
We make a concrete choice of $t_1, t_2, \ldots, t_n$ and the polynomials from Appendix~\ref{appendix:signrank}.

\begin{proof}[Proof of Corollary~\ref{cor:marginalgebraic}] In the construction above, choose \(
 t_i := -1 + \frac{2(i-1)}{n-1}
\) for each $i\in [n]$ so that $\tilde{V}_i = (1 ,t_i, t_i^2 \cdots ,t_i^{2k})$ and 
$V_i = \tilde{V}_i/\|\tilde{V}_i\|_2.$ Note that as $t_i \in [-1,1]$ for all $i,$ it holds that $1\le \|\tilde{V}_i\|_2\leq \sqrt{2k+1}$ for all $i.$

Now, take some $j\in [\binom{n}{k}].$ Our goal is to show that there exists a unit $U_j$ such that $\langle U_j, V_i\rangle\ge (2k)^{-1/4}(2n)^{-2k}.$

Let \(
1 \le i_1 < i_2 < \cdots < i_r \le n-1
\)
be the indices at which the sign changes, so that $A_{ji_\ell} \neq A_{ji_{\ell+1}}$ for
$\ell = 1, \dots, r$. Again, $r\le 2k.$ 

For each such index, set
\(
 s_\ell := \frac{t_{i_\ell} + t_{i_\ell+1}}{2}.
\)
Now define
\[
 p_j(s) := (-1)^{A_{ji_1}+1}\prod_{\ell=1}^r (s_\ell - s).
\]
Then $p$ has degree $r \le 2k$. Moreover, since the roots $s_1, \dots, s_r$ lie exactly in the gaps
where the sign pattern changes, the sign of $p_j(t_i)$ flips exactly when $A_{ji}$ flips. Since also
$p_j(t_1)$ has sign $(-1)^{A_{j1}+1}$, it follows that
$p_j(t_i)$ has sign $(-1)^{A_{ji}+1}$ for every $i\in [n].$ All we have to do is analyze the margin.

Denote 
\(
\Delta := \frac{2}{n-1}.
\)
Because the points $t_1, \dots, t_n$ are equally spaced and each $s_\ell$ is the midpoint of two
consecutive points, we have
\(
|t_i - s_\ell| \ge \frac{\Delta}{2} \ge \frac{1}{n}\) for all $i\in [n], \ell \in [r].$
Therefore, as $r\le 2k,$
\[
|p_j(t_i)| = \prod_{\ell=1}^r |t_i - s_\ell|
\ge \left(\frac{1}{n}\right)^r
\ge \left(\frac{1}{n}\right)^{2k}
\qquad \text{for all } i \in [n],
\]

Now, if the coefficient vector corresponding to $p_j$ is $(u_{j,0}, u_{j,1}, \ldots, u_{j,2k}),$ we will set $\tilde{U_j} = (u_{j,0}, u_{j,1}, \ldots, u_{j,2k})$ and 
$U_j = \tilde{U_j}/\|\tilde{U_j}\|_2.$ As the final margin will depend on $\|\tilde{U_j}\|_2$ we proceed to bound the norm.

Because each $s_\ell$ lies in $[-1,1]$, every coefficient of $p$ is a sum of products of numbers of
absolute value at most $1$. Hence the coefficient of $s^\ell$ has absolute value at most $\binom{r}{\ell}\le \binom{2k}{\ell}$,
and so
\[
\|\tilde{U_j}\|_2^2 = 
\sum_{\ell = 0}^{2k} u_{j,\ell}^2\le 
\sum_{\ell=0}^{2k} \binom{2k}{\ell}^2
= \binom{4k}{2k}\le \frac{2^{4k}}{\sqrt{3k}}.
\]

Altogether, this implies that for every $i,j,$ we have that 
\begin{align*}
& (-1)^{A_{ji} + 1}\langle U_j, V_i\rangle = 
(-1)^{A_{ji} + 1}
\frac{\langle \tilde{U_j}, \tilde{V_i}\rangle}{\|\tilde{U_j}\|_2 \times \|\tilde{V_i}\|_2}
= 
\frac{(-1)^{A_{ji} + 1}p_j(t_i)}{\|\tilde{U_j}\|_2 \times \|\tilde{V_i}\|_2} 
\\
&\ge 
\frac{n^{-2k}}{\sqrt{2k+1}\sqrt{2^{4k}/\sqrt{3k}}} = 
(2k)^{-1/4}2^{-2k}n^{-2k}.\qedhere
\end{align*}

\end{proof}

%% file: SpectralAndSnk.tex
\section{Spectral Bound on Margin Complexity}
\label{sec:spectral}
\subsection{Proof of Theorem~\texorpdfstring{\ref{thm:spectralboundonmargincomplexity}}{Spectral Bound}}
\begin{proof}[Proof of Theorem~\ref{thm:spectralboundonmargincomplexity}]
Suppose that $\{U_j\}_{j = 1}^N, \{V_i\}_{i = 1}^n$ is a margin-$m$ relative-bias-$0$ embedding of $A.$ Take any matrix $J\in \mathbb{R}^{N\times n}$ such that $J_{ji} \ge 0$ for all $i,j$ such that $A_{ji} = 1,$  
$J_{ji} \le 0$ for all $i,j$ such that $A_{ji} = 0,$ and 
$\sum_{j,i}|J_{ji}| = 1.$

Now, let $\alpha$ be any non-negative number. We have that 
    \begin{equation}
        \begin{split}
            0 & \le  \|U - \alpha JV\|_F^2 \\
              &  = \|U\|_F^2 - 2\alpha \langle U, JV\rangle + \langle JV, JV\rangle\\
              & = \|U\|_F^2 - 2\alpha \langle UV^T, J\rangle + \alpha^2\langle VV^T, J^TJ\rangle
              \Longrightarrow\\
            2\alpha \langle UV^T, J\rangle & \le  \|U\|_F^2 + \alpha^2\langle VV^T, J^TJ\rangle
        \end{split}
    \end{equation}
    Now, observe that
    $\|U\|_F^2  = N$ since $U$ is composed of $N$ unit-norm embeddings.
    Furthermore, as $VV^T$ is PSD,
    $$
    \langle VV^T, J^TJ\rangle \le \langle VV^T, \|J^TJ\|_{op} I\rangle = 
    \|J^TJ\|_{op} \times \|V\|_2^2 = \|J\|_{op}^2 \times n.
    $$
    Finally,
    $(UV^T)_{ji}J_{ji}\ge m |J_{ji}|$ for any $ji.$ Indeed, this is the case since if $A_{ji} = 1,$ both values are positive and $(UV^T)_{ji} \ge m.$ Otherwise, both values are negative and $(UV^T)_{ji} \le - m.$
    Altogether, we obtain 
    \begin{equation}
        \begin{split}
            & 2\alpha m \sum_{ji}|J_{ji}| \le  N  + \alpha^2 \|J\|_{op}^2 \times n \Longleftrightarrow\\
            & 2\alpha m \le N +  \alpha^2 \|J\|_{op}^2 \times n.
        \end{split}
    \end{equation}
    We used that $\sum_{ji}|J_{ji}|  = 1.$
Taking $\alpha = \sqrt{\frac{N}{n \|J\|^2_{op}}},$ we arrive at the desired conclusion.
\end{proof}

\subsection{Applications of Theorem~\texorpdfstring{\ref{thm:spectralboundonmargincomplexity}}{Spectral Bound}}
We now use Theorem~\ref{thm:spectralboundonmargincomplexity} to show Corollary~\ref{cor:sqrtkmargin}.

\begin{proof}[Proof of Corollary~\ref{cor:sqrtkmargin}]
First, we prove that $\maxmargin(+\infty, \ksparsen)\le \frac{1+o_k(1)}{2\sqrt{k}}.$ We will do so by choosing an appropriate matrix $J$ in Theorem~\ref{thm:spectralboundonmargincomplexity}. Concretely, let $y = \frac{1}{2}-\frac{1}{2}\frac{n-2k}{n-2k + 2\sqrt{(n-1)k(n-k)}}$ (so $y \approx \frac{1}{2}-\frac{1}{4\sqrt{k}}$), $N = \binom{n}{k},$ and set 
$$
J_{ji} = 
\begin{cases}
    & \frac{y}{Nk} \text{ if }(\ksparsen)_{ji} = 1,\\ 
    & -\frac{1-y}{N(n-k)} \text{ if }(\ksparsen)_{ji} = 0.\\ 
\end{cases}
$$
One can check that $\sum_{ji}|J_{ji}| = 1$ and $J^TJ = I \frac{1}{4Nnk}(1  + o_k(1)).$ Hence, $\|J\|_{op} = \frac{1}{2\sqrt{Nnk}}(1 + o_k(1))$ and we obtain the bound $m \le \frac{1 + o(1)}{2\sqrt{k}}.$

Now, we prove that $\maxmargin(+\infty, \ksparsen)\ge \frac{1+o_k(1)}{2\sqrt{k}}$ by exhibiting a construction in dimension $n+1.$
    Take $V_i = (\sqrt{1 - (4k)^{-1/2}}e_i,  (4k)^{-1/4})\in \mathbb{R}^{n+1}$ and $U_j = (\sqrt{1 - (4k)^{-1/2}}\sum_{i\; : \; A_{ji} = 1}\frac{1}{\sqrt{k}}e_i ,-(4k)^{-1/4})$ where $e_i$ form the standard basis. One can easily check that $\langle U_j, V_i\rangle \ge \frac{1+o_k(1)}{2\sqrt{k}}$ whenever $A_{ji} = 1$ and $\langle U_j, V_i\rangle \le - \frac{1+o_k(1)}{2\sqrt{k}}$ otherwise. Together with the previous result, this settles the margin complexity of $\ksparsen$ at $\frac{1+o_k(1)}{2\sqrt{k}}.$
\end{proof}

We can also recover the exact bound for $k = 1$ from \cite{bangachev2025globalminimizerssigmoidcontrastive} using Theorem~\ref{thm:spectralboundonmargincomplexity}.

\begin{corollary}
$\maxmargin(+\infty, I_n)\le \frac{n}{3n-4}.$
\end{corollary}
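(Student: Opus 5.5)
The plan is to apply Theorem~\ref{thm:spectralboundonmargincomplexity} with $A = I_n$ (so $N = n$ and $\sqrt{Nn} = n$), using a matrix $J$ that respects the symmetry of the identity. I take $J$ to be constant on the diagonal and constant off it:
$$
J = aI - b(\mathbf{1}\mathbf{1}^T - I) = (a+b)I - b\,\mathbf{1}\mathbf{1}^T,
$$
with $a \ge 0$ and $b \ge 0$. The sign constraints hold automatically: diagonal entries (where $A_{ji}=1$) equal $a \ge 0$, and off-diagonal entries (where $A_{ji}=0$) equal $-b \le 0$. The normalization $\sum_{j,i}|J_{ji}| = 1$ becomes
$$
na + n(n-1)b = 1 .
$$

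Next I compute $\|J\|_{op}$. Since $J$ is a symmetric combination of $I$ and $\mathbf{1}\mathbf{1}^T$, its eigenvalues are:
\begin{itemize}
\item $a+b$ with multiplicity $n-1$, on the subspace orthogonal to $\mathbf{1}$;
\item $a+b-nb = a-(n-1)b$ on $\mathbf{1}$.
\end{itemize}
Hence $\|J\|_{op} = \max\big(a+b,\ |a-(n-1)b|\big)$, and the spectral bound gives $\maxmargin(+\infty, I_n) \le n\max\big(a+b,\ |a-(n-1)b|\big)$.

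The remaining step is a one-parameter optimization under the linear constraint. This is the only real choice in the argument, but it is elementary. The natural guess is to balance the two eigenvalues in absolute value, taking $a-(n-1)b = -(a+b)$, i.e. $a = \frac{(n-2)b}{2}$; this is nonnegative for $n \ge 2$. Substituting into the constraint gives
$$
nb\Big(\tfrac{n-2}{2} + n-1\Big) = \frac{nb(3n-4)}{2} = 1, \qquad\text{so}\qquad b = \frac{2}{n(3n-4)}.
$$
Then $\|J\|_{op} = a+b = \frac{nb}{2} = \frac{1}{3n-4}$, and the bound becomes $\maxmargin(+\infty, I_n) \le \frac{n}{3n-4}$, as claimed.

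The only points to double-check are that $a,b \ge 0$ (true for $n \ge 2$) and that $3n-4 > 0$. No optimality of this choice of $J$ is needed, since any feasible $J$ yields an upper bound. I expect no genuine obstacle here; the main risk is an arithmetic slip in the eigenvalue computation or in the normalization.
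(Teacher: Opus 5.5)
Your proposal is correct and is the same as the paper's proof: balancing the eigenvalues gives exactly $J = \frac{1}{3n-4}I - \frac{2}{n(3n-4)}\mathbf{1}\mathbf{1}^T$, the matrix the paper plugs into Theorem~\ref{thm:spectralboundonmargincomplexity}, with $\|J\|_{op} = \frac{1}{3n-4}$. You also verify the normalization and the eigenvalues $\pm\frac{1}{3n-4}$ explicitly, and you note that $n \ge 2$ is needed; the paper leaves these details to the reader.
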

\begin{proof} Set $J = \frac{1}{3n-4}I - \frac{2}{n(3n-4)}11^T.$ Then, $\|J\|_{op} = \frac{1}{3n-4}$ from which the bound follows. 
\end{proof}

%% file: LargeMarginFromSparseCoding.tex
\section{Maximal Margin 
of \texorpdfstring{$\ksparsen$}{Snk} in 
dimension 
\texorpdfstring{$O(k\log n)$}{klogn} from Compressed Sensing}
\label{sec:largemarginviasparsecoding}
We begin by defining the restricted isometry property. 

\begin{definition}[\cite{CandesTao2005}] Let $M \in \mathbb{R}^{d\times n}$ be a matrix. For $1\le k \le n$ and $\delta_k\in (0,1)$ we say that $M$ satisfies $(\delta_k,k)$ restricted isometry property (RIP) if for any vector $v$ that is at most $k$-sparse, 
$$
\|v\|_2^2(1-\delta_k)\le 
\|Mv\|_2^2\le 
\|v\|_2^2(1+\delta_k).
$$    
\end{definition}
\begin{remark}
If $M$ satisfies $(\delta_k, k)$-RIP, it also satisfies $(\delta_m, m)$-RIP for any $m \le k, \delta_m \ge \delta_k.$\end{remark}

We now state a theorem of \cite{CandesTao2005} in the language of our work.   
\begin{theorem}[Modification of \cite{CandesTao2005}]
\label{thm:candestaocertificates}
Suppose that the matrix $V = (V_1 V_2 \cdots V_n)$ satisfies
$(\delta_{3k},3k)$-RIP such that
$\delta_{3k}\le 1/3.$
Let $c$ be a vector with support $T$ of size at most $k.$ Then, there exists a real vector $U_j$ such that:
\begin{enumerate}
    \item $\langle U_j, V_i\rangle = c_j$ whenever $i \in T.$
    \item $|\langle U_j, V_i\rangle| \le \frac{\delta_{3k}}{(1-2\delta_{3k})\sqrt{|T|}}\|c\|_2$ whenever $i \not \in T.$
    \item $\|U_j\|_2\le C(\delta_{3k})$ for some constant $C$ that depends only on $\delta_{3k}.$
\end{enumerate}
\end{theorem}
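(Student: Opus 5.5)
We follow the dual-certificate construction of \cite{CandesTao2005}, with the dictionary $V = (V_1 V_2 \cdots V_n)\in\mathbb{R}^{d\times n}$ and the target values $c$ (we read item 1 as $\langle U_j, V_i\rangle = c_i$). Write $V_T$ for the submatrix of columns indexed by $T$. By $(\delta_{3k},3k)$-RIP, all eigenvalues of $V_T^TV_T$ lie in $[1-\delta_{3k},1+\delta_{3k}]$, so $V_T^TV_T$ is invertible.

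\textbf{Step 1 (first approximation).} Set $w_0 \coloneqq V_T(V_T^TV_T)^{-1}c$. Then $V_T^T w_0 = c$, so item 1 holds exactly for $w_0$, and
\[
\|w_0\|_2\le \|c\|_2/\sqrt{1-\delta_{3k}}.
\]
For $i\notin T$, the bilinear form of disjoint supports gives $|\langle V_i, V_T z\rangle|\le \delta_{3k}\|z\|_2$ (apply the parallelogram identity to unit vectors supported on $T\cup\{i\}$). Hence each off-support inner product is at most $\delta_{3k}\|c\|_2/(1-\delta_{3k})$.

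This is a per-coordinate bound of order $\|c\|_2$, not the required order $\|c\|_2/\sqrt{|T|}$. Item 2 needs the stronger estimate, which is the main obstacle.

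\textbf{Step 2 (block control off the support).} We use Cand\`es--Tao's lemma on the vector $(\langle V_i,w\rangle)_{i\notin T}$ with $w = V_T z$. For any set $T'$ disjoint from $T$ with $|T'|\le 2k$, we have $\|V_{T'}^T w\|_2\le \delta_{3k}\|z\|_2$. Then:
\begin{enumerate}
\item Take $T'$ to be the $|T|$ indices with the largest $|\langle V_i,w\rangle|$.
\item Bound the remaining entries through their $\ell_2$ mass on $T'$: every entry outside $T\cup T'$ is at most $\delta_{3k}\|z\|_2/\sqrt{|T|}$, since it is no larger than the smallest of the $|T|$ entries on $T'$.
\end{enumerate}
To also control the entries inside $T'$, we correct iteratively, as in Cand\`es--Tao's ``exact reconstruction principle'':
\begin{enumerate}
\item Define $w_1$ to cancel the values on $T'$ while keeping the values on $T$ at zero, using the invertibility of $V_{T\cup T'}^TV_{T\cup T'}$ (size at most $2k\le 3k$).
\item Each correction shrinks the relevant $\ell_2$ norm by a factor of roughly $\delta_{3k}/(1-\delta_{3k})$.
\item The geometric series converges because $\delta_{3k}\le 1/3<1/2$.
\end{enumerate}
Summing the series produces exactly the denominator $1-2\delta_{3k}$ in item 2.

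\textbf{Step 3 (conclude).} Set $U_j\coloneqq w_0+\sum_{t\ge 1} w_t$; the series converges by Step 2. Then:
\begin{enumerate}
\item Every correction term vanishes on $T$, so item 1 holds.
\item Step 2 gives item 2.
\item Each $\|w_t\|_2$ is bounded by $(1-\delta_{3k})^{-1/2}$ times the norm of the coefficients being cancelled, which decays geometrically. Hence $\|U_j\|_2\le C(\delta_{3k})\|c\|_2$, which is item 3 once $\|c\|_2$ is normalized. In the application $c$ is the $\pm1$ indicator, so this normalization may be $\|c\|_2=\sqrt{k}$; item 3 should then be read as scaled accordingly.
\end{enumerate}

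The delicate point is the bookkeeping in Step 2: each correction $w_t$ must neither disturb the values on $T$ nor spoil the $1/\sqrt{|T|}$ bound on previously controlled coordinates. This needs the sets of size at most $3k$ in the RIP assumption, namely $T$, the current block, and the next block.
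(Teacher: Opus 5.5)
Your proposal is correct and is essentially the paper's argument: the paper simply cites Cand\`es--Tao's Lemma 2.2, whose proof is exactly your iterated exceptional-set correction. That proof tests a support $T\cup T_t$ of size at most $2k$ against the next block of size at most $k$, replaces the orthogonality constants via $\theta_{2k,k}\le\delta_{3k}$, cancels values on consecutive blocks, and uses the geometric ratio $\delta_{3k}/(1-\delta_{3k})$ to get both the $1-2\delta_{3k}$ denominator and a summable bound on $\|U_j\|_2$. Your reading of item 3 as $\|U_j\|_2\le C(\delta_{3k})\|c\|_2$ is also the right one, since the stated bound only makes sense after normalizing $c$; in the paper's application $c=\frac{1}{\sqrt{k}}A_{j,:}$ has $\|c\|_2=1$, not $\sqrt{k}$ as you guessed.
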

\begin{proof} 
Parts 1 and 2 appear directly in \cite[Lemma 2.2]{CandesTao2005} except that we have replaced orthogonality $\theta$ values by RIP $\delta$ values using \cite[Lemma 1.2]{CandesTao2005} and used the fact that $V$ also satisfies $(\delta_{3k},k)$-RIP and $(\delta_{3k},2k)$-RIP as in the above remark. 

Part 3 follows from the following fact (in the notation of \cite{CandesTao2005}, $U_j = w$).
$$
\|U_j\|_2 = \|w\|_2 = 
\|\sum_{i = 1}^{+\infty}(-1)^{i-1}w_i\|_2 \le 
\sum_{i = 1}^{+\infty}\|w_i\|_2\le
K(\delta_{3k})\sum_{i = 1}^{+\infty} \Big(\frac{\delta_{3k}}{1-\delta_{3k}}\Big)^{i-1}\le 
2K(\delta_{3k}),
$$
where all inequalities but the last are in \cite[Lemma 2.2]{CandesTao2005}. The last follows from $3\delta_{3k}\le 1.$
\end{proof}

Now, to apply this theorem, we need a matrix satisfying strong RIP-properties. It turns out that this is achieved by random Gaussian matrices. 

\begin{proposition}
\label{prop:candestaogaussianrip}[\cite{CandesTao2005,baraniuk2008simple}]
Suppose that $d\ge Ck\log (n/k)$ for some sufficiently large constant $C$ and let $\delta = 1/6.$ Then, 
with high probability, the matrix $M\in \mathbb{R}^{d\times n}$ in which each entry is independent $\mathcal{N}(0,1/d)$ satisfies
$(1/6,3k)$-RIP.
\end{proposition}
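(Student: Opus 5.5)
This is the standard RIP-for-Gaussian-matrices argument: a fixed-vector concentration bound, a net over each $3k$-dimensional coordinate subspace, and a union bound over supports.

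\emph{Fixed vector.} Fix a unit vector $v\in\mathbb{R}^n$. Since the entries of $M$ are i.i.d.\ $\mathcal{N}(0,1/d)$, the vector $Mv$ has i.i.d.\ $\mathcal{N}(0,1/d)$ coordinates. Hence $d\|Mv\|_2^2\sim\chi^2_d$. Standard chi-square (Laurent--Massart or Bernstein) bounds then give, for $t\in(0,1)$,
$$
\prob\big[\,|\|Mv\|_2^2-1|>t\,\big]\le 2e^{-c t^2 d}
$$
for an absolute constant $c$.

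\emph{One support.} Fix $T\subseteq[n]$ with $|T|=3k$, and let $X_T$ be the unit sphere of the coordinate subspace $\mathbb{R}^T$. Take a $1/4$-net $\mathcal{N}_T$ of $X_T$ with $|\mathcal{N}_T|\le 9^{3k}$, by the same volume bound used in the proof of Theorem~\ref{thm:mainlowerbound}. Apply the fixed-vector bound on the net with $t=\delta/2$, where $\delta=1/6$, and union bound. With probability at least $1-2\cdot 9^{3k}e^{-c\delta^2 d/4}$, every $q\in\mathcal{N}_T$ satisfies $|\|Mq\|_2-1|\le\delta/2$.

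\emph{Net to sphere.} Let $a=\sup_{v\in X_T}\|Mv\|_2$. For any $v\in X_T$, pick $q\in\mathcal{N}_T$ with $\|v-q\|_2\le 1/4$. Then
$$
\|Mv\|_2\le 1+\delta/2+a/4,
$$
so $a\le(1+\delta/2)/(3/4)$. This is too lossy, so the net is refined by working with the quadratic form. Let $B=M_T^TM_T-I$, where $M_T$ is the restriction of $M$ to the columns in $T$; the goal is $\|B\|_{op}\le\delta$. For a symmetric $B$ and an $\epsilon$-net $\mathcal{N}$ of the sphere,
$$
\|B\|_{op}\le(1-2\epsilon)^{-1}\max_{q\in\mathcal{N}}|q^TBq|.
$$
With $\epsilon=1/4$ this gives $\|B\|_{op}\le 2\max_{q\in\mathcal{N}_T}|q^TBq|\le\delta$, because $q^TBq=\|Mq\|_2^2-1$ is controlled by the previous step with $t=\delta/2$. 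Thus, on this event, $(1-\delta)\|v\|_2^2\le\|Mv\|_2^2\le(1+\delta)\|v\|_2^2$ holds for all $v$ supported on $T$.

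\emph{Union bound over supports.} Every $3k$-sparse vector is supported in some $T$ of size $3k$, and there are
$$
\binom{n}{3k}\le (en/3k)^{3k}
$$
such sets. The total failure probability is therefore at most
$$
2\exp\big(3k\log(en/3k)+3k\log 9-c\delta^2 d/4\big).
$$
With $\delta=1/6$ and $d\ge Ck\log(n/k)$, taking $C$ large makes the exponent at most $-c'd$. The regime where $n/k$ is close to $1$ is handled by enlarging $C$, or by noting that then $d\gtrsim k$ suffices. The probability therefore tends to $0$, which gives the stated high-probability $(1/6,3k)$-RIP.

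The main obstacle is the net-to-sphere step. The naive norm-based net argument loses a constant factor, so it must be done on the quadratic form to keep $\delta$ at exactly $1/6$. The only remaining bookkeeping is ensuring $C$ absorbs the constants $c$, $\log 9$, and $e$.
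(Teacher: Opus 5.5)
Your proof is correct, and it takes a genuinely different route from the paper. The paper runs no net argument at all. It instantiates Lemma~3.1 of Cand\`es--Tao with $S=3k$, $m=n$, $p=Ck\log(n/k)$. That lemma controls the extreme singular values of every $d\times 3k$ column submatrix directly: it rests on the Gordon/Davidson--Szarek bound for singular values of Gaussian matrices together with Gaussian concentration of Lipschitz functions. The $\binom{n}{3k}\le e^{nH(3k/n)}$ supports are paid for through the entropy term. The only remaining work is checking $f(3k/n)=O(C^{-1/2})\le 1/128$, which gives $\delta_{3k}\le(1+1/64)^2-1$.

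You instead follow the Baraniuk--Davenport--DeVore--Wakin template: a $\chi^2$ tail for a fixed vector, a $1/4$-net of each $3k$-dimensional coordinate sphere, the symmetric-operator net lemma $\|B\|_{op}\le(1-2\epsilon)^{-1}\max_{q}|q^TBq|$, and a union bound over supports. The paper's route is a one-line parameter check with essentially sharp, Gaussian-specific constants. Yours is self-contained and uses only a JL-type concentration inequality for fixed vectors, so it applies verbatim to sub-Gaussian or Rademacher entries; the cost is worse constants, which are absorbed into $C$. The quadratic-form step is convenient but not forced: the norm-based bootstrap also works with a finer net ($\epsilon=\delta/4$), which only replaces $\log 9$ by $\log(1+8/\delta)$.

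One sentence needs correcting: the regime $n/k\to 1$ cannot be ``handled by enlarging $C$.'' There the hypothesis $d\ge Ck\log(n/k)$ does not imply $d\gtrsim k$. For example, with $n=k+1$ it only gives $d$ of order $C$. Yet RIP of order $\min(3k,n)=n$ forces $d\ge n$, since otherwise $M$ has a nontrivial kernel. So the statement genuinely needs $n\ge c_0k$ for some absolute $c_0>1$, or $\log(en/k)$ in place of $\log(n/k)$. The paper's proof makes the same implicit assumption: $f(3k/n)\ge\sqrt{3/(C\log(n/k))}$ is $O(C^{-1/2})$ only when $\log(n/k)$ is bounded below. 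So this is a defect of the statement rather than of your method.
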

\begin{proof} The proof follows directly from \cite[Lemma 3.1]{CandesTao2005} by choosing the right parameters. Namely, $S = 3k,$ $m = n,$ $p = Ck\log (n/k).$ Then, $r = S/m = 3k/n.$ Hence, for the binary entropy function $H,$
\begin{align*}
& f(r) = \sqrt{n/(Ck\log n)}(\sqrt{3k/n} + \sqrt{2H(3k/n)})\\
& = 
O\Bigg(\sqrt{n/(Ck\log(n/k))} \sqrt{3k/n}\sqrt{\log(n/3k)}\Bigg) = 
O(C^{-1/2}).
\end{align*}
Therefore, for some large enough $C,$ it holds that $f(r)\le 1/128.$ By \cite[Lemma 3.1]{CandesTao2005} with $\epsilon = 1,$
\begin{align*}
    & \prob[1+\delta_{3k}\ge (1 + 2f(r))^2]\le 2\exp(- m H(r)/2)\\
    & = 2\exp(-n \Theta(k\log(n/k)/n)) = 2\exp(-\Theta(k\log(n/k))).
\end{align*}

Altogether, with high probability $1 - \exp(-\Omega(k\log(n/k))),$ $\delta_{3k}\le (1 + 1/64)^2-1\le 1/6.$
\end{proof}

\begin{corollary}
\label{cor:sqrtkfromRIP}
For some large enough $C,$ there exists a margin $\Omega(1/\sqrt{k})$-embedding of $\ksparsen$ in dimension $Ck\log(n/k).$
\end{corollary}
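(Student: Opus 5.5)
We combine Proposition~\ref{prop:candestaogaussianrip} with Theorem~\ref{thm:candestaocertificates}, and then add one extra coordinate to shift the threshold to zero, exactly as in the proof of Theorem~\ref{thm:main_construction}.

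\textbf{Document embeddings.} Take $d' = Ck\log(n/k)$ with $C$ large. By Proposition~\ref{prop:candestaogaussianrip}, there is a matrix $M\in\mathbb{R}^{d'\times n}$ satisfying $(1/6,3k)$-RIP. Its columns $\tV_i = Me_i$ then have $\|\tV_i\|_2^2\in[5/6,7/6]$.

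\textbf{Query embeddings.} For each row $j$ of $\ksparsen$, let $T_j$ be its support, so $|T_j|=k$. Apply Theorem~\ref{thm:candestaocertificates} with $c = \mathds{1}_{T_j}$, so $\|c\|_2=\sqrt{k}$, and $\delta_{3k}\le 1/6$. This yields $\tU_j$ with the following properties:
\begin{itemize}
\item $\langle \tU_j,\tV_i\rangle = 1$ for $i\in T_j$;
\item $|\langle \tU_j,\tV_i\rangle|\le \frac{1/6}{(2/3)\sqrt{k}}\sqrt{k} = 1/4$ for $i\notin T_j$;
\item $\|\tU_j\|_2\le C_0$ for an absolute constant $C_0$.
\end{itemize}
So we have a threshold-$5/8$ separation with gap $3/8$ on each side, but the vectors are not unit and the bias is nonzero.

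\textbf{Normalization.} First normalize the documents by setting $\hat V_i = \tV_i/\|\tV_i\|_2$. Since the norms lie in $[\sqrt{5/6},\sqrt{7/6}]$, the inner products now satisfy $\ge \sqrt{6/7}\approx 0.926$ on $T_j$ and $\le (1/4)\sqrt{6/5}\approx 0.274$ off $T_j$. The threshold $\tau_0=0.6$ still leaves a constant gap $g$.

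Next put $\hat U_j = \tU_j/C_0$, which has norm at most $1$. The inner products are then $\ge a$ on $T_j$ and $\le b$ off $T_j$, with $a-b\ge 2g/C_0$.

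Now augment to unit vectors, as in the proof of Proposition~\ref{prop:rb}, using two extra coordinates. Write $\hat U_j$ with an extra coordinate $\sqrt{1-\|\hat U_j\|^2}$ placed in a slot where all documents have $0$, so it does not affect the inner products. Rescale the documents by a factor $\beta$ and add the coordinate $\sqrt{1-\beta^2}$, and give the queries the coordinate $-t$ in that last slot. The subtracted offset then shifts the threshold to $0$. The resulting margin is a constant multiple of $a-b$, i.e.\ $\Omega(1)$.

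\textbf{Obstacle.} This gives margin $\Omega(1)$, which is impossible, since $\maxmargin \le (1+o(1))/(2\sqrt{k})$. The point I skipped is that the query vector in the shifted embedding must have norm at most $1$, while the threshold term needs weight comparable to the query norm. The correct accounting is that the construction's query norm is in fact $\Theta(\sqrt{k})$, not $O(1)$: a vector with inner product $1$ against $k$ near-orthonormal vectors must have norm $\gtrsim\sqrt{k}$. So the constant $C(\delta_{3k})$ in Theorem~\ref{thm:candestaocertificates} should be read as multiplying $\|c\|_2=\sqrt{k}$, i.e.\ $\|\tU_j\|_2\le C_0\sqrt{k}$, by homogeneity in $c$.

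\textbf{Conclusion.} Dividing by $C_0\sqrt{k}$ instead of $C_0$ leaves a gap of order $g/(C_0\sqrt{k})$. The two-coordinate augmentation with $\beta,t$ chosen as constants then gives margin $\Omega(1/\sqrt{k})$ in dimension $Ck\log(n/k)+2$. Verifying the homogeneous form of the norm bound is the main step to check.
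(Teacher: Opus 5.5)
Your final argument, after the correction in your \emph{Obstacle} paragraph, is correct and is essentially the paper's proof. The paper applies Theorem~\ref{thm:candestaocertificates} with the unit vector $c=A_{j,:}/\sqrt{k}$, which is your $\tU_j$ scaled by $1/\sqrt{k}$. So the paper uses the norm bound exactly in the homogeneous form $\|\tU_j\|_2\le C(\delta_{3k})\|c\|_2$ that you flag; Remark~\ref{rmk:candestaoinsufficient} invokes this homogeneity in $(U,c)$ explicitly.

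Two fixes are needed:
\begin{enumerate}
\item Delete the \emph{Normalization} paragraph built on $\|\tU_j\|_2\le C_0$. It is simply wrong, as you noticed.
\item In the corrected version, the threshold to remove is $\tau\approx 0.6/(C_0\sqrt{k})=\Theta(k^{-1/2})$, so $\beta$ and $t$ cannot both be absolute constants. The inner product after the shift is $\beta\sqrt{1-t^2}\langle \hat U_j,\hat V_i\rangle - t\sqrt{1-\beta^2}$. The ratio $t\sqrt{1-\beta^2}/(\beta\sqrt{1-t^2})$ must therefore be of order $k^{-1/2}$; otherwise every inner product becomes negative for large $k$. Choosing them as in Proposition~\ref{prop:rb} gives extra coordinates of order $k^{-1/4}$, like the paper's $(5/8)^{1/2}k^{-1/4}$, and loses only a factor $1+\tau\le 2$.
\end{enumerate}

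The paper also orders the steps more economically. It appends $\pm(5/8)^{1/2}k^{-1/4}$ to the unnormalized $\tV_i,\tU_j$, which gives inner products $\ge 3/(8\sqrt{k})$ on the support and $\le -3/(8\sqrt{k})$ off it. Only then does it divide by the $O(1)$ norms. This uses one extra coordinate instead of your two.
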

\begin{proof} The proof follows immediately by Theorem~\ref{thm:candestaocertificates} and 
Proposition~\ref{prop:candestaogaussianrip}. Namely, let $\widetilde{V}= (\widetilde{V_1}, \widetilde{V_2}, \ldots, \widetilde{V_n})$ be a random Gaussian matrix of size $d\times n$ with $d = Ck\log(n/k).$ With high probability, the following two properties hold: $\widetilde{V}$ satisfies $(1/6, 3k)$-RIP and each column of $\widetilde{V}$ has norm at most $2$ (apply the RIP with $v = e_i$).

As $\widetilde{V}$ satisfies the $(1/6, 3k)$-RIP, then we know there exists $\{\widetilde{U}_j\}^N_{j = 1}$ such that $\langle \widetilde{U_j}, \widetilde{V_i}\rangle = \frac{1}{\sqrt{k}}$ if $A_{ji} = 1$ and 
$|\langle \widetilde{U_j}, \widetilde{V_i}\rangle|\le \frac{1}{4\sqrt{k}}$ if $A_{ji} = 0.$ This follows by setting $c$ to be the vector $c = \frac{1}{\sqrt{k}}A_{j, :}$ in Theorem~\ref{thm:candestaocertificates}.

Now, let $\bar{V_i} = (\widetilde{V_i}, (5/8)^{1/2}k^{-1/4}))$ and 
$\bar{U_j} = (\widetilde{U_j}, -(5/8)^{1/2}k^{-1/4}).$ We can easily check that $\langle \bar{U_j}, \bar{V_i}\rangle \ge \frac{3}{8\sqrt{k}}$ if $A_{ji} = 1$ and 
$\langle \bar{U_j}, \bar{V_i}\rangle\le -\frac{3}{8\sqrt{k}}$ if $A_{ji} = 0.$
Finally, we know that $\|\bar{V_i}\|_2\le 2, \|\bar{U_j}\|_2 \le C(1/6) = O(1).$ Thus, setting $V_i = \frac{\bar{V_i}}{\|\bar{V_i}\|_2}$ and 
$U_j = \frac{\bar{U_j}}{\|\bar{U_j}\|_2},$ we obtain 
$\langle {U_j}, {V_i}\rangle \ge \Omega(1/\sqrt{k})$ if $A_{ji} = 1$ and 
$\langle {U_j}, {V_i}\rangle\le -\Omega(1/\sqrt{k})$ if $A_{ji} = 0.$
\end{proof}

\begin{remark}
\label{rmk:candestaoinsufficient}
We note that Corollary~\ref{cor:sqrtkfromRIP} falls short of our main results Theorem~\ref{thm:main} and Theorem~\ref{thm:main_construction} in two ways. First, it only provides margin
$\Omega(1/\sqrt{k})$ instead of the optimal $(1+o_k(1))/(2\sqrt{k}).$ It might be possible to fix this by more 
carefully keeping track of the constants. The biggest challenge seems to be the norm of $C(\delta)$ from Theorem~\ref{thm:candestaocertificates}. 

Much more important, however, seems the limitation that in the $k$-sparse setting, only results for margin $\Theta(1/\sqrt{k})$ seem possible from Theorem~\ref{thm:candestaocertificates}. Concretely, 
note that if $c$ is unit norm in Theorem~\ref{thm:candestaocertificates} (which is without loss of generality as the inequalities are homogeneous in $(U,c)$), there exists some $V_i$ such that $\langle U_j, V_i\rangle \le 1/\sqrt{k}.$ Yet, in the current form of the theorem, it might be the case that $\|U_j\|_2\times \|V_i\|_2\ge 1$ as evidenced by the case $V =I$ when $U_j = c_j.$ Thus, $U_j$ needs to be unit norm. Altogether, 
$\langle \frac{U_j}{\|U_j\|_2}, \frac{V_i}{\|V_i\|_2}\rangle \le 1/\sqrt{k}.$ 

To compare, our Theorem~\ref{thm:main} applies also to $k$-sparse $A$ when a larger margin is possible, say $m = 1/k^{1/4}.$ In that case, it will give dimension $O(k^{1/2}\log n).$

At the same time, our Theorem~\ref{thm:main_construction} gives constructions of smaller margin, say $1/n$ but also in much smaller dimensions. Concretely,  $d = \Theta(k^2)$ for margin $1/n.$
\end{remark}

%% file: LowerBoundBeyondnchoosek.tex
\section{Adapting \texorpdfstring{Theorem~\ref{thm:mainlowerbound}}{Theorem 1.2} to general 
\texorpdfstring{$k$}{k}-sparse matrices}
\label{sec:LowerBoundBeyondnchoosek}

We note that the same proof method of Theorem \ref{thm:mainlowerbound} can be applied to an arbitrary relevance matrix. We first need a modified version of Lemma \ref{lem:reductiontosparselowerbound}. Towards this end, we need the following definition. Let $\mathcal{S}$ be a family of subsets of $[n].$ We say that $B\subseteq [n]$ is shattered if for any $C\subseteq B,$ there exists some $S\in \mathcal{S}$ such that $S\cap B = C.$ 

\begin{lemma}\label{lem:reductiontoVCdimension}
Let $A\in \{0,1\}^{N\times n}$, and let $\{V_i\}_{i=1}^n,\{U_j\}_{j=1}^N \in \mathbb{R}^d$ be a margin-$m$ embedding of $A$. Define the linear map $V:\mathbb{R}^n\to \mathbb{R}^d$ by $Ve_i=V_i$. Suppose that $S\subseteq [n]$ is shattered by the family of row supports
\[
\mathcal{C}_A \coloneqq \bigl\{\{i\in[n]:A_{ji}=1\}:j\in[N]\bigr\}.
\]
Then for any $x\in \mathbb{R}^n$ supported on $S$, it holds that $\|Vx\|_2 \ge m\|x\|_1$.
\end{lemma}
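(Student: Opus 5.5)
The argument mirrors the proof of Lemma~\ref{lem:reductiontosparselowerbound}, with shattering taking the place of ``every $k$-sparse row appears in $\ksparsen$.'' If $x = 0$ there is nothing to prove. Otherwise, since both sides of $\|Vx\|_2 \ge m\|x\|_1$ are positively homogeneous of degree one, we may normalize so that $\|x\|_1 = 1$.

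First, we use shattering to find a row whose sign pattern matches $x$. Set $C \coloneqq \{i\in S : x_i > 0\}\subseteq S$. Because $S$ is shattered by $\mathcal{C}_A$, there is some $j\in[N]$ whose row support $R_j = \{i : A_{ji} = 1\}$ satisfies $R_j\cap S = C$. For every $i\in S$ this gives $A_{ji} = 1$ exactly when $x_i>0$. Consequently, whenever $x_i\neq 0$ (which forces $i\in S$, since $x$ is supported on $S$), the sign $\sigma_i \coloneqq (-1)^{A_{ji}+1}$ agrees with the sign of $x_i$. Coordinates with $x_i = 0$ impose no constraint, whether or not they lie in $S$.

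Next, we express $Vx$ as a convex combination. Write
\[
Vx = \sum_{i} x_i V_i = \sum_{i} |x_i|\,\sigma_i V_i .
\]
The weights $|x_i|$ are nonnegative and sum to $\|x\|_1 = 1$, so $Vx \in \conv(\{\sigma_i V_i\}_{i=1}^n)$. The embedding has margin $m$, so the unit vector $U_j$ satisfies $\sigma_i\langle U_j, V_i\rangle \ge m$ for every $i$. The $(\Longrightarrow)$ direction of Lemma~\ref{lem:convexhullequivalentofmargin} then yields $\|Vx\|_2 \ge m = m\|x\|_1$. Alternatively, one can compute directly that $\|Vx\|_2 \ge \langle U_j, Vx\rangle = \sum_i |x_i|\,\sigma_i\langle U_j,V_i\rangle \ge m$.

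None of these steps is a real obstacle. The only point requiring care is the sign bookkeeping: the definition of shattering must be applied to the set $C$ of positive coordinates, so that the chosen row $j$ gives positive signs exactly on $C$ and negative signs on $S\setminus C$. Indices outside $S$ receive zero weight and therefore do not matter.
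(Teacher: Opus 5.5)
Your proof is correct and follows the paper's argument: normalize to $\|x\|_1=1$, use shattering to choose a row $j$ with $A_{ji}=1 \iff x_i>0$ on $S$, and conclude $\|Vx\|_2 \ge \langle U_j, Vx\rangle \ge m$. Invoking the $(\Longrightarrow)$ direction of Lemma~\ref{lem:convexhullequivalentofmargin} is the same computation in different packaging; the paper does this directly here and uses the lemma in the proof of Lemma~\ref{lem:reductiontosparselowerbound}.
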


\begin{proof}
By homogeneity, it is enough to prove the claim when $\|x\|_1=1$. Since $S$ is shattered, there exists some $j\in [N]$ such that for every $i\in S$, $A_{ji}=1 \iff x_i>0$. Therefore,
$$
\left\langle U_j,\sum_{i\in S}x_iV_i\right\rangle
=
\sum_{i\in S}x_i\langle U_j,V_i\rangle
\ge
m\sum_{i\in S}|x_i|
=
m.
$$
Indeed, if $x_i>0$, then $A_{ji}=1$, so $\langle U_j,V_i\rangle\ge m$, while if $x_i<0$, then $A_{ji}=0$, so $\langle U_j,V_i\rangle\le -m$, and hence again $x_i\langle U_j,V_i\rangle \ge m|x_i|$. Since $\|U_j\|_2=1$, it follows that
$$
\|Vx\|_2
=
\Big\|\sum_{i\in S}x_iV_i\Big\|_2
\ge
\Big\langle U_j,\sum_{i\in S}x_iV_i\Big\rangle
\ge m.\qedhere
$$
\end{proof}

Then we can use a nearly identical proof as in Theorem \ref{thm:mainlowerbound}.

\begin{theorem}[Lower bound for $k$-sparse matrices]\label{rem:ksparsegeneralization}
Let $A\in\{0,1\}^{N\times n}$ be such that every row of $A$ has exactly $k$ ones, and let
$$\mathcal{C}_A \coloneqq \bigl\{\{i\in[n]:A_{ji}=1\}: j\in[N]\bigr\}\subseteq 2^{[n]}
$$
denote the family of row supports. Suppose $\{V_i\}_{i=1}^n,\{U_j\}_{j=1}^N\subset\mathbb{R}^d$ form a margin-$m$ embedding of $A$ with $m > 0$. Fix $\alpha\in(0,1/2]$ and set $s=\lfloor \alpha k\rfloor$. If $s\ge 1$, then
$$
d \ge \frac{\log |\mathcal{F}_s(A)|}{\log\!\left(1+\frac{2}{m\sqrt{s}}\right)},
$$
where $\mathcal{F}_s(A)\subseteq \binom{[n]}{s}$ is a family such that for all distinct $T,T'\in\mathcal{F}_s(A)$, $|T\cap T'|<\frac{s}{2}$ and $T\cup T'$ is shattered by $\mathcal{C}_A$.
\end{theorem}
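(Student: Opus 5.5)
We follow the proof of Theorem~\ref{thm:mainlowerbound} essentially verbatim, replacing Lemma~\ref{lem:reductiontosparselowerbound} by Lemma~\ref{lem:reductiontoVCdimension}. The goal is to exhibit a packing in a ball indexed by $\mathcal{F}_s(A)$ and then apply the volume bound. We may assume $|\mathcal{F}_s(A)|\ge 2$, since otherwise the right-hand side is at most $0$ and there is nothing to prove.

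\emph{Step 1: choosing the packing points.} For each $T\in\mathcal{F}_s(A)$ we pick a sign vector $\sigma^T\in\{\pm1\}^T$ minimizing $\|\sum_{i\in T}\sigma^T_iV_i\|_2$. We set $x_T\coloneqq\sum_{i\in T}\sigma^T_ie_i$ and $y_T\coloneqq Vx_T$. The averaging argument over uniformly random signs gives
\[
\expect\Big\|\sum_{i\in T}\epsilon_iV_i\Big\|_2^2=\sum_{i\in T}\|V_i\|_2^2=s,
\]
since the cross terms vanish and the $V_i$ are unit vectors. Hence the minimizing choice satisfies $\|y_T\|_2\le\sqrt{s}$. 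Unlike the $\ksparsen$ case, no lower bound on $m$ is needed here, because we never use the bound $m\sqrt{k}\le 1$.

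\emph{Step 2: separation.} Take distinct $T,T'\in\mathcal{F}_s(A)$. The vector $x_T-x_{T'}$ is supported on $T\cup T'$, and this set is shattered by $\mathcal{C}_A$ by the defining property of $\mathcal{F}_s(A)$. So Lemma~\ref{lem:reductiontoVCdimension} gives
\[
\|y_T-y_{T'}\|_2\ge m\|x_T-x_{T'}\|_1.
\]
Every coordinate in $T\triangle T'$ is $\pm1$ in $x_T-x_{T'}$. Therefore
\[
\|x_T-x_{T'}\|_1\ge|T\triangle T'|=2s-2|T\cap T'|>s,
\]
using $|T\cap T'|<s/2$. We conclude $\|y_T-y_{T'}\|_2\ge ms$.

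\emph{Step 3: volume bound.} Rescale to $\tilde y_T=y_T/\sqrt{s}$. These points lie in $B_d(0,1)$ and are pairwise $m\sqrt{s}$-separated. The standard packing bound (\cite[Corollary 4.2.11]{Vershynin2026}, as in the proof of Theorem~\ref{thm:mainlowerbound}) then yields $|\mathcal{F}_s(A)|\le(1+2/(m\sqrt s))^d$. Taking logarithms gives the claim, since $m>0$ ensures the denominator is positive.

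\emph{Main obstacle.} The only real point of care is Step 2. Lemma~\ref{lem:reductiontoVCdimension} has to be applied to a difference $x_T-x_{T'}$ whose support is exactly $T\cup T'$, and that set must be shattered. This is precisely why the hypothesis on $\mathcal{F}_s(A)$ is phrased in terms of $T\cup T'$ rather than $T$ alone, and it replaces the role of $s\le k/8$ from the $\ksparsen$ case. Once this is in place, the remaining steps are routine.
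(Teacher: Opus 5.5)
Your proposal is correct and follows the paper's proof essentially step for step. The steps are the norm-minimizing sign choice with the averaging bound $\|y_T\|_2\le\sqrt{s}$, the separation $\|y_T-y_{T'}\|_2\ge ms$ via Lemma~\ref{lem:reductiontoVCdimension} on the shattered set $T\cup T'$, and the volume bound after rescaling by $\sqrt{s}$. One small wording fix: the support of $x_T-x_{T'}$ is only contained in $T\cup T'$, not exactly equal to it, since coordinates in $T\cap T'$ with equal signs cancel; the lemma needs only containment, so nothing changes.
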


\begin{proof}[Proof of Theorem~\ref{rem:ksparsegeneralization}]
Let $V$ be as in Lemma \ref{lem:reductiontoVCdimension}. Set $s=\lfloor \alpha k\rfloor$. By definition, there exists a family $\mathcal{F}_s(A) \subseteq \binom{[n]}{s}$ such that for any distinct $T,T' \in \mathcal{F}_s(A)$, we have $|T\cap T'| < \frac{s}{2}$ and $T\cup T'$ is shattered by $\mathcal{C}_A$.

For each $T \in \mathcal{F}_s(A)$, define the sign vector $\sigma^T \in \{\pm 1\}^T$ so that $\left\|\sum_{i \in T} \sigma_i^T V_i\right\|_2$ is minimized, and set $y_T \coloneqq \sum_{i \in T} \sigma_i^T V_i$.

To bound $\|y_T\|_2$, let $\epsilon$ be uniformly random in $\{\pm 1\}^T$. Then

$$\mathbb{E}\left\|\sum_{i\in T}\epsilon_i V_i\right\|_2^2
=
\mathbb{E}\left[
\sum_{i\in T}\|V_i\|_2^2
+
\sum_{\substack{i,\ell\in T\\ i\neq \ell}}
\epsilon_i\epsilon_\ell \langle V_i,V_\ell\rangle
\right]
=
\sum_{i\in T}\|V_i\|_2^2
=
s,
$$
since the cross terms vanish in expectation. Because $\sigma^T$ minimizes the norm, it follows that
\[
\|y_T\|_2^2
=
\Bigl\|\sum_{i\in T}\sigma_i^T V_i\Bigr\|_2^2
\le s.
\]
Hence $y_T \in B_d(0,\sqrt{s})$ for all $T \in \mathcal{F}_s(A)$.

Now define $x_T \coloneqq \sum_{i \in T}\sigma_i^T e_i$ so that $Vx_T = y_T$. Consider distinct $T,T' \in \mathcal{F}_s(A)$. Since $\operatorname{supp}(x_T-x_{T'}) \subseteq T\cup T'$ and $T\cup T'$ is shattered by $\mathcal{C}_A$, Lemma \ref{lem:reductiontoVCdimension} gives
$$
\|V(x_T-x_{T'})\|_2 \ge m\|x_T-x_{T'}\|_1.
$$
Moreover, since every coordinate of $x_T$ and $x_{T'}$ lies in $\{0,\pm 1\}$,
$$
\|x_T-x_{T'}\|_1 \ge |T\triangle T'|
= |T|+|T'|-2|T\cap T'|
> 2s-s
= s.
$$
Therefore,
$$
\|y_T-y_{T'}\|_2
=
\|Vx_T - Vx_{T'}\|_2
=
\|V(x_T-x_{T'})\|_2
\ge m\|x_T-x_{T'}\|_1
\ge ms.
$$

Thus, the vectors $\{y_T\}_{T\in \mathcal{F}_s(A)}$ are pairwise $ms$-separated in the ball $B_d(0,\sqrt{s})$. When we rescale by $\tilde y_T \coloneqq y_T/\sqrt{s}$, we obtain a collection of points in $B_d(0,1)$ that are pairwise $m\sqrt{s}$-separated. By the standard volume bound (for example \cite[Corollary 4.2.11]{Vershynin2026}),
$$
|\mathcal{F}_s(A)| \le \left(1+\frac{2}{m\sqrt{s}}\right)^d.
$$
So, it immediately follows that when $s=\lfloor \alpha k\rfloor$, we have
$$
d \geq \frac{\log |\mathcal{F}_{\lfloor \alpha k\rfloor}(A)|}{\log\left(1+\frac{2}{m\sqrt{\lfloor \alpha k\rfloor}}\right)}$$
which concludes the proof.
\end{proof}

%% file: ProofsofMarginImportance.tex
\section{Proofs of Propositions on Margin Importance}
\label{sec:margin-importance}

\begin{proposition}[Compositional Generalization and Margin]
\label{prop:margincompositional}
Suppose $\{U_j\}_{j = 1}^N$, $\{V_i\}_{i = 1}^n$ form a margin-$m$ embedding in $\mathbb{R}^d$ of $\ksparsen$ with $0 < m < 1$. Then for all $T \subset [n]$ with $k \leq |T| \leq \min(k + \frac{2km}{1 -m} ,n - 1)$, there exists an $h \in \mathbb{S}^{d-1}$ and $c \in \mathbb{R}$ such that for all $i\in T,$
it holds that 
$\langle h, V_i\rangle \geq c$ while for all $j \not \in T,$
$\langle h, V_j\rangle \leq c.$
\end{proposition}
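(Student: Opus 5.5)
The plan is to build $h$ as an aggregate of the query embeddings that correspond to $k$-subsets of $T$. Let $t \coloneqq |T|$. For each $S\in\binom{T}{k}$, let $j(S)$ be the row of $\ksparsen$ whose support is $S$, and define
$$
\bar h \coloneqq \sum_{S\in\binom{T}{k}} U_{j(S)}.
$$
We will choose $h = \bar h/\|\bar h\|_2$, and take $c$ to be the common separating threshold divided by $\|\bar h\|_2$. Since the required inequalities are homogeneous in $(h,c)$, it suffices to find a single $c'$ with $\langle \bar h, V_i\rangle \ge c'$ for $i\in T$ and $\langle \bar h, V_i\rangle\le c'$ for $i\notin T$.

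\emph{Indices outside $T$.} Fix $i\notin T$. Then $i$ lies in none of the sets $S$, so every term satisfies $\langle U_{j(S)}, V_i\rangle\le -m$. Hence
$$
\langle \bar h, V_i\rangle\le -m\binom{t}{k}.
$$
Because $t\le n-1$, at least one such $i$ exists. Combined with Cauchy--Schwarz and unit norms, this gives $\|\bar h\|_2 \ge m\binom{t}{k}>0$, so the normalization is legitimate.

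\emph{Indices inside $T$.} Fix $i\in T$. Exactly $\binom{t-1}{k-1}$ of the sets $S$ contain $i$, and each such term contributes at least $m$. The remaining $\binom{t-1}{k}$ terms contribute at least $-1$ each, by Cauchy--Schwarz and unit norms. Hence
$$
\langle \bar h, V_i\rangle\ge m\binom{t-1}{k-1}-\binom{t-1}{k}.
$$

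\emph{Choosing $c'$.} Set $c'\coloneqq -m\binom{t}{k}$. It remains to verify
$$
m\binom{t-1}{k-1}-\binom{t-1}{k}\ge -m\binom{t}{k}.
$$
Divide by $\binom{t-1}{k-1}$ and use the identities $\binom{t-1}{k}/\binom{t-1}{k-1}=(t-k)/k$ and $\binom{t}{k}/\binom{t-1}{k-1}=t/k$. The condition becomes $mk-(t-k)+mt\ge 0$, that is, $t(1-m)\le k(1+m)$. Since $0<m<1$, this is equivalent to
$$
t\le \frac{k(1+m)}{1-m}=k+\frac{2km}{1-m},
$$
which is precisely the hypothesis.

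The only delicate points are the two edge issues already handled above: $\bar h\neq 0$, which follows from $t\le n-1$, and the use of the crude bound $-1$ for the nonmatching terms. The latter is what makes the constant $2km/(1-m)$ appear exactly, so I expect no further obstacle. The proof concludes by setting $h=\bar h/\|\bar h\|_2$ and $c=c'/\|\bar h\|_2$.
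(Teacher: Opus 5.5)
Your proposal is correct and follows essentially the same argument as the paper. It takes $h_T=\sum_{S\subseteq T,|S|=k}U_S$, bounds the inner products by $m$ and $-1$ for indices inside $T$ and by $-m$ outside $T$, and separates at the threshold $-m\binom{|T|}{k}$, using $|T|\le n-1$ to guarantee $h_T\neq 0$.
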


\begin{proof}[Proof of Proposition \ref{prop:margincompositional}]

For a set $S \subseteq [n]$ with $|S| = k$, let $U_S$ denote the vector such that $\langle U_S, V_i\rangle \geq m$ for all $i \in S$ and $\langle U_S, V_j\rangle \leq -m$ for all $j \notin S$. Define
$$
h_T \coloneqq \sum_{S \subseteq T,\ |S| = k} U_S.
$$
Then
\begin{align*}
\langle h_T, V_i\rangle
&=
\left\langle
\sum_{S \subseteq T,\ i \in S,\ |S| = k} U_S
+
\sum_{S \subseteq T,\ i \notin S,\ |S| = k} U_S,
V_i
\right\rangle\\
&=
\sum_{S \subseteq T,\ i \in S,\ |S| = k} \langle U_S, V_i\rangle
+
\sum_{S \subseteq T,\ i \notin S,\ |S| = k} \langle U_S, V_i\rangle.
\end{align*}

Notice that if $i \in S$, then by definition $\langle U_S, V_i\rangle \geq m$. If $i \notin S$, then since $U_S$ and $V_i$ have unit norm, we have $\langle U_S, V_i\rangle \geq -1$.

Now fix any $i \in T$. Then there are exactly
$\binom{|T|-1}{k-1}$ sets of size $k$ containing $i$ and contained in $T$. Also, there are exactly $\binom{|T|}{k} - \binom{|T|-1}{k-1}$ sets of size $k$ contained in $T$ that do not contain $i$.

So it follows that
\begin{align*}
\langle h_T, V_i\rangle
&=
\sum_{S \subseteq T,\ i \in S,\ |S| = k} \langle U_S, V_i\rangle
+
\sum_{S \subseteq T,\ i \notin S,\ |S| = k} \langle U_S, V_i\rangle \\
&\geq
m \binom{|T|-1}{k-1}
+
(-1)\left( \binom{|T|}{k} - \binom{|T|-1}{k-1}\right) \\
&=
(1+m)\binom{|T|-1}{k-1} - \binom{|T|}{k}\\
&=
\frac{(1+m)k}{|T|} \binom{|T|}{k} -  \binom{|T|}{k}\\
&=
\frac{(1+m)k - |T|}{|T|} \binom{|T|}{k}.
\end{align*}

Furthermore, if $i \notin T$, then $\langle U_S, V_i\rangle \leq -m$ for all $S \subseteq T$ with $|S| = k$. Since there are $\binom{|T|}{k}$ subsets of size $k$ contained in $T$, we get
$$
\langle h_T, V_i\rangle
=
\sum_{S \subseteq T,\ |S| = k} \langle U_S, V_i\rangle
\leq -m \binom{|T|}{k}.
$$
We also know that, since $m < 1$,
\begin{align*}
k + \frac{2km}{1-m} \geq |T|
&\implies (1-m)k + 2mk \geq |T|(1-m) \\
&\implies (1+m)k - |T| \geq -m|T| \\
&\implies \frac{(1+m)k - |T|}{|T|} \geq -m.
\end{align*}

Therefore, for every $i \in T$ and $j \notin T$, we have
$$
\langle h_T, V_i\rangle
\geq
\frac{(1+m)k - |T|}{|T|}\binom{|T|}{k}
\geq
-m\binom{|T|}{k}
\geq
\langle h_T, V_j\rangle.
$$
Therefore, there exists a real constant $c' \in \mathbb{R}$ satisfying
$$
\langle h_T, V_i\rangle \geq c' \geq \langle h_T, V_j\rangle
\qquad
\text{for all } i \in T \text{ and } j \notin T.
$$

Notice that $\|h_T\| \neq 0$ because, since $|T| < n$, for some $i \notin T$ we have $\langle V_i, h_T \rangle \leq -m\binom{|T|}{k} < 0$. So, if we set $h\coloneqq h_T/\|h_T\|$, it follows that there exists an $h \in \mathbb{S}^{d-1}$ and $c = c'/\|h_T\|\in \mathbb{R}$ such that 
$$\langle h, V_i\rangle \geq c \qquad \forall i \in T, \quad \text{and}\quad \langle h, V_j\rangle \leq c \qquad \forall j \notin T.\qedhere$$
\end{proof}

We note that even a small margin already suffices to force downward generalization. Indeed, once $m \geq \frac{k -1 }{2n - k - 1}$, the following proposition implies that every nonempty subset with $1 \leq |T| \leq k$ is retrievable. This is related to the notion of a $k$-neighborly polytope \cite{grunbaum1967convex}: if $\{V_i\}_{i = 1}^n$ forms a $k$-neighborly polytope, then every subset of at most $k$ vertices forms a face and is therefore exposed by a supporting hyperplane. Thus, every such subset can be separated from the remaining vertices and hence can be retrieved.

\begin{proposition}[Downward Generalization and Margin]
\label{prop:margindownward}
Suppose $\{U_j\}_{j = 1}^N$, $\{V_i\}_{i = 1}^n$ form a margin-$m$ embedding in $\mathbb{R}^d$ of $\ksparsen$ with $0 < m < 1$. Then for all $T \subset [n]$ with $\max\{1, k - \frac{2m(n-k)}{1-m}\} \leq |T| \leq k$, there exists an $h \in \mathbb{S}^{d-1}$ and $c \in \mathbb{R}$ such that for all $i\in T,$
it holds that 
$\langle h, V_i\rangle \geq c$ while for all $j \not \in T,$
$\langle h, V_j\rangle \leq c.$
\end{proposition}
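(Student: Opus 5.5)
The natural approach is to mirror the proof of Proposition~\ref{prop:margincompositional}, but now average \emph{supersets} of $T$ instead of subsets. For $|S|=k$ let $U_S$ be the query vector with $\langle U_S,V_i\rangle\ge m$ for $i\in S$ and $\le -m$ for $i\notin S$. Fix $T$ with $t:=|T|\le k$ and define
$$
h_T\coloneqq \sum_{S\supseteq T,\ |S|=k} U_S,
$$
a sum over $M\coloneqq\binom{n-t}{k-t}$ sets. The plan is to show that $\min_{i\in T}\langle h_T,V_i\rangle\ge \max_{j\notin T}\langle h_T,V_j\rangle$, pick $c$ in between, and normalize.

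\textbf{Step 1 (points of $T$).} For $i\in T$, every $S$ in the sum contains $i$, so $\langle h_T,V_i\rangle\ge mM$.

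\textbf{Step 2 (points outside $T$).} For $j\notin T$, split the sum according to whether $j\in S$. There are $\binom{n-t-1}{k-t-1}=\frac{k-t}{n-t}M$ sets with $j\in S$; for these, bound $\langle U_S,V_j\rangle\le 1$. The remaining $\frac{n-k}{n-t}M$ sets have $j\notin S$, and for these $\langle U_S,V_j\rangle\le -m$. Therefore
$$
\langle h_T,V_j\rangle\le M\frac{(k-t)-m(n-k)}{n-t}.
$$

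\textbf{Step 3 (compare).} It suffices that $(k-t)-m(n-k)\le m(n-t)$, i.e.\ $k-t\le m(2n-k-t)$. Writing $k-t=x$, this reads $x\le m(2(n-k)+x)$, i.e.\ $x(1-m)\le 2m(n-k)$, which is exactly the hypothesis $t\ge k-\frac{2m(n-k)}{1-m}$. (When $t=k$ it is trivial.)

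\textbf{Step 4 (normalize).} Finally, $h_T\ne 0$, since $\langle h_T,V_i\rangle\ge mM>0$ for $i\in T$ and $T$ is nonempty. Set $h=h_T/\|h_T\|$ and let $c$ be any value between the two bounds, rescaled by $\|h_T\|$.

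The only delicate points are getting the binomial ratio $\binom{n-t-1}{k-t-1}/\binom{n-t}{k-t}=\frac{k-t}{n-t}$ right and checking that the algebra in Step 3 matches the stated threshold. This is also consistent with the remark that $m\ge\frac{k-1}{2n-k-1}$ covers every $t\ge1$.
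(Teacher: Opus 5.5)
Your proposal is correct and follows essentially the same route as the paper. You sum $U_S$ over all $k$-supersets $S$ of $T$, bound $\langle h_T,V_i\rangle\ge m\binom{n-|T|}{k-|T|}$ for $i\in T$, and bound $\langle h_T,V_j\rangle$ for $j\notin T$ using $1$ and $-m$. You then reduce to $m\ge \frac{k-|T|}{k-|T|+2(n-k)}$. Writing the counts as $\frac{k-t}{n-t}M$ and $\frac{n-k}{n-t}M$ is just an equivalent form of the paper's comparison via Pascal's identity.
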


\begin{proof}[Proof of Proposition \ref{prop:margindownward}]
The case $|T|=k$ is immediate by taking $h=U_T$ and $c=0$, so assume $|T|<k$.

We use the same notation as in the proof of Proposition \ref{prop:margincompositional}. Define
$$
h_T \coloneqq \sum_{S\supset T,\ |S| = k} U_S.
$$
Then
\begin{align*}
\langle h_T, V_i\rangle
&=
\left\langle
\sum_{S\supset T,\ i \in S,\ |S| = k} U_S
+
\sum_{S\supset T,\ i \notin S,\ |S| = k} U_S,
V_i
\right\rangle\\
&=
\sum_{S\supset T,\ i \in S,\ |S| = k} \langle U_S, V_i\rangle
+
\sum_{S\supset T,\ i \notin S,\ |S| = k} \langle U_S, V_i\rangle.
\end{align*}

Again, if $i \in S$, then by definition $\langle U_S, V_i\rangle \geq m$.

Now fix any $i \in T$. Then there are exactly
$\binom{n - |T|}{k - |T|}$ sets of size $k$ containing $T$, and so 
$$
\langle h_T, V_i\rangle \geq m \binom{n - |T|}{k - |T|}.
$$

For $j \notin T$, the number of sets of size $k$ containing $T$ and $j$ is $\binom{n - |T| - 1}{k - |T| - 1}$, while the number of sets of size $k$ containing $T$ but not $j$ is $\binom{n - |T| - 1}{k - |T|}$. Hence, using $\langle U_S,V_j\rangle \leq 1$ when $j\in S$ and $\langle U_S,V_j\rangle \leq -m$ when $j\notin S$, we have
$$
\langle h_T,V_j\rangle
\leq
\binom{n-|T|-1}{k-|T|-1}
-
m\binom{n-|T|-1}{k-|T|}.
$$
Thus it suffices to show
$$
\binom{n-|T|-1}{k-|T|-1}- m\binom{n-|T|-1}{k-|T|}
\leq
m\binom{n-|T|}{k-|T|}.
$$
Since
$$
\binom{n-|T|}{k-|T|} = \binom{n-|T|-1}{k-|T|-1} + \binom{n-|T|-1}{k-|T|},
$$
this inequality is equivalent to
$$
m \geq
    \frac{\binom{n-|T|-1}{k-|T|-1}}
    {\binom{n-|T|-1}{k-|T|-1}+2\binom{n-|T|-1}{k-|T|}}
    =
    \frac{k-|T|}{k-|T|+2(n-k)}
    \iff
    |T| \geq k - \frac{2m(n-k)}{1-m}.
$$

Therefore, for every $i \in T$ and $j \notin T$, 
$$
\langle h_T, V_i\rangle
\geq
c'\coloneqq m \binom{n - |T|}{k - |T|}
\geq
\langle h_T, V_j\rangle.
$$
Since $c'>0$, we have $h_T\neq 0$. Taking $h=\frac{h_T}{\|h_T\|}$ and $c=\frac{c'}{\|h_T\|}
$ completes the proof.
\end{proof}

\begin{proposition}[Robustness and Margin]
\label{prop:marginrobustness}
Suppose that the object space $\mathcal{O}$ and the query space $\mathcal{Q}$ are metric spaces with metrics $d_{\mathcal{O}}$ and $d_{\mathcal{Q}},$ respectively. Suppose that the corresponding encoders $f_\theta$ and $g_\phi$ have relative-bias-$\tau$ and margin $m$ on the datasets $\{Q_j\}_{j=1}^N \subset \mathcal{Q}$ and $ \{O_i\}_{i=1}^n \subset \mathcal{O}$ with relevance matrix $A\in \{0,1\}^{N\times n}$.  If $f_\theta$ and $g_\phi$ are both $L$-Lipschitz for some $L > 0$, then for every $j\in[N]$, every perturbation $Q_j' \in \mathcal{Q}$ satisfying
$
d_{\mathcal{Q}}(Q_j',Q_j)<\frac{m}{L}
$
preserves perfect retrieval. Namely,
$
\langle g_\phi(Q_j'), f_\theta(O_i)\rangle > \tau \quad \text { if } A_{ji}=1
$ and 
$
\langle g_\phi(Q_j'), f_\theta(O_i)\rangle < \tau \quad \text{if } A_{ji}=0.
$ Likewise, the same holds for small perturbations of the objects.
\end{proposition}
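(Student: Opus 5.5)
The argument is a direct Cauchy--Schwarz estimate combined with the Lipschitz property of the encoders. I treat query perturbations first and then explain the symmetric argument for object perturbations. Fix $j\in[N]$ and a perturbation $Q_j'$ with $d_{\mathcal{Q}}(Q_j',Q_j)<m/L$. Since $g_\phi$ is $L$-Lipschitz into the sphere, with the Euclidean norm on the embedding side,
$$\|g_\phi(Q_j')-g_\phi(Q_j)\|_2\le L\, d_{\mathcal{Q}}(Q_j',Q_j)<m .$$

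Now fix any object $i$. Write $U_j=g_\phi(Q_j)$, $U_j'=g_\phi(Q_j')$, and $V_i=f_\theta(O_i)$, where $V_i$ is unit norm. By Cauchy--Schwarz,
$$|\langle U_j',V_i\rangle-\langle U_j,V_i\rangle|\le \|U_j'-U_j\|_2\,\|V_i\|_2<m .$$
We split into the two cases of Definition~\ref{def:maindefinition}.

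If $A_{ji}=1$, then $\langle U_j,V_i\rangle\ge \tau+m$, so
$$\langle U_j',V_i\rangle>\tau+m-m=\tau .$$
If $A_{ji}=0$, then $\langle U_j,V_i\rangle\le\tau-m$, so
$$\langle U_j',V_i\rangle<\tau .$$
Together these give perfect retrieval for $Q_j'$. The strict inequalities come precisely from the strict bound $d_{\mathcal{Q}}(Q_j',Q_j)<m/L$.

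For object perturbations, take $O_i'$ with $d_{\mathcal{O}}(O_i',O_i)<m/L$. The same argument with the roles of $f_\theta$ and $g_\phi$ swapped applies, now using $\|U_j\|_2=1$ for every query $j$. This shows that $\langle U_j,f_\theta(O_i')\rangle$ lies on the correct side of $\tau$ for all $j$.

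There is no real obstacle here. The only point needing care is the convention that Lipschitzness of the encoders is measured in the Euclidean norm on $\mathbb{R}^d$, and that the embeddings are unit norm so that Cauchy--Schwarz contributes no extra factor.
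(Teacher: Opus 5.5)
Your proof is correct and follows the paper's route. The Lipschitz bound gives $\|g_\phi(Q_j')-g_\phi(Q_j)\|_2<m$, Cauchy--Schwarz with the unit-norm $f_\theta(O_i)$ transfers this to the inner products, and the two margin cases finish the argument; your brief symmetric treatment of object perturbations (using $\|U_j\|_2=1$) is a small addition, since the paper's proof writes out only the query case.
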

\begin{proof}[Proof of Proposition \ref{prop:marginrobustness}]

Since $d_{\mathcal{Q}}(Q_j',Q_j)< \frac{m}{L}$ and $g_\phi$ is $L$-Lipschitz, we have
\[
\|g_\phi(Q_j')-g_\phi(Q_j)\|
\leq L\, d_{\mathcal{Q}}(Q_j',Q_j)
< m.
\]

Now fix $i\in[n]$. If $A_{ji}=1$, then since $f_\theta(O_i)$ has norm $1$, we have
\[
\bigl|\langle g_\phi(Q_j')-g_\phi(Q_j),\, f_\theta(O_i)\rangle\bigr|
\leq \|g_\phi(Q_j')-g_\phi(Q_j)\|\,\|f_\theta(O_i)\|
< m.
\]
Therefore,
\begin{align*}
\langle g_\phi(Q_j'), f_\theta(O_i)\rangle
&=
\langle g_\phi(Q_j')-g_\phi(Q_j), f_\theta(O_i)\rangle
+
\langle g_\phi(Q_j), f_\theta(O_i)\rangle \\
&> -m + (\tau + m) = \tau.
\end{align*}

Similarly, if $A_{ji}=0$, then
\begin{align*}
\langle g_\phi(Q_j'), f_\theta(O_i)\rangle
&=
\langle g_\phi(Q_j')-g_\phi(Q_j), f_\theta(O_i)\rangle
+
\langle g_\phi(Q_j), f_\theta(O_i)\rangle \\
&< m + (\tau - m) = \tau.\qedhere
\end{align*}
\end{proof}

\begin{proposition}[Maximal-Margin Embeddings and Sigmoid Loss]
\label{prop:marginsigmoid} 
Consider the sigmoid loss with inverse-temperature $t>0$ and bias $b$ over embeddings $\{U_j\}_{j = 1}^N, \{V_i\}_{i = 1}^n:$ 
$$
\sigloss(\{U_j\}_{j = 1}^N, \{V_i\}_{i = 1}^n; t, b)\coloneqq \sum_{j\in [N], i \in [n]} \log\Big(1 + \exp\big((-1)^{A_{ji}}(t\langle U_j, V_i\rangle -b)\big)\Big).
$$
Then
\begin{enumerate}
    \item If $\sigloss(\{U_j\}_{j = 1}^N, \{V_i\}_{i = 1}^n; t, b) < \log 2,$ then there exists $m > 0$ such that $\{U_j\}_{j = 1}^N, \{V_i\}_{i = 1}^n$ is a margin-$m,$ relative-bias-$\frac{b}{t}$ embedding of $A$.
    \item If $\{U_j\}_{j=1}^N$ and $\{V_i\}_{i=1}^n$ form a margin-$m_*$, relative-bias-$\tau$ embedding of $A$ where
$$
m_* \coloneqq \min\left\{
\min_{A_{ji}=1}\left(\langle U_j,V_i\rangle-\tau\right),
\min_{A_{ji}=0}\bigl(\tau-\langle U_j,V_i\rangle\bigr)
\right\} > 0.
$$
Then
$$
\lim_{T\to\infty}\frac{1}{T}\log \sigloss(\{U_j\}_{j=1}^N,\{V_i\}_{i=1}^n;T,T\tau)=-m_*.
$$
\end{enumerate}
\end{proposition}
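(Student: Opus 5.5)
Part 1 is a direct termwise argument. Each summand of $\sigloss$ is positive, so $\sigloss<\log 2$ forces every summand to be below $\log 2$. For a fixed pair $(j,i)$, the inequality $\log(1+\exp(z_{ji}))<\log 2$ with $z_{ji}=(-1)^{A_{ji}}(t\langle U_j,V_i\rangle-b)$ is equivalent to $z_{ji}<0$. When $A_{ji}=1$ this reads $t\langle U_j,V_i\rangle-b>0$, i.e. $\langle U_j,V_i\rangle>b/t$. When $A_{ji}=0$ it reads $\langle U_j,V_i\rangle<b/t$. Since there are finitely many pairs, we can take
\[
m\coloneqq\min_{j,i}|\langle U_j,V_i\rangle-b/t|>0,
\]
and this gives the margin-$m$, relative-bias-$b/t$ embedding claimed in Part 1.

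For Part 2, substitute $t=T$, $b=T\tau$. Each summand becomes $\log(1+e^{-T\mu_{ji}})$, where
\[
\mu_{ji}\coloneqq(-1)^{A_{ji}+1}(\langle U_j,V_i\rangle-\tau)\ge m_*>0,
\]
and by the definition of $m_*$ as a minimum, equality holds for at least one pair. We then sandwich the loss. Using $\log(1+y)\le y$ gives
\[
\sigloss\le\sum_{j,i}e^{-T\mu_{ji}}\le Nn\,e^{-Tm_*}.
\]
Using $\log(1+y)\ge y/2$ for $y\in[0,1]$, and keeping only a minimizing pair, gives
\[
\sigloss\ge\tfrac12 e^{-Tm_*}.
\]
Taking logarithms and dividing by $T$, both bounds become $-m_*+O(1/T)$, since $\log(Nn)/T\to0$ and $\log 2/T\to0$. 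The limit therefore equals $-m_*$.

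Neither step poses a real obstacle. The only points needing care are that the lower bound uses an actual pair attaining $m_*$, which exists because the minimum is over a finite set, and that all summands are nonnegative, so dropping the other terms in the lower bound is legitimate.
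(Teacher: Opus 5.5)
Your proof is correct and follows essentially the same route as the paper. Part 1 uses the same termwise $\log 2$ argument, and Part 2 uses the same sandwich $\tfrac12 e^{-Tm_*}\le \sigloss \le Nn\,e^{-Tm_*}$, obtained from $\log(1+y)\le y$ and from $\log(1+y)\ge y/2$ on $[0,1]$ applied to a pair attaining $m_*$; since $e^{-Tm_*}\le 1$ for every $T>0$, you correctly need no ``sufficiently large $T$'' caveat.
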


\begin{proof}[Proof of Proposition \ref{prop:marginsigmoid}]

We begin with part (1). 
Suppose that $\sigloss(\{U_j\}_{j=1}^N,\{V_i\}_{i=1}^n;t,b)<\log 2$. For each fixed pair $(j,i)$, the term $\log\Big(1 + \exp\big((-1)^{A_{ji}}(t\langle U_j, V_i\rangle -b)\big)\Big)$ is nonnegative. So, it follows that for every fixed pair $(j,i),$
$$
\log\Bigl(1+\exp\bigl((-1)^{A_{ji}}(t\langle U_j,V_i\rangle-b)\bigr)\Bigr)<\log 2.
$$
Since the function $x\mapsto \log(1+e^x)$ is strictly increasing, we obtain
$$
(-1)^{A_{ji}}(t\langle U_j,V_i\rangle-b)<0.
$$
If $A_{ji}=1$, then $-(t\langle U_j,V_i\rangle-b)<0$, so $t\langle U_j,V_i\rangle-b>0.$ If $A_{ji}=0$, then $t\langle U_j,V_i\rangle-b<0.$

Since there are finitely many pairs, set
$$
m\coloneqq
\min\left\{
\min_{A_{ji}=1}\left(\langle U_j,V_i\rangle-\frac {b}{t}\right),
\min_{A_{ji}=0}\left(\frac {b}{t}-\langle U_j,V_i\rangle\right)
\right\}>0.
$$
Then $\{U_j\}_{j=1}^N,\{V_i\}_{i=1}^n$ form a margin-$m$, relative-bias-$\frac {b}{t}$ embedding of $A$.


We now prove part 2. For each pair $(j,i) \in [N] \times [n]$, define
$$
\gamma_{ji}\coloneqq
\begin{cases}
\langle U_j,V_i\rangle-\tau, & A_{ji}=1,\\[4pt]
\tau-\langle U_j,V_i\rangle, & A_{ji}=0.
\end{cases}
$$
Then $\gamma_{ji}>0$ for all $(j,i)$, and by definition $m_*=\min_{j,i}\gamma_{ji}$.

If $A_{ji}=1$, then
$$
(-1)^{A_{ji}}\bigl(T\langle U_j,V_i\rangle-T\tau\bigr)
=
-\bigl(T\langle U_j,V_i\rangle-T\tau\bigr)
=
-T(\langle U_j,V_i\rangle-\tau)
=
-T\gamma_{ji}.
$$
If $A_{ji}=0$, then
$$
(-1)^{A_{ji}}\bigl(T\langle U_j,V_i\rangle-T\tau\bigr)
=
T\langle U_j,V_i\rangle-T\tau
=
-T(\tau-\langle U_j,V_i\rangle)
=
-T\gamma_{ji}.
$$
Therefore,
$$
\sigloss(\{U_j\}_{j=1}^N,\{V_i\}_{i=1}^n;T,T\tau)
=
\sum_{j\in[N],\,i\in[n]}\log\!\bigl(1+e^{-T\gamma_{ji}}\bigr).
$$

Since $\gamma_{ji}\ge m_*$ for all $(i,j)$, we obtain the upper bound
$$\sigloss(\{U_j\}_{j = 1}^N,\{V_i\}_{i = 1}^n;T,T\tau)
\leq
Nn\log(1+e^{-Tm_*})
\leq
Nne^{-Tm_*},
$$
where we used $\log(1+x)\le x$ for all $x\ge0$. For the lower bound, choose $(j_0,i_0)$ such that $\gamma_{j_0i_0}=m_*$. Then $\sigloss(\{U_j\}_{j=1}^N,\{V_i\}_{i = 1}^n;T,T\tau)
\ge
\log(1+e^{-Tm_*})$. Also, for $x\in[0,1]$, $\log(1+x)\ge \frac{x}{2}$. Hence for all sufficiently large $T$ (so that $e^{-Tm_*}\le1$), $\log(1+e^{-Tm_*})\ge \frac12 e^{-Tm_*}$. Thus for all sufficiently large $T$,
$$
\frac{1}{2} e^{-Tm_*}
\leq
\sigloss(\{U_j\}_{j = 1}^N,\{V_i\}_{i = 1}^n;T,T\tau)
\le
Nne^{-Tm_*}.
$$

Taking logarithms gives
$$
-Tm_*-\log 2
\le
\log \sigloss(\{U_j\}_{j = 1}^N,\{V_i\}_{i = 1}^n;T,T\tau)
\le
-Tm_*+\log(Nn).
$$
Dividing by $T$ and letting $T\to\infty$, we conclude that
$$
\lim_{T\to\infty}
\frac{1}{T}\log \sigloss(\{U_j\}_{j = 1}^N,\{V_i\}_{i = 1}^n;T,T\tau)
=
-m_*\qedhere
$$
\end{proof}

\begin{proposition}[Large Margins and Quantization]
\label{prop:marginquantization}
Suppose that $\{U_j\}_{j = 1}^N$, $\{V_i\}_{i = 1}^n$ is a margin-$m$ embedding of $A$. Then there exists a codebook $\mathcal{C} \subset \mathbb{S}^{d-1}$ with size $|\mathcal{C}| \leq \left(1 + \frac{8}{m}\right)^d$ such that the quantized embeddings $\tilde{U}_j \in \arg\min_{x \in \mathcal{C}} \|x - U_j\|$ and $\tilde{V}_i \in \arg\min_{x \in \mathcal{C}} \|x - V_i\|$ form a margin-$\frac{m}{2}$ embedding.
\end{proposition}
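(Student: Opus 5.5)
Since the points of $\mathcal{C}$ are unit vectors, the quantized embeddings are automatically unit norm. It then suffices to bound how much each inner product can move under quantization. The plan is a standard $\delta$-net argument.

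\textbf{Step 1 (codebook).} Take $\mathcal{C}$ to be a maximal $\delta$-separated subset of $\mathbb{S}^{d-1}$ with $\delta = m/4$. By maximality, $\mathcal{C}$ is a $\delta$-net of the sphere: every unit vector lies within distance $\delta$ of some point of $\mathcal{C}$. The same volume bound invoked in the proof of Theorem~\ref{thm:mainlowerbound} (\cite[Corollary 4.2.11]{Vershynin2026}) gives
\[
|\mathcal{C}|\le \Big(1+\frac{2}{\delta}\Big)^d=\Big(1+\frac{8}{m}\Big)^d .
\]
Consequently $\|\tilde U_j-U_j\|_2\le m/4$ and $\|\tilde V_i-V_i\|_2\le m/4$ for all $i,j$.

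\textbf{Step 2 (perturbation of inner products).} Write
\[
\langle \tilde U_j,\tilde V_i\rangle-\langle U_j,V_i\rangle=\langle \tilde U_j-U_j,\tilde V_i\rangle+\langle U_j,\tilde V_i-V_i\rangle .
\]
By Cauchy--Schwarz and the fact that all four vectors are unit norm, each term has absolute value at most $m/4$. The total change is therefore at most $m/2$.

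\textbf{Step 3 (conclusion).} If $A_{ji}=1$, then $\langle \tilde U_j,\tilde V_i\rangle\ge m-m/2=m/2$. If $A_{ji}=0$, then $\langle \tilde U_j,\tilde V_i\rangle\le -m+m/2=-m/2$. Hence the quantized vectors form a margin-$m/2$ embedding. There is no real obstacle here. The only care needed is the choice $\delta=m/4$, which must simultaneously match the stated codebook size (via the volume bound) and keep the two perturbation terms in Step 2 summing to at most $m/2$.
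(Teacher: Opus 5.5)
Your proposal is correct and follows the paper's proof essentially line for line. Both take an $m/4$-net of $\mathbb{S}^{d-1}$ of size at most $(1+8/m)^d$ from the standard volume bound, write the error as $\langle \tilde U_j-U_j,\tilde V_i\rangle+\langle U_j,\tilde V_i-V_i\rangle$, and apply Cauchy--Schwarz with $\|\tilde V_i\|=\|U_j\|=1$ to bound the total change by $m/2$.
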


\begin{proof}
    There exists a $\frac{m}{4}$-net $\mathcal{C}$ of size $|\mathcal{C}| \leq \left(1 + \frac{2}{\frac{m}{4}}\right)^d = \left(1 + \frac{8}{m}\right)^d$ \cite[Corollary 4.2.11] {Vershynin2026}. For any $j \in [N]$ and $i \in [n]$, we have

    $$|\langle \tilde{U}_j, \tilde{V}_i\rangle - \langle U_j, V_i\rangle| =  |\langle \tilde{U}_j - U_j, \tilde{V}_i\rangle + \langle U_j,  \tilde{V}_i - V_i\rangle|$$

    Since $\|\tilde{V}_i\| = \|U_j\| = 1$, we know that $|\langle \tilde{U}_j, \tilde{V}_i\rangle - \langle U_j, V_i\rangle| \leq \frac{m}{2}$ because $\|\tilde{U}_j - U_j\|, \|\tilde{V}_i - V_i\| \leq \frac{m}{4}$. Therefore, the vectors $\{\tilde{U}_j\}_{j = 1}^N$, $\{\tilde{V}_i\}_{i = 1}^n$ form a margin-$\frac{m}{2}$ embedding of $A.$ 
\end{proof}

%% file: FreeEmbedding.tex
\section{InfoNCE and Sigmoid in The Free Embedding Model}
\label{sec:freeembedding}
\subsection{InfoNCE loss}
Recall from Proposition~\ref{prop:marginsigmoid} that the global minimizers of the following sigmoid loss
$$
\sigloss(\{U_j\}_{j = 1}^N, \{V_i\}_{i = 1}^n; t, b)\coloneqq \sum_{j\in [N], i \in [n]} \log\Big(1 + \exp\big((-1)^{A_{ji}}(t\langle U_j, V_i\rangle -b)\big)\Big).
$$
are exactly the embeddings of $A$ as in Definition~\ref{def:maindefinition}.

The InfoNCE, on the other hand, defined as
\begin{equation}
\label{eq:definfonce}
\infonceloss(\{U_j\}_{j = 1}^N, \{V_i\}_{i = 1}^n; t)\coloneqq - \sum_{(j,i) \in [N]\times [n]\; : \; A_{ji} = 1}
\log\Big(
\frac{\exp(t\langle U_j, V_i\rangle)}{\sum_\ell \exp(t\langle U_j, V_\ell\rangle)}
\Big).
\end{equation}
When $A = I,$ the recent work \cite{bangachev2025globalminimizerssigmoidcontrastive} shows that the global minimizers of InfoNCE have a similar geometry to Definition~\ref{def:maindefinition}, except that there is a different relative bias $\tau_j$ corresponding to each query. Again, they show that the loss converges to 0 as $T\longrightarrow +\infty.$

We note, however, that when $A$ has rows of sparsity more than 1, the loss cannot converge to 0. In fact, for some embeddings of $A$ satisfying the conditions in Definition~\ref{def:maindefinition}, it is possible that $\infonceloss$ diverges as $T\longrightarrow+\infty.$

\begin{proposition}
    Suppose that $A\in \{0,1\}^{N\times n}$ is such that $A_{ji_1} = A_{ji_2} = 1$ and $\{U_j\}_{j = 1}^N, \{V_i\}_{i = 1}^n$ is an embedding of $A$ such that $\langle U_j,V_{i_1}\rangle \neq  \langle U_j,V_{i_2}\rangle.$ Then, 
    $\lim_{T\longrightarrow + \infty}
    \infonceloss(\{U_j\}_{j = 1}^N, \{V_i\}_{i = 1}^n; T) = +\infty.
    $
\end{proposition}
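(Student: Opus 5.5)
The plan is to isolate the single term of $\infonceloss$ indexed by the pair $(j,i^\ast)$, where $i^\ast\in\{i_1,i_2\}$ is the relevant document with the smaller inner product with $U_j$, and to show that this term alone tends to $+\infty$. All other terms are nonnegative, so that is enough. Each summand has the form $-\log$ of a softmax probability in $(0,1]$, hence is $\ge 0$.

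First I would fix notation. Let $a\coloneqq \langle U_j, V_{i_1}\rangle$ and $b\coloneqq \langle U_j, V_{i_2}\rangle$. Without loss of generality $a<b$, relabeling $i_1,i_2$ if needed. The term corresponding to $(j,i_1)$ is
$$
-\log\Big(\frac{\exp(Ta)}{\sum_\ell \exp(T\langle U_j,V_\ell\rangle)}\Big) = \log\Big(\sum_\ell \exp\big(T(\langle U_j,V_\ell\rangle - a)\big)\Big).
$$
Since $A_{ji_1}=1$, this term appears in the sum defining $\infonceloss$ in \eqref{eq:definfonce}.

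Next I would lower bound the sum inside the logarithm by keeping only the $\ell=i_2$ summand, all summands being positive. This gives that the term is at least $\log\exp(T(b-a)) = T(b-a)$. Because $b-a>0$, this tends to $+\infty$ as $T\to+\infty$.

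Finally, every other summand of $\infonceloss$ is nonnegative, since each softmax ratio is at most $1$. Hence $\infonceloss(\cdot;T)\ge T(b-a)\to+\infty$.

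There is no real obstacle here. The only points needing care are the nonnegativity of every summand and the remark that the embedding hypothesis (Definition~\ref{def:maindefinition}) is not actually used beyond $A_{ji_1}=A_{ji_2}=1$. The divergence comes purely from two positives sharing one softmax denominator.
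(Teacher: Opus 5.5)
Your proof is correct and follows essentially the paper's route: every summand is nonnegative, and the divergence comes from the two relevant documents $i_1,i_2$ sharing the softmax denominator for query $j$. The paper keeps both of their terms and both exponentials in the denominator to obtain $\log\bigl(1+\exp(T|\langle U_j,V_{i_1}\rangle-\langle U_j,V_{i_2}\rangle|)\bigr)$. You keep only the smaller-score term and a single denominator summand to get $T(b-a)$, a slightly streamlined version of the same argument.
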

\begin{proof} First note that each summand in \eqref{eq:definfonce} is non-negative. Thus,
\begin{align*}
    & \infonceloss(\{U_j\}_{j = 1}^N, \{V_i\}_{i = 1}^n; T)\\
    & \ge \log\Big(
\frac{\sum_\ell \exp(T\langle U_j, V_\ell\rangle)}{\exp(T\langle U_j, V_{i_1}\rangle)}
\Big)+
\log\Big(
\frac{\sum_\ell \exp(T\langle U_j, V_\ell\rangle)}{\exp(T\langle U_j, V_{i_2}\rangle)}
\Big)\\
& \ge \log
\frac{\exp(T\langle U_j, V_{i_1}\rangle) + \exp(T\langle U_j, V_{i_2} \rangle)}{\exp(T\langle U_j, V_{i_1}\rangle)} + 
\log
\frac{\exp(T\langle U_j, V_{i_1}\rangle) + \exp(T\langle U_j, V_{i_2} \rangle)}{\exp(T\langle U_j, V_{i_2}\rangle)}\\
& \ge 
\log
\frac{\exp(T\langle U_j, V_{i_1}\rangle) + \exp(T\langle U_j, V_{i_2} \rangle)}{\min(\exp(T\langle U_j, V_{i_1}\rangle),\exp(T\langle U_j, V_{i_2}\rangle)) }\\
& = 
\log\bigg(1 + \exp\Big(T |\langle U_j, V_{i_1}\rangle - 
\langle U_j, V_{i_2}\rangle|\Big)\bigg)
\end{align*}
Taking $T\longrightarrow +\infty$ shows that
$\infonceloss(\{U_j\}_{j = 1}^N, \{V_i\}_{i = 1}^n; T)\longrightarrow + \infty.$
\end{proof}

\subsection{Experiment}
We choose to work with $\ksparsen$ for $k \in \{2,3\}$ and 
$n \in \{20,40,60,\ldots, 240\}$ when $k = 2, $
$n \in \{20,40,60,\ldots, 120\}$ when $k = 3.$

We set the inverse temperature to be trainable for both InfoNCE and sigmoid, as common in practice, e.g. \cite{zhai23siglip}, and we fix the bias to $b=0$ for sigmoid loss. We initialize the embeddings $\{U\}_{j = 1}^N,\{V\}_{i = 1}^n$ as random i.i.d. points on $\mathbb{S}^{d-1}.$ Then, we run the Adam optimizer with cosine annealing learning rate over $\sigloss, \infonceloss$ for 100000 steps to obtain the embeddings. We embed in each of the dimensions $d\in \{5,6,7,\ldots, 30\}.$ We start from 5 since this is the minimal dimension for which an embedding of $\ksparsen$ with $k = 2$ exists, see Theorem~\ref{thm:signrankforsparse}.

We ran our experiments on a single H200 GPU. The runs for each triplet $n,d,k$ took up to one hour (for our largest experiment, $n = 120, k = 3, d = 30$).

Then, we plot in Figure~\ref{fig:min-nonzero-margin} the smallest dimension for which a positive margin was achieved when $k = 2$ and $n$ varies. This smallest dimension is an effective sign rank for the respective loss.

In Figure~\ref{fig:max-margin-vs-logn} we choose three dimensions and plot the respective margins obtained when $k = 2$ for the varying $n.$ We reproduce the same results with $k = 3$ and obtain qualitatively similar behavior:

\begin{figure}[htb!]
    \centering
    \begin{minipage}[t]{0.47\textwidth}
        \centering
        \includegraphics[width=\linewidth]{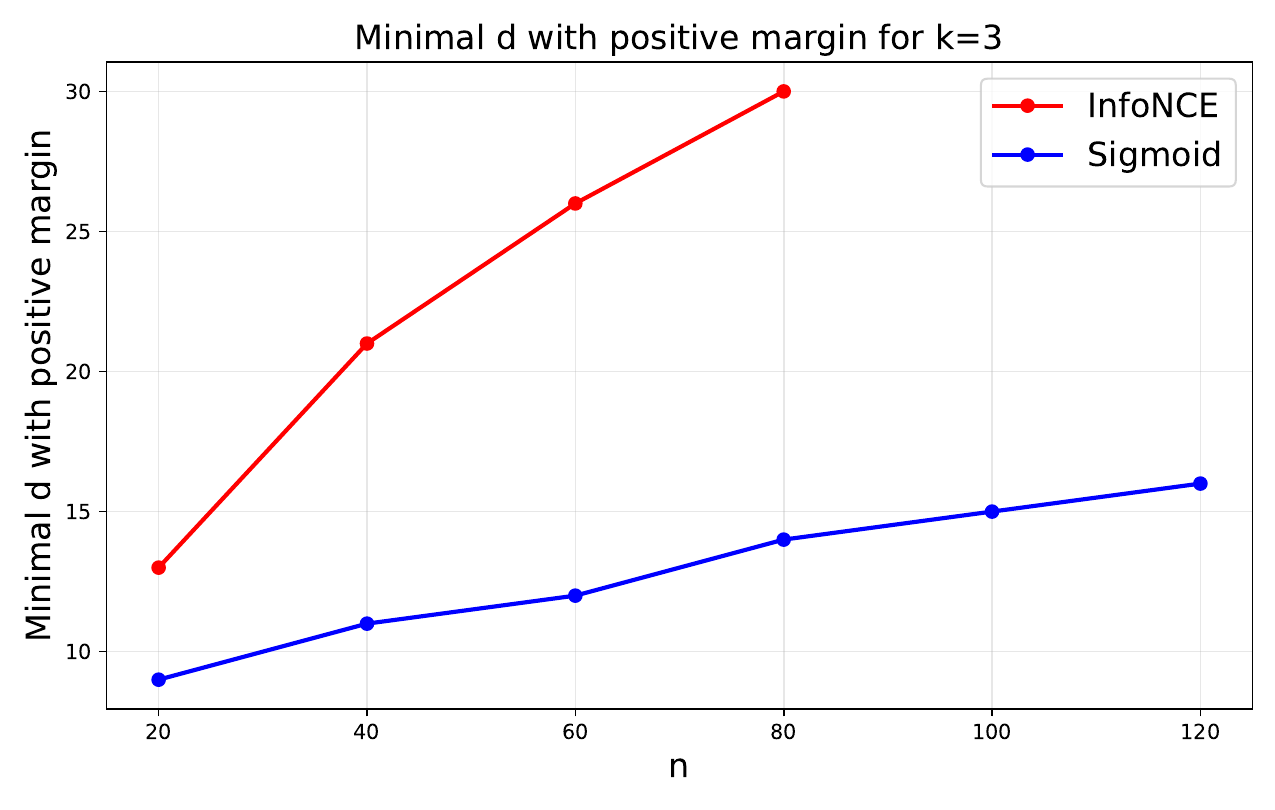}
        \caption{\small{Minimal dimension needed to achieve a non-zero margin after 100000 training steps for the InfoNCE and sigmoid losses. The sigmoid succeeds in much smaller dimensions.}}
        \label{fig:min-nonzero-margin3}
    \end{minipage}
    \hfill
    \begin{minipage}[t]{0.51\textwidth}
        \centering
        \includegraphics[width=.95\linewidth]{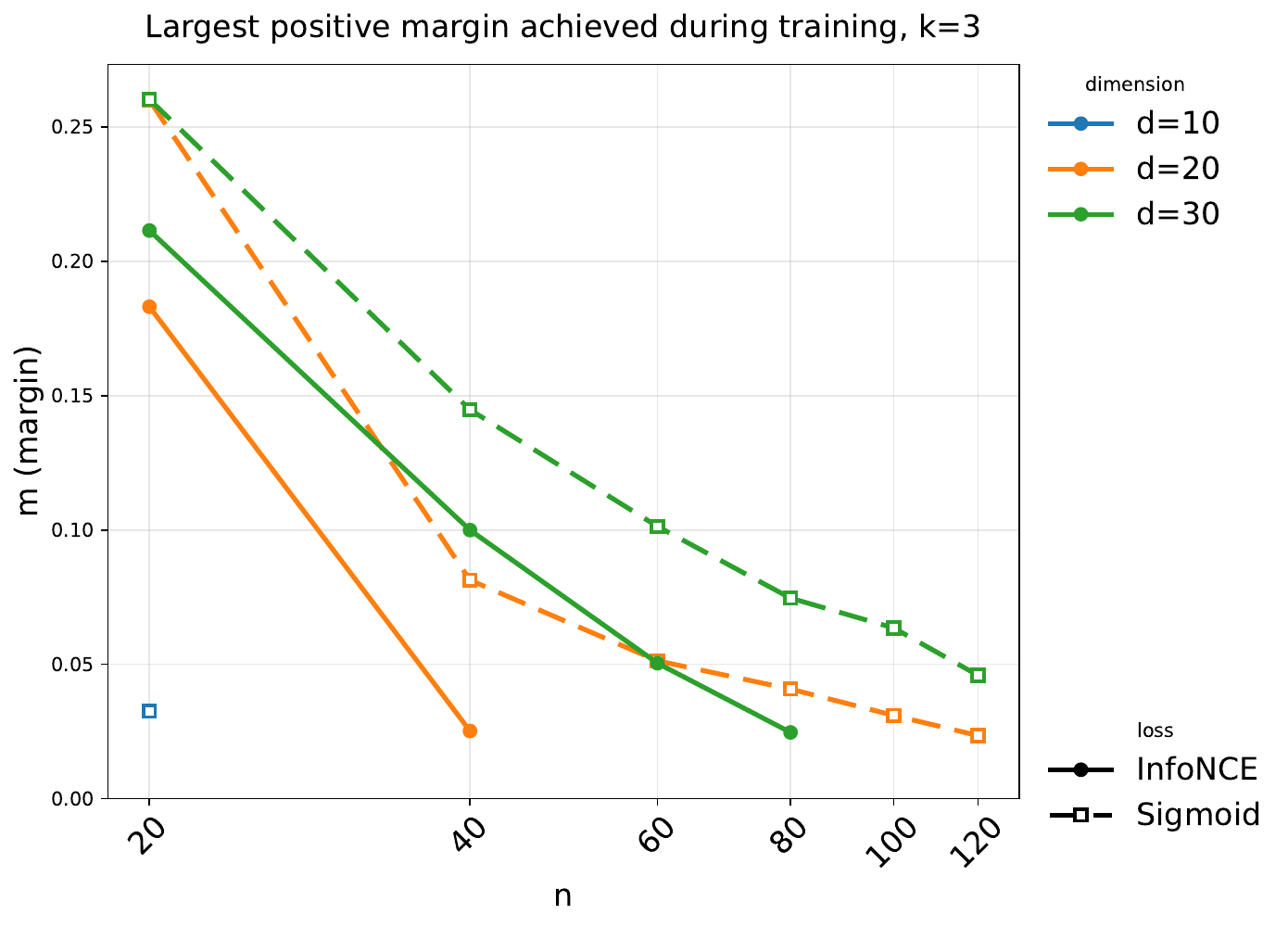}
        \caption{\small{Maximal margin achieved during 100000 training steps for $d \in \{10,20,30\}$, using the InfoNCE and sigmoid losses showing clear advantage of the sigmoid loss.}}
        \label{fig:max-margin-vs-logn3}
    \end{minipage}
\end{figure}

%% file: LLMs.tex
\section{LLM Usage}
\label{sec:llmusage}
We used LLMs for completing some proofs, finding related works, generating code, and editing. We wrote the initial manuscript without any LLM usage and only then used LLMs for editing. We outline the theorem-proving process in more detail, as it could be valuable to the community.

\textbf{Proofs:} We used ChatGPT Pro 5.4 - Extended for proving some theorems. Here is the breakdown of the process, with prompts. We format the prompts in LaTeX and clean up typos, but otherwise there are no changes.     
\begin{enumerate}
    \item Corollary~\ref{cor:sqrtkfromRIP}: We wanted to see if we could find a matching construction for the $O(k\log(n/k)/\log(k))$ lower bound on the dimension necessary for $\Theta(k^{-1/2})$ margin from \cite{weller2025theoretical}. We prompted ChatGPT with the following prompt to obtain Corollary~\ref{cor:sqrtkfromRIP}:
    \medskip
    
    \fbox{
    \begin{minipage}{33em}
    Hi! I want a construction for the following problem. Given is the following bipartite graph parametrized by integers $n$ and $k.$ On the left side, there are n vertices. On the right side, there are $N = \binom{n}{k}$ corresponding to the subsets of $[n]$ of size $k$. For each subset of $k$ vertices on the left side, there is a unique vertex on the right side adjacent exactly to them. Now, I want to produce n unit vectors in dimension $d$ $V_1, V_2, \ldots  , V_n$ for the vertices on the left and $N$ unit vectors in dimension $d$ $U_1, U_2, \ldots  U_N$ for the vertices on the right such that $\langle V_i, U_j\rangle > \Omega(1/\sqrt{k})$ if and only if $(i,j)$ is an edge and $\langle V_i, U_j\rangle <- \Omega(1/\sqrt{k})$ otherwise. Give me such a construction in dimension $d= O(k\log n).$
    \end{minipage}
    }
    
    \medskip
    
    \item Theorem~\ref{thm:main}: We knew how to prove the weaker result for $d = O(\maxmargin(+\infty, A)^{-2}\log(nN))$ using the Johnson-Lindenstrauss lemma. As stated, however, this weaker result only implies dimension $O(k^2\log n)$ for margin $\Theta(k^{-1/2})$-embeddings of $\ksparsen.$ We already knew that it is possible to improve this to $d= O(k\log n)$ via Corollary~\ref{cor:sqrtkfromRIP}. We conjectured that Theorem~\ref{thm:mainlowerbound} should be true. We spent some time devising a proof strategy and submitted the following sequence of prompts to ChatGPT. The first one is to check if it can come up with the result we already had:
        
    \medskip
    
\fbox{
    \begin{minipage}{33em}
    You are an expert researcher at graph theory, random matrix theory, compressed sensing! Consider the following problem. 
    
   Given is the following bipartite graph parametrized by integers $n$ and $k.$ On the left side, there are n vertices. On the right side, there are $N = \binom{n}{k}$ corresponding to the subsets of $[n]$ of size $k$. For each subset of $k$ vertices on the left side, there is a unique vertex on the right side adjacent exactly to them. Now, I want to produce n unit vectors $V_1, V_2, \ldots  , V_n$ for the vertices on the left and $N$ unit vectors  $U_1, U_2, \ldots  U_N$ for the vertices on the right such that $\langle V_i, U_j\rangle > m$ if and only if $(i,j)$ is an edge and $\langle V_i, U_j\rangle <- m$ otherwise for some $m>0.$
    
    Prove that there exist $n$ unit vectors in dimension $d$ $V'_1, V'_2, \ldots  , V'_n$ for the vertices on the left and $N$ unit vectors in dimension $d$ $U'_1, U'_2, \ldots  U'_N$ for the vertices on the right such that $\langle V_i, U_j\rangle > m/2$ if and only if $(i,j)$ is an edge and $\langle V_i, U_j\rangle <- m/2$ otherwise, where $d = O(\log(Nn)/m^2).$
    \end{minipage}
    }
    
    \medskip
    
    After it succeeded with this task, we prompted for a variant of Theorem~\ref{thm:main}. 
    
    \medskip
    
    \fbox{
    \begin{minipage}{33em}
    can you attempt to now make a construction in $O(m^{-2}\log n )$ even if $n \ll N$. Perhaps you need to consider the convex program for fixed $V_1, V_2, \ldots , V_n$ that guarantees the existence of $U_1, U_2, .., U_N$ with margin $m.$ You know that this convex program for the given $V_1, V_2, \ldots , V_n$ has a solution. You can reduce the dimension of $V_1, V_2, \ldots , V_n$ via JL or some other projection to $d = O(m^{-2}\log n)$ and then consider the convex program again. Using duals may be useful.
    \end{minipage}
    }
    
    \medskip
    
    This gave us essentially the proof of Theorem~\ref{thm:main} with some minor bugs which we fixed.
    \item Theorem~\ref{thm:mainlowerbound}: We asked ChatGPT to prove a bound of $d = \Omega(k\log n)$ for margin $\Theta(k^{-1/2})$ in the $\ksparsen$ case. We knew that $d = O(k\log n)$ is sufficient in light of our main Theorem~\ref{thm:main}. This time, we generated the prompt with ChatGPT. The final prompt was:
        
    \medskip
    
    \fbox{
    \begin{minipage}{33em}
Problem

Let $N = \binom{n}{k}.$ For each subset $S \subset [n]$ with $|S|=k,$ assign a unit vector $U_S \in \mathbb{S}^{d-1},$ and for each $j \in [n],$ assign a unit vector $V_j \in \mathbb{S}^{d-1},$ such that for some margin $m > 0$:
\begin{align*}
& \langle U_S, V_j \rangle \ge m \quad \text{if } j \in S,
& \langle U_S, V_j \rangle \le -m \quad \text{if } j \notin S.
\end{align*}
Goal

Upper bound $N = \binom{n}{k}$ in terms of $d$ and $m.$

Your goal is to get matching upper and lower bounds for N Currently we have up to multiplicative constants that $d \ge k\log(n)/\log(1/m).$ We have $d$ is at least $k \log n /\log(k)$ and at most $k \log n.$ So there is a gap. Try to fill it in.
    \end{minipage}
    }
        
    \medskip
    
\item Theorem~\ref{thm:main_construction}: Next, we wanted constructions for a large margin when $d = o(k\log(n/k)).$ Interestingly, prompts like the following one (preceded by the conversation for Theorem~\ref{thm:main}) failed:
    
    \medskip
    
    \fbox{
    \begin{minipage}{33em}
        Suppose that I want to achieve a margin $=1/n$ in dimension smaller than $o(k \log n).$ How can I do this?
    \end{minipage}
    }  

    \medskip
    
    ChatGPT would always either give the construction in dimension $2k+1$ which has margin $n^{-\Omega(k)}$ \cite{AlonFranklRodl1985} and Corollary~\ref{cor:marginalgebraic}; or the construction where $V_1, V_2, \ldots, V_n$ are i.i.d. on the sphere which requires dimension $\Omega(k^2\log n)$ \cite{WangEtAl2026}.

    What made ChatGPT succeed is giving it a much more concrete (even if imprecise) goal:
        
    \medskip
    
    \fbox{
    \begin{minipage}{33em}
        Can you try to obtain margin $1/n$ in dimension $k\sqrt{\log n}.$
    \end{minipage}
    }
        
    \medskip

    It produced the current Theorem~\ref{thm:main_construction} in the special case for $d = \Theta(k^2)$ and margin $1/n,$ acknowledging that it only works when $k = o(\sqrt{\log n}).$ Then, we asked ChatGPT to generalize this construction which is a rather mechanical change. 
    
    Interestingly, similar versions of the prompt which are, however, too strong and ask for a potentially incorrect result failed, e.g:
        
    \medskip
    
    \fbox{
    \begin{minipage}{33em}
        Try to find a construction in dimension $O(k)$ with margin $1/n.$
    \end{minipage}
    }
\end{enumerate}